\newtheorem{prop}{Proposition}
\newtheorem{clm}{Claim}
\newtheorem{cor}{Corollary}
\newtheorem{lem}{Lemma}
\newtheorem{thm}{Theorem}
\newtheorem{rem}{Remark}
\newtheorem{ass}{Assumption}
\newcommand{\E}{\mathbb{E}} 
\renewcommand{\P}{\mathbb{P}} 
\newcommand{\R}{\mathbb{R}}
\newcommand{\N}{\mathbb{N}}
\newcommand{\trans}{\mathsf{T}}
\newcommand{\ind}{\mathbbm{1}}
\newcommand{\epsmis}{\varepsilon_{\text{mis}}}
\newcommand{\epstol}{\varepsilon_{\text{tol}}}
\newcommand{\epsrnd}{\varepsilon_{\text{rnd}}}
\newcommand{\epseff}{\varepsilon_{\text{eff}}}
\renewcommand{\S}{\mathcal{S}}
\newcommand{\A}{\mathcal{A}}
\DeclareMathOperator*{\argmax}{arg\,max}
\DeclarePairedDelimiter{\ceil}{\lceil}{\rceil}
\renewcommand{\@algocf@capt@plain}{above}
\begin{document}

\title{Improved Algorithms for Misspecified Linear Markov Decision Processes}

\author[1]{Daniel Vial\thanks{Corresponding author. Email: \texttt{vial2@illinois.edu}.}}
\author[2]{Advait Parulekar}
\author[2]{Sanjay Shakkottai}
\author[1]{R.\ Srikant}
\affil[1]{University of Illinois at Urbana-Champaign}
\affil[2]{University of Texas at Austin}

\maketitle

\begin{abstract}
For the misspecified linear Markov decision process (MLMDP) model of \cite{jin2020provably}, we propose an algorithm with three desirable properties. \ref{itRegret} Its regret after $K$ episodes scales as $K \max \{ \epsmis , \epstol \}$, where $\epsmis$ is the degree of misspecification and $\epstol$ is a user-specified error tolerance. \ref{itComp} Its space and per-episode time complexities are bounded as $K \rightarrow \infty$. \ref{itPar} It does not require $\epsmis$ as input. To our knowledge, this is the first algorithm satisfying all three properties. For concrete choices of $\epstol$, we also improve existing regret bounds (up to log factors) while achieving either \ref{itComp} or \ref{itPar} (existing algorithms satisfy neither). At a high level, our algorithm generalizes (to MLMDPs) and refines the \texttt{Sup-Lin-UCB} algorithm, which \cite{takemura2021parameter} recently showed satisfies \ref{itPar} for contextual bandits. We also provide an intuitive interpretation of their result, which informs the design of our algorithm.
\end{abstract}

\section{Introduction} \label{secIntro}

Due to the large state spaces of modern reinforcement learning applications, practical algorithms must generalize across states. To understand generalization on a theoretical level, recent work has studied linear Markov decision processes (LMDPs), among other models (see Section \ref{secRelated}). The LMDP model assumes the  next-state distribution and reward are linear in known $d$-dimensional features, which enables tractable generalization when $d$ is small. Of course, this assumption most likely fails in practice, which motivates the misspecified LMDP (MLMDP) model. Here linearity holds up to some misspecification error $\epsmis$ in total variation and absolute value for the next-state distribution and reward, respectively (see Assumption \ref{assLin}).

In this work, we consider episodic finite-horizon MLMDPs, i.e., for each of $K$ episodes, the algorithm interacts with the MLMDP for $H$ steps. We assume the action space $\A$ is finite, though the state space $\S$ may be infinite. We measure performance in terms of regret $R(K)$, i.e., the additive loss in expected cumulative reward compared to the optimal policy (see \eqref{eqRegret}). We seek an algorithm with three basic properties:
\begin{enumerate}[leftmargin=*,align=left,itemsep=0pt,topsep=0pt,label=(P\arabic*),wide]
\item \label{itRegret} {\bf Asymptotically non-trivial regret:} Given a user-specified error tolerance $\epstol > 0$ and a failure probability $\delta > 0$, the algorithm should ensure that with probability at least $1-\delta$,
\begin{equation} 
\limsup_{K \rightarrow \infty} \frac{R(K)}{K} \leq \text{poly} \left( d , H , \log \tfrac{1}{\delta \epstol} \right) \max \{ \epsmis , \epstol \} . 
\end{equation}
Hence, in terms of $K$, $\epsmis$, and $\epstol$, we desire regret that scales as $K \max \{ \epsmis , \epstol \}$. The $K \epsmis$ term is unavoidable due to misspecification, while the $K \epstol$ term can be controlled by the user.

\item \label{itComp} {\bf Bounded complexity:} The space and per-episode time complexities should both be independent of $K$, so that the algorithm can be implemented for arbitrarily large $K$.

\item \label{itPar} {\bf Parameter free:} The algorithm should not require knowledge of the degree of misspecification $\epsmis$, which is unavailable in practice.
\end{enumerate}

\begin{table*}[t]
\begin{center}
\caption{Summary of MLMDP algorithms, where $\epsmis$ is the degree of misspecification and $\epstol$ is a user-chosen input. Regret bounds hide constants and log terms, except those that yield super-linear in $K$ terms. See Section \ref{secRelated} for other existing algorithms that require more restrictive assumptions or lack explicit implementation.} \label{tabSummary}
\centering
\begin{tabular}{lcccl}
\hline 
\rule[-1ex]{0pt}{2.5ex} {\bf Algorithm} & {\bf \ref{itRegret}} & {\bf \ref{itComp}} & {\bf \ref{itPar}}  & {\bf Regret bound}  \\ 
\hline 
\rule[-1ex]{0pt}{2.5ex} Ours, $\epstol \in (0,1)$$^*$ & Yes & Yes & Yes & $\sqrt{ d^3 H^4 \min \{ (\frac{d}{\epstol} )^2 , K \} } + \sqrt{H^3 K} +  \sqrt{d} H^2 K \max \{ \epsmis , \epstol \}$ \\ 
\hline 
\rule[-1ex]{0pt}{2.5ex} Ours, $\epstol = d / \sqrt{K}$ & No & No & Yes & $\sqrt{ d^3 H^4 K } + \sqrt{d} H^2  K \epsmis \sqrt{\log K}$ \\ 
\hline 
\rule[-1ex]{0pt}{2.5ex} Ours, $\epstol = \epsmis$ & Yes & Yes & No & $\sqrt{ d^3 H^4\min \{ (\frac{d}{\epstol} )^2 , K \} } + \sqrt{H^3 K} + \sqrt{d} H^2 K \epsmis$ \\ 
\hline 
\rule[-1ex]{0pt}{2.5ex} \cite{jin2020provably}  & No & No & No & $\sqrt{ d^3 H^4 K } + d H^2 K \epsmis \sqrt{\log K}$ \\ 
\hline 
\rule[-1ex]{0pt}{2.5ex} \cite{zanette2020frequentist}$^\dag$ & Yes & No & No & $\sqrt{ d^4 H^5 K } + d H^2 K \epsmis$ \\
\hline
\end{tabular} 
\end{center} 
{\quad \footnotesize $^*$To satisfy \ref{itRegret} and avoid a super-linear regret bound, this row assumes we choose $\epstol$ independent of $K$.} 

{\quad \footnotesize $^\dag$This paper reports regret with $\tilde{O}$ notation but (as far as we can tell) it does not hide super-linear terms.}

\end{table*}

These properties seem benign, but to the best of our knowledge, \textit{no existing algorithm satisfies all three}. We note in particular that \ref{itRegret} often fails because regret guarantees include $K \epsmis \text{polylog} ( K )$ terms. We would argue such super-linear (in $K$) bounds are asymptotically trivial, since the regret of any policy is linear in $K$ (for bounded rewards). However, even if $K$ is small and one can tolerate failure of the asymptotically-motivated \ref{itRegret} and \ref{itComp}, there are (essentially) no algorithms with $K \epsmis \text{polylog} ( K )$-type regret that satisfy \ref{itPar}. (An exception is model selection, which violates a stronger version of \ref{itPar}; see Section \ref{secRelated}.)

The situation is better if we restrict to misspecified linear contextual bandits (MLCBs), which are the special case $H=1$ (there, states are called contexts; we use the terms interchangeably). In particular, \cite{takemura2021parameter} showed that a \texttt{Sup-Lin-UCB} \citep{auer2002using,chu2011contextual} variant that satisfies \ref{itPar} has $K \epsmis \text{polylog}(K)$ regret. This result is important because the simpler \texttt{Lin-UCB} \citep{abbasi2011improved} can suffer $\Omega(K)$ regret when $\epsmis$ is unknown \citep{lattimore2020learning}. In light of this, and because \texttt{Sup-Lin-UCB} was originally motivated by technical issues \textit{seemingly unrelated to misspecification}, \cite{takemura2021parameter}'s result is also rather surprising. However, \texttt{Sup-Lin-UCB} is a complicated algorithm, so it was not intuitively clear (at least to us) why it should adapt to the misspecified setting better than \texttt{Lin-UCB}. As will be seen, one of our contributions is to provide a new interpretation of \texttt{Sup-Lin-UCB} that intuitively explains this. Furthermore, our interpretation is a key building block that leads to improved results for MLMDPs.

{\bf Contributions:} Our contributions are as follows.
\begin{itemize}[leftmargin=*,align=left,itemsep=0pt,topsep=0pt]

\item {\bf An intuitive \texttt{Sup-Lin-UCB} variant:} In Section \ref{secAlgCB}, we show that \texttt{Sup-Lin-UCB}'s success for MLCBs is not an accident; rather, it can be \textit{derived} from the perspective of misspecification. More precisely, we first propose an MLCB algorithm called \texttt{EXPL3}, which \underline{expl}icitly decides to \underline{expl}ore or \underline{expl}oit. \texttt{EXPL3} is simple and intuitive but requires $\epsmis$ as input to perform well. We overcome this requirement by constructing an intuitive ensemble of \texttt{EXPL3} algorithms, one for each possible $\epsmis$ value (see Proposition \ref{propEnsemble}). The ensemble closely resembles \texttt{Sup-Lin-UCB} and sheds light onto \cite{takemura2021parameter}'s result.

\item {\bf The \texttt{Sup-LSVI-UCB} algorithm:} In Section \ref{secAlgMDP}, we leverage the insights developed for MLCBs to design an MLMDP algorithm called \texttt{Sup-LSVI-UCB}.\footnote{``LSVI'' stands for ``least-squares value iteration'' and ``UCB'' stands for ``upper confidence bound.''} At a high level, \texttt{Sup-LSVI-UCB} combines our \texttt{Sup-Lin-UCB} variant with a backward induction procedure, analogous to how \texttt{LSVI-UCB} \citep{jin2020provably} was derived from \texttt{Lin-UCB}. However, because \texttt{Sup-Lin-UCB} is more complicated than \texttt{Lin-UCB}, we encounter new technical issues when generalizing from MLCBs to MLMDPs, which requires some new algorithmic ideas; see Remarks \ref{remRounding} and \ref{remOffPolicy}.

\item {\bf Improved guarantees:} In Section \ref{secResults}, we show that when the input $\epstol$ is chosen independent of $K$, \texttt{Sup-LSVI-UCB} is the first algorithm to satisfy \ref{itRegret}, \ref{itComp}, and \ref{itPar} (see Theorem \ref{thmMain}). If instead $\epstol = d / \sqrt{K}$, \texttt{Sup-LSVI-UCB} improves existing regret bounds (up to log factors), while simultaneously removing the requirement that $\epsmis$ is known (Corollary \ref{corUnknown}). Finally, if $\epsmis$ is known, we can set $\epstol = \epsmis$ to improve existing bounds while simultaneously avoiding unbounded complexity (Corollary \ref{corKnown}). See Table \ref{tabSummary}.
\end{itemize}

Finally, though somewhat orthogonal to our main results, we also revisit \texttt{Lin-UCB} for MLCBs. While it is known that this algorithm can be modified to obtain $K \epsmis \text{polylog}(K)$ regret when $\epsmis$ is known \citep{lattimore2020learning,jin2020provably}, we are not aware of any bounds that satisfy \ref{itRegret}. Hence, we propose a new modification that satisfies \ref{itRegret} and sharpens existing bounds when $K \geq \epsmis^{-2}$. See Section \ref{secLinUCB}.

\subsection{Related work} \label{secRelated}

{\bf MLMDP:} \cite{jin2020provably} proposed the aforementioned \texttt{LSVI-UCB}, which fails to satisfy all of \ref{itRegret}, \ref{itComp}, and \ref{itPar}. \cite{zanette2020frequentist} analyzed the Thompson sampling-based randomized LSVI \citep{osband2019deep}. Their regret bound uses $\tilde{O}(\cdot)$ notation, so \ref{itRegret} is a bit ambiguous, but we believe it holds. However, \ref{itComp} and \ref{itPar} again fail.

{\bf Regarding sample complexity:} Several papers (including \cite{jin2020provably}) provide sample complexity bounds, i.e., number of samples to learn an approximately optimal policy, though to our knowledge, all violate \ref{itPar}. On the other hand, such bounds yield algorithms that satisfy \ref{itComp} (and possibly \ref{itRegret}, though not for \cite{jin2020provably}): one can just fix the approximately optimal policy after finding it. However, this requires well-behaved initial states, e.g., fixed across episodes as in \cite{jin2020provably}. In contrast, we allow for arbitrary initial states, as in the regret analyses from \cite{jin2020provably,zanette2020frequentist}.

{\bf Low inherent Bellman error:} \cite{zanette2020learning} proposed the low inherent Bellman error (LIBE) model, which generalizes MLMDPs while retaining a linear flavor. For this model, \cite{zanette2020learning} improved the regret of \cite{jin2020provably}, while \cite{hu2021near} studied multi-task learning. However, these algorithms are defined in terms of optimization problems but no solutions are provided, so the algorithms lack explicit implementation. \cite{zanette2020provably} proved a sample complexity bound, which yields a regret minimization algorithm that satisfies \ref{itComp} but requires i.i.d.\ initial states. All violate \ref{itPar}.

{\bf Linear mixture:} The linear mixture model (LMM) assumes the transition kernel is a linear combination of $d$ known measures (see, e.g., \cite{jia2020model,modi2020sample,zhang2021variance,zhou2021nearly,zhou2021provably}), which is distinct from our Assumption \ref{assLin}. To our knowledge, the only regret bound for misspecified LMMs is from \cite{ayoub2020model}; the algorithm satisfies \ref{itRegret} but violates \ref{itComp} and \ref{itPar}.

{\bf Nonlinear generalizations:} Some nonlinear generalizations of LMDPs have been proposed, such as the case where the state-action value function belongs to a class of bounded eluder dimension \citep{russo2013eluder} or can be represented by a kernel function or neural network. While such generalization is important, these works (see, e.g., \cite{chowdhury2020no,ishfaq2021randomized,kong2021online,wang2020reinforcement,wang2020optimism,yang2020provably,yang2020function}) fail to improve over \cite{jin2020provably,zanette2020frequentist} in terms of \ref{itRegret}, \ref{itComp}, or \ref{itPar} (or regret).

{\bf Model selection:} To overcome the fact that existing MLMDP algorithms require $\epsmis$ as input, one could alternatively use a model selection algorithm (see, e.g., \cite{cutkosky2021dynamic,pacchiano2020regret,pacchiano2020model}). In our context, these initialize $M$ base algorithms (e.g., \texttt{LSVI-UCB}) with respective inputs $\epsmis(1), \ldots, \epsmis(M)$. Then at each episode, the bases compute policies and a master algorithm uses past data to choose one of the policies. To our knowledge, the only explicit results use \cite{zanette2020learning} as the base, which lacks implementation, and while the resulting master achieves \ref{itPar}, it violates \ref{itRegret} and \ref{itComp} (see Appendix D.4 in \cite{cutkosky2021dynamic} and Section 6.4 in \cite{pacchiano2020regret}). Another downside is that the master requires a regret bound for each base, so while \ref{itPar} holds, the stronger ``parameter free" property that neither $\epsmis$ nor regret bounds are known (which we satisfy) is violated.

{\bf MLCBs:} In the special case of MLCBs ($H=1$), \cite{gopalan2016low} showed (unmodified) \texttt{Lin-UCB} can achieve sublinear regret when $\epsmis$ is very small. \cite{foster2020beyond} proved regret bounds more generally but \ref{itPar} fails. \cite{foster2021adapting} provided expected regret bounds for an algorithm that satisfies \ref{itPar}. As mentioned above, \cite{takemura2021parameter}'s algorithm satisfies \ref{itPar}, and their bounds hold with high probability. In the noncontextual case, \cite{lattimore2020learning} proposed an algorithm that achieves \ref{itRegret} or \ref{itPar}, but not both. \cite{ghosh2017misspecified} proved regret bounds that may be polynomial in $|\A|$.

{\bf Other related work:} \cite{dong2019provably} considered a misspecified state aggregation model; their algorithm satisfies \ref{itRegret} and \ref{itComp} but not \ref{itPar}. \cite{lattimore2020learning} proved sample complexity bounds for discounted MDPs where the $Q$-function is approximately linear (more general than us), but they require a simulator/generative model. \cite{yin2021efficient} considered a similar setting, though only requires ``local'' simulator access. \cite{wang2021sample} only assumed the transition kernel is linear but requires a simulator.

\section{Preliminaries} \label{secPrelim}

{\bf Finite-horizon MDP:} We use the standard notation. $\S$ is the state space, $\A$ is the finite action space, $H \in \N$ is the horizon, $\{ r_h \}_{h=1}^H$ are the mean rewards, and $\{ P_h \}_{h=1}^{H-1}$ are the transition kernels. We assume $r_h(s,a) \in [0,1]$ for each $h \in [H] = \{1,\ldots,H\}$, $s \in \S$, and $a \in \A$. We let $\Pi$ denote the set of policies, i.e., the set of sequences $\pi = ( \pi_h )_{h=1}^H$ with $\pi_h : \S \rightarrow \A$ for each $h$. For any $\pi \in \Pi$ and $h \in [H]$, $V_h^\pi : \S \rightarrow [H-h+1]$ denotes the value function
\begin{equation}
V_h^\pi(s) = \E \left[ \sum_{h'=h}^H r_{h'} ( s_{h'} , \pi_{h'}(s_{h'} ) ) \middle| s_h = s \right]  ,
\end{equation}
where $s_{h'+1} \sim P_{h'}(\cdot|s_{h'},\pi_{h'}(s_{h'}))$ for each $h'$. We let $Q_h^\pi : \S \times \A \rightarrow [H-h+1]$ denote the state-action value function (or $Q$-function) given by
\begin{equation} \label{eqQfunction}
Q_h^\pi(s,a) = r_h(s,a) + \E [ V_{h+1}^\pi(s_{h+1}) | s_h = s , a_h = a ] ,
\end{equation}
where $s_{h+1} \sim P_h(\cdot|s,a)$ and $V_{H+1}^\pi(s_{H+1}) = 0$ by convention. It is well known that there exists an optimal policy $\pi^\star = ( \pi_h^\star )_{h=1}^H$, i.e., $V_h^\star(s) \triangleq V_h^{\pi^\star}(s) = \max_{ \pi \in \Pi } V_h^\pi(s)$ for all $h \in [H]$ and $s \in \S$. Also, for any $h \in [H]$, $\pi_h^\star$ is greedy with respect to $Q_h^\star \triangleq Q_h^{\pi^\star}$, i.e., $V_h^\star(s) = \max_{a \in \A} Q_h^\star(s,a)$ for each $s \in \S$.

{\bf MLMDP:} As discussed in Section \ref{secIntro}, we make the following linearity assumption.
\begin{ass}[MLMDP] \label{assLin}
For some known $\phi : \S \times \A \rightarrow \R^d$ and all $h \in [H]$, there exists unknown $\theta_h \in \R^d$ and $d$  unknown measures $\mu_h = ( \mu_{h,1} , \ldots , \mu_{h,d} )$ over $\S$ such that, for all $s \in \S$ and $a \in \A$,
\begin{gather}
| r_h(s,a) - \phi(s,a)^\trans \theta_h | \leq \epsmis , \\
 \| P_h(\cdot|s,a) - \phi(s,a)^\trans \mu_h \|_1  \leq \epsmis  .
\end{gather}
We also have $\max_{(s,a) \in \S \times \A} \|\phi(s,a)\|_2 \leq 1$ and $\max_{h \in [H]} \max \{ \| \theta_h \|_2 , \int_{s' \in \S} \| \mu_h(s') \|_2 \} \leq \sqrt{d}$.
\end{ass}

\begin{rem}[Comparison to prior work]
Assumption \ref{assLin} matches Assumption B of \cite{jin2020provably} and Assumption 1 of \cite{zanette2020frequentist}, except the latter has general $\ell_2$ norm bounds (e.g., $\| \phi(s,a) \|_2 \leq L_\phi$ for some $L_\phi$). Our analysis can be similarly generalized.
\end{rem}

A key consequence is that the $Q$-function is approximately linear. (See Proposition 2.3 of \cite{jin2020provably} or Corollary B.3 of \cite{zanette2020frequentist} for a proof.)
\begin{prop}[MLMDP $Q$-function] \label{propQfunction}
For any $\pi \in \Pi$ and $h \in [H]$, there exists $w_h^\pi \in \R^d$ such that $|Q_h^\pi(s,a) - \phi(s,a)^\trans w_h^\pi| \leq (H-h+1) \epsmis$ for any $(s,a) \in \S \times \A$.
\end{prop}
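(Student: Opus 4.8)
The plan is to verify the candidate $w_h^\pi$ suggested directly by the Bellman equation \eqref{eqQfunction}. Since $V_{h+1}^\pi : \S \to [H-h]$ and $\int_{s' \in \S} \| \mu_h(s') \|_2 \leq \sqrt{d}$ by Assumption \ref{assLin}, the vector
\begin{equation}
w_h^\pi = \theta_h + \int_{s' \in \S} V_{h+1}^\pi(s') \, \mu_h(s') \, ds'
\end{equation}
is well-defined (each coordinate integral is finite). This is the natural choice: $\theta_h$ is the linear parameter for the reward, while the integral term linearizes the expected next-state value using that $\phi(s,a)^\trans \mu_h$ approximates the transition kernel.

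Next I would write the approximation error as a sum of a reward term and a transition term. Combining \eqref{eqQfunction} with the definition of $w_h^\pi$ and the identity $\E[V_{h+1}^\pi(s_{h+1}) \mid s_h = s, a_h = a] = \int_{s'} V_{h+1}^\pi(s') P_h(s'|s,a) \, ds'$ gives
\begin{equation}
Q_h^\pi(s,a) - \phi(s,a)^\trans w_h^\pi = \big( r_h(s,a) - \phi(s,a)^\trans \theta_h \big) + \int_{s'} V_{h+1}^\pi(s') \big( P_h(s'|s,a) - \phi(s,a)^\trans \mu_h(s') \big) \, ds' .
\end{equation}
Applying the triangle inequality, the first term is at most $\epsmis$ by the reward bound in Assumption \ref{assLin}. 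For the second term, I would use a H\"older-type bound: pulling $V_{h+1}^\pi$ out in $\ell_\infty$ and recognizing the remaining integral as an $\ell_1$ distance between (signed) measures,
\begin{equation}
\left| \int_{s'} V_{h+1}^\pi(s') \big( P_h(s'|s,a) - \phi(s,a)^\trans \mu_h(s') \big) \, ds' \right| \leq \| V_{h+1}^\pi \|_\infty \, \| P_h(\cdot|s,a) - \phi(s,a)^\trans \mu_h \|_1 \leq (H-h)\, \epsmis ,
\end{equation}
where the last step uses $\| V_{h+1}^\pi \|_\infty \leq H-h$ and the transition bound in Assumption \ref{assLin}. Summing the two bounds yields $(H-h+1)\epsmis$, as claimed.

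This argument is essentially a one-line computation once the candidate $w_h^\pi$ is identified, so I do not anticipate a genuine obstacle. The only points requiring care are checking that the coordinate integrals defining $w_h^\pi$ converge (handled by the bound on $\int_{s'} \| \mu_h(s') \|_2$ together with boundedness of $V_{h+1}^\pi$) and correctly applying $\ell_1$/$\ell_\infty$ duality to the signed measure $P_h(\cdot|s,a) - \phi(s,a)^\trans \mu_h$. Note the bound degrades with the remaining horizon because $V_{h+1}^\pi$ can be as large as $H-h$, which is precisely where the extra factor relative to the pure reward misspecification $\epsmis$ originates.
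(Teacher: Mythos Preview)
Your proposal is correct and is exactly the standard argument the paper defers to: the paper does not prove Proposition~\ref{propQfunction} itself but cites \cite{jin2020provably} and \cite{zanette2020frequentist}, and your choice $w_h^\pi = \theta_h + \int_{s'} V_{h+1}^\pi(s')\,\mu_h(s')$ together with the reward/transition decomposition and the $\ell_\infty/\ell_1$ bound is precisely the computation carried out in those references.
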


{\bf Regret:} We follow the standard episodic framework. At episode $k$, we choose policy $\pi^k = ( \pi_h^k )_{h=1}^H$ and begin at an arbitrary initial state $s_1^k \in \S$. For each $h \in [H]$, we take action $a_h^k = \pi_h^k(s_h^k)$, observe noisy reward $r_h^k(s_h^k,a_h^k) = r_h(s_h^k,a_h^k) + \eta_h^k$ (where $\eta_h^k$ is conditionally zero-mean noise, i.e., $\E [ \eta_h^k | s_h^k , a_h^k ] = 0$), and (when $h < H$) transition to $s_{h+1}^k \sim P_h(s_h^k,a_h^k)$. We assume $r_h(s_h^k,a_h^k)$ and $r_h^k(s_h^k,a_h^k)$ lie in $[0,1]$, so $\eta_h^k \in [-1,1]$.\footnote{The results extend with minor modification to bounded mean rewards and subgaussian noise.} We measure performance in terms of the regret
\begin{equation} \label{eqRegret}
R(K) = \sum_{k=1}^K \left( V_1^\star(s_1^k) - V_1^{\pi_k}(s_1^k) \right) .
\end{equation}

{\bf MLCB:} For MLCBs, we use the notation above with $H = 1$ and discard subscripts $h$. So, for each $k \in [K]$, we observe context $s^k \in \S$, take action $a^k \in \A$, and receive reward $r^k(s^k,a^k) = r(s^k,a^k) + \eta^k$. As above, $\eta^k$ is conditionally zero-mean, $r^k(s^k,a^k)$ and $r(s^k,a^k)$ are $[0,1]$-valued, and $| r(s,a) - \phi(s,a)^\trans \theta | \leq \epsmis$.

\section{MLCB algorithms} \label{secAlgCB}

\IncMargin{1em}
\begin{algorithm}[t] \label{algExpl3}

\caption{$\texttt{EXPL3}(thres)$}

$\Psi^0 = \emptyset$

\For{episode $k = 1, \ldots , K$}{

Observe $s^k \in \S$ 

$\Lambda^k = I + \sum_{ \tau \in \Psi^{k-1} } \phi(s^\tau,a^\tau) \phi(s^\tau,a^\tau)^\trans$

$w^k = ( \Lambda^k )^{-1} \sum_{ \tau \in \Psi^{k-1} } \phi(s^\tau,a^\tau) r^\tau(s^\tau,a^\tau)$

\If{$\max_{a \in \A} \|\phi(s^k,a) \|_{(\Lambda^k)^{-1}} > thres$}{ \label{lnExpl3check}

$a^k = \argmax_{a \in \A} \| \phi(s^k,a) \|_{(\Lambda^k)^{-1}}$ \label{lnExpl3explore}

$\Psi^k = \Psi^{k-1} \cup \{k\}$ 
}
\Else{

$a^k = \argmax_{a \in \A} \phi(s^k,a)^\trans w^k$ , $\Psi^k = \Psi^{k-1}$  \label{lnExpl3exploit}

}

Play $a^k$, observe $r^k(s^k,a^k)$

}

\end{algorithm}
\DecMargin{1em}

In this section, we restrict to MLCBs and discuss \texttt{EXPL3} and our \texttt{Sup-Lin-UCB} variant. We will later leverage the insights developed in this section to design our MLMDP algorithm (see, e.g., Remark \ref{remOffPolicy}).

{\bf Warm-up: noncontextual, known $\epsmis$:} Assume momentarily that $s^1 = \cdots = s^K$ and $\epsmis$ is known. In this case, we propose a simple two stage algorithm. For the first stage, we play the action with the highest noise/uncertainty in order to reduce it. Mathematically, we quantify the uncertainty in the usual way for LCBs: let $\Lambda^k = I + \sum_{\tau \leq k} \phi(s^\tau,a^\tau) \phi(s^\tau,a^\tau)^\trans$,
\begin{equation} \label{eqInducedNorm}
\|\phi(s^k,a) \|_{(\Lambda^k)^{-1}} = \sqrt{ \phi(s^k,a)^\trans (\Lambda^k)^{-1} \phi(s^k,a) } ,
\end{equation}
and play $a^k = \argmax_a \|\phi(s^k,a) \|_{(\Lambda^k)^{-1}}$. After $\approx \epsmis^{-2}$ such episodes, the uncertainty falls below $\epsmis$, which means $|\phi(s^k,a)^\trans (w^k - \theta) | = O ( \epsmis )$ for the least-squares estimate $w^k$ of $\theta$. 
Hence, $\argmax_a\phi(s^k,a)^\trans w^k$ is an $\epsmis$-suboptimal policy, which is the best possible under Assumption \ref{assLin}. Accordingly, the second stage exploits by playing $a^k = \argmax_a\phi(s^k,a)^\trans w^k$.

{\bf General case:} For the contextual setting and unknown $\epsmis$, \texttt{EXPL3} (Algorithm \ref{algExpl3}) generalizes this approach. Note uncertainty now depends on the context $s^k$, and we can no longer define ``high'' as ``$\geq \epsmis$''. Thus, at episode $k$, \texttt{EXPL3} checks if the uncertainty at the current context $s^k$ exceeds the input $thres$ (Line \ref{lnExpl3check}). If so, it explores a high uncertainty action (Line \ref{lnExpl3explore}); otherwise, it exploits the estimated best action (Line \ref{lnExpl3exploit}). Here $\Lambda^k$ and $w^k$ are only computed from episodes $\Psi^{k-1}$ where \texttt{EXPL3} explored. When $thres \approx \epsmis$, $\texttt{EXPL3}(thres)$ is essentially a contextual version of the previous approach and should perform well. However, if $thres \not \approx \epsmis$, it will fail for one of two reasons:
\begin{enumerate}[leftmargin=*,align=left,itemsep=0pt,topsep=0pt,label=(\Alph*)]
\item \label{itThresSubOpt} If $thres \gg \epsmis$, $\texttt{EXPL3}(thres)$ stops exploring as soon as the uncertainty falls below $thres$, so it only learns a $thres$-suboptimal policy. 
\item \label{itExtraReg} If $thres \ll \epsmis$, $\texttt{EXPL3}(thres)$ explores too much -- for roughly $thres^{-2}$ episodes when $\epsmis^{-2}$ suffice. This may cause $\text{sp}(r) thres^{-2}$ additional regret, where $\text{sp}(r) = \max_a r(s^k,a) - \min_a r(s^k,a)$. 
\end{enumerate}

{\bf Ensemble approach:} We next show these failures can be overcome by carefully interconnecting the ensemble $\{ \texttt{EXPL3}(2^{-l}) \}_{l=1}^L$. Roughly, the $l$-th will explore until its uncertainty is $2^{-l}$, as above. Thereafter, we know its greedy policy is only $2^{-l}$-suboptimal, and since $\epsmis$ is unknown, we may have $2^{-l} \gg \epsmis$. Hence, instead of choosing the best action -- which causes failure \ref{itThresSubOpt} -- \textit{we only ask it to eliminate $2^{-l}$-suboptimal actions}. By the same token, the $(l-1)$-th algorithm has already eliminated $2^{1-l}$-suboptimal actions, so \textit{the $l$-th sees a more favorable problem instance}, with $\text{sp}(r) = O( 2^{-l} )$. Thus, if instead $2^{-l} \ll \epsmis$, the extra regret in \ref{itExtraReg} scales as $2^{-l} ( 2^{-l} )^{-2} = 2^l$, which is tolerable if we choose $L = \log_2(O(\sqrt{K}))$. 

More precisely, our \texttt{Sup-Lin-UCB} variant is given in Algorithm \ref{algSLU}. At episode $k$, it chooses an action via a phased elimination procedure that lasts at most $L$ phases. Generalizing \texttt{EXPL3}, it computes $\Lambda_l^k$ and $w_l^k$ using $\Psi_l^k$, which are the exploratory episodes at phase $l$. As discussed above, the $l$-th phase chooses a high uncertainty action if one exists (Line \ref{lnSLUexplore}) and otherwise eliminates actions with estimated reward $\Theta ( 2^{-l} )$ less than the maximal (Line \ref{lnSLUeliminate}). Finally, if phase $L$ is reached and an action was never chosen in Line \ref{lnSLUexplore}, it exploits the estimated best action (Line \ref{lnSLUexploit}).

\IncMargin{1em}
\begin{algorithm}[t] \label{algSLU}

\caption{\texttt{Sup-Lin-UCB-Var}}

$\Psi_l^0 = \emptyset\ \forall\ l \in [L]$

\For{episode $k = 1, \ldots , K$}{

Observe $s^k \in \S$, set $\A_1^k = \A$

\For{phase $l=1,\ldots,L$}{

$\Lambda_l^k = I + \sum_{ \tau \in \Psi_l^{k-1} } \phi(s^\tau,a^\tau) \phi(s^\tau,a^\tau)^\trans$

$w_l^k = ( \Lambda_l^k )^{-1} \sum_{ \tau \in \Psi_l^{k-1} } \phi(s^\tau,a^\tau) r^\tau(s^\tau,a^\tau)$

\uIf{$\max_{a \in \A_l^k} \|\phi(s^k,a) \|_{(\Lambda_l^k)^{-1}} > 2^{-l}$}{ 

$a^k = \argmax_{a \in \A_l^k} \| \phi(s^k,a) \|_{(\Lambda_l^k)^{-1}}$ \label{lnSLUexplore} 

$\Psi_l^k = \Psi_l^{k-1} \cup \{k\}$, $\Psi_{l'}^k = \Psi_{l'}^{k-1}\ \forall\ l' \neq l$

\Break 

}
\uElseIf{$l < L$}{

$\A_{l+1}^k = \{ a \in \A_l^k : \phi(s^k,a)^\trans w_l^k  \geq \max_{a' \in \A_l^k} \phi(s^k,a')^\trans w_l^k - \Theta ( 2^{-l} ) \}$ 
\label{lnSLUeliminate}

}
\uElse{

$a^k = \argmax_{a \in \A_L^k} \phi(s^k,a)^\trans w_L^k$ \label{lnSLUexploit}

$\Psi_{l'}^k = \Psi_{l'}^{k-1}\ \forall\ l'$  

}

}

Play $a^k$, observe $r^k(s^k,a^k)$

}

\end{algorithm}
\DecMargin{1em}

{\bf Formal interpretation:} To complement this intuition, we provide a formal result. Roughly, it shows that if $\S$ is rich enough, then for any phase $l$ and any contexts in \texttt{Sup-Lin-UCB-Var}, there are contexts in $\texttt{EXPL3}(2^{-l})$ such that the latter learns the same policy as the $l$-th phase of \texttt{Sup-Lin-UCB-Var}. In other words, \texttt{Sup-Lin-UCB-Var} runs $\{ \texttt{EXPL}(2^{-l}) \}_{l=1}^L$. In this way, \texttt{Sup-Lin-UCB} is akin to model selection, but unlike those approaches, does not attempt to learn the best $\texttt{EXPL}(2^{-l})$. See Appendix \ref{appEnsembleProof} for a proof.
\begin{prop}[\texttt{Sup-Lin-UCB} $=$ \texttt{EXPL3} ensemble] \label{propEnsemble}
Assume that for any $\{ \phi_a \}_{a \in \A} \subset \R^d$ and $\{ r_a \}_{a \in \A} \subset \R$, there exists $s \in \S$ such that $\phi(s,a) = \phi_a$ and $r(s,a) = r_a\ \forall\ a \in \A$. Then for any $l \in [L]$ and $\{ s^k \}_{k=1}^K \subset \S$, there exists $\{ \tilde{s}^k \}_{k=1}^K \subset \S$ such that, if Algorithms \ref{algExpl3} and \ref{algSLU} are run with contexts $\{ \tilde{s}^k \}_{k=1}^K$ and $\{ s^k \}_{k=1}^K$, respectively, and if both face the same noise sequence $\{ \eta^k \}_{k=1}^K$, then $(w^k , \Lambda^k) = ( w_l^k, \Lambda_l^k )\ \forall\ k \in [K]$. 
\end{prop}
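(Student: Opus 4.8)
The plan is to induct on the episode index $k$, maintaining the invariant that $\texttt{EXPL3}$ run with threshold input $2^{-l}$ on $\{\tilde{s}^\tau\}$ and the $l$-th phase of \texttt{Sup-Lin-UCB-Var} run on $\{s^\tau\}$ have collected identical data through episode $k-1$; concretely, that $\Psi^{k-1} = \Psi_l^{k-1}$ and that $\phi(\tilde{s}^\tau,\tilde{a}^\tau) = \phi(s^\tau,a^\tau)$ and $r(\tilde{s}^\tau,\tilde{a}^\tau) = r(s^\tau,a^\tau)$ for every $\tau \in \Psi_l^{k-1}$, where $\tilde{a}^\tau$ is the action $\texttt{EXPL3}$ plays at episode $\tau$. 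Since $\Lambda^k$ and $w^k$ are functions only of $\Psi^{k-1}$ and the features/rewards recorded on those episodes, and since the two runs share the noise sequence $\{\eta^\tau\}$ (so matching mean rewards forces matching noisy rewards $r(\cdot)+\eta^\tau$), this invariant immediately yields $(\Lambda^k,w^k) = (\Lambda_l^k,w_l^k)$, which is the desired conclusion.

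First I would fix the \texttt{Sup-Lin-UCB-Var} run on $\{s^k\}$; since the contexts and noise are given, this run --- and in particular the indicator of whether episode $k$ explores at phase $l$ --- is completely determined. I then define $\tilde{s}^k$ by cases using the richness hypothesis. If episode $k$ explores at phase $l$, playing $\bar{a} = \argmax_{a \in \A_l^k}\|\phi(s^k,a)\|_{(\Lambda_l^k)^{-1}}$, I choose $\tilde{s}^k$ so that one distinguished action has feature $\phi(s^k,\bar{a})$ and mean reward $r(s^k,\bar{a})$, while every other action has the zero feature (and, say, zero reward). If episode $k$ does not explore at phase $l$ --- either because some earlier phase explored, or because phase $l$ was reached but every action in $\A_l^k$ had uncertainty at most $2^{-l}$ --- I choose $\tilde{s}^k$ so that all actions have the zero feature.

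It then remains to verify that the invariant propagates, assuming $\Lambda^k = \Lambda_l^k$. In the first case, the distinguished action of $\tilde{s}^k$ has uncertainty $\|\phi(s^k,\bar{a})\|_{(\Lambda^k)^{-1}} = \|\phi(s^k,\bar{a})\|_{(\Lambda_l^k)^{-1}} > 2^{-l}$, while all other actions have uncertainty $0$; hence the maximum over $\A$ strictly exceeds the threshold $2^{-l}$, so $\texttt{EXPL3}$ explores and, being the unique maximizer, plays the distinguished action, recording exactly $\phi(s^k,\bar{a})$ and $r(s^k,\bar{a}) + \eta^k$. Thus $\Psi^k = \Psi^{k-1} \cup \{k\} = \Psi_l^{k-1} \cup \{k\} = \Psi_l^k$ and the recorded feature/reward agree with the phase-$l$ update, so the invariant holds at $k+1$. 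In the second case the maximum uncertainty over $\A$ is $0 \leq 2^{-l}$, so $\texttt{EXPL3}$ exploits and leaves $\Psi^k = \Psi^{k-1} = \Psi_l^{k-1} = \Psi_l^k$; since $\Psi_l$ is also unchanged on the \texttt{Sup-Lin-UCB-Var} side, the invariant again propagates. The base case $k=1$ is immediate, as $\Psi^0 = \Psi_l^0 = \emptyset$, $\Lambda^1 = \Lambda_l^1 = I$, and $w^1 = w_l^1 = 0$.

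The main obstacle is reconciling two structural mismatches: the $l$-th phase of \texttt{Sup-Lin-UCB-Var} maximizes uncertainty only over the \emph{eliminated} action set $\A_l^k$ and is reached only when phases $1,\ldots,l-1$ all eliminated, whereas $\texttt{EXPL3}$ always maximizes over the \emph{full} set $\A$ and has no notion of earlier phases. The zero-feature trick resolves the first mismatch --- by annihilating all but the single relevant action, I force $\texttt{EXPL3}$'s full-$\A$ argmax to coincide with the phase-$l$ behavior regardless of $\A_l^k$. The second mismatch is handled by the ``simulate then mirror'' ordering: because the \texttt{Sup-Lin-UCB-Var} trajectory is deterministic given $\{s^k\}$ and $\{\eta^k\}$, I can read off whether phase $l$ is reached and explores at each $k$ and encode only that single outcome into $\tilde{s}^k$, without $\texttt{EXPL3}$ ever needing to represent the other phases. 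It remains only to note that there are no argmax ties (the explored action has strictly positive uncertainty) and that the richness hypothesis indeed permits the zero feature paired with an arbitrary reward, both of which are immediate.
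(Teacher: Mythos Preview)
Your proof is correct and follows essentially the same approach as the paper's: both induct on $k$, fix the \texttt{Sup-Lin-UCB-Var} trajectory in advance, and use the richness hypothesis to zero out irrelevant actions so that \texttt{EXPL3}'s full-$\A$ behavior mirrors the phase-$l$ behavior. The only cosmetic difference is that the paper retains the features of \emph{all} actions in $\A_l^k$ (setting $\phi(\tilde{s}^k,a)=\phi(s^k,a)$ for $a\in\A_l^k$ and zero otherwise), whereas you retain only the single argmax action; your choice is slightly cleaner in that it makes the \texttt{EXPL3} argmax unique without appealing to a tie-breaking convention.
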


\begin{rem}[Historical note]
\texttt{Sup-Lin-UCB-Var} simplifies \cite{takemura2021parameter}'s algorithm (see Appendix \ref{appCompareSLU}), which modifies \texttt{Sup-Lin-UCB} \citep{chu2011contextual}, which builds upon \texttt{Sup-Lin-Rel} \citep{auer2002using}. The latter three set $L = \log_2(O(\sqrt{K}))$ as above, but we keep it general, which is crucial for \ref{itRegret} and \ref{itComp}.
\end{rem}

\section{MLMDP algorithm} \label{secAlgMDP}

We can now leverage the intuition of the previous section to discuss \texttt{Sup-LSVI-UCB} (Algorithm \ref{algMain}). To begin, it initializes the aforementioned parameter $L$, an exploration parameter $\alpha$, a rounding parameter $\epsrnd$ (to be discussed shortly), and the sets $\Psi_{h,l} = \emptyset$ (now indexed by step $h \in [H]$ but similar to $\Psi_l$ in Algorithm \ref{algSLU}). The $k$-th episodes then contains two parts:
\begin{itemize}[leftmargin=*,align=left,itemsep=0pt,topsep=0pt]

\item {\bf Policy update (Alg.\ \ref{algMain}, Lines \ref{lnStartInduct}-\ref{lnCallSub})}: Starting at $h = H$ and inducting backward, for each $l \in [L]$, Lines \ref{lnLambdaUR} and \ref{lnWeightVectorUR} use the episodes $\Psi_{h,l}^k$ to compute a least-squares estimate $w_{h,l}^k$ of the vector $w_h^\pi$ from Proposition \ref{propQfunction} (and the matrix $\Lambda_{h,l}^k$). This is the same approach used by \texttt{LSVI-UCB}; see Section 4 of \cite{jin2020provably} for intuition. The difference is that the next-state value estimate $V_{h+1}^k(s_{h+1}^\tau)$ in Line \ref{lnWeightVectorUR} is computed via Algorithm \ref{algSub}, which is essentially \texttt{Sup-Lin-UCB-Var} and will be discussed soon. In contrast, \texttt{LSVI-UCB} uses \texttt{Lin-UCB}-style estimates
\begin{align} \label{eqJinEstimate}
\max_{a \in \A}  \left( \phi(s,a)^\trans w_{h+1}^k + \alpha \| \phi(s,a) \|_{ (\Lambda_{h+1}^k )^{-1} }  \right) ,
\end{align}
where $w_{h+1}^k$ and $\Lambda_{h+1}^k$ are computed using all data (not just $\Psi_{h,l}^k$). Additionally, Lines \ref{lnLambdaR} and \ref{lnWeightVectorR} elementwise round $w_{h,l}^k$ and $( \Lambda_{h,l}^k )^{-1}$ to $\tilde{w}_{h,l}^k$ and $( \tilde{\Lambda}_{h,l}^k )^{-1}$ for reasons discussed in Remark \ref{remRounding}.\footnote{We emphasize $(\tilde{\Lambda}_{h,l}^k )^{-1}$ elementwise rounds $( \Lambda_{h,l}^k )^{-1}$; we have \textit{not} defined $\tilde{\Lambda}_{h,l}^k$ and inverted it.}

\item {\bf Policy execution (Alg.\ \ref{algMain}, Lines \ref{lnObserveS1}-\ref{lnPsiCase2}):} After updating the policy, we execute it by computing its relevant entries $\{ \pi_h^k(s_h^k) \}_{h=1}^H$ via Algorithm \ref{algSub}. (We discuss the $\Psi_{h,l}$ update in Lines \ref{lnPsiCase1A}, \ref{lnPsiCase1B}, and \ref{lnPsiCase2} soon.)
\end{itemize}

\IncMargin{1em}
\begin{algorithm}[t] \label{algMain}

\caption{$\texttt{Sup-LSVI-UCB}(\epstol,\delta)$}

$L = \ceil{ \log_2  ( \sqrt{d}/\epstol) }$, $\alpha = 42 d H L \sqrt{ \log( \frac{3 dHL}{\delta})}$

$\epsrnd = 2^{-4L}/d$, $\Psi_{h,l}^0 = \emptyset\ \forall\ l \in [L] , h \in [H]$

\For{episode $k = 1 , \ldots , K$}{

\For{step $h = H , \ldots , 1$}{ \label{lnStartInduct}

\For{phase $l = 1 , \ldots , L$}{

$\Lambda_{h,l}^k = 16 I + \sum_{ \tau \in \Psi_{h,l}^{k-1} } \phi(s_h^\tau,a_h^\tau) \phi(s_h^\tau,a_h^\tau)^\trans$ \label{lnLambdaUR}

$w_{h,l}^k= ( \Lambda_{h,l}^k )^{-1} \sum_{ \tau \in \Psi_{h,l}^{k-1} } \phi(s_h^\tau,a_h^\tau) \times ( r_h^\tau(s_h^\tau,a_h^\tau) + V_{h+1}^k ( s_{h+1}^\tau ) )$ \label{lnWeightVectorUR}

$( \tilde{\Lambda}_{h,l}^k )^{-1} = \epsrnd \ceil{ ( \Lambda_{h,l}^k )^{-1}  / \epsrnd }$ \label{lnLambdaR}

$\tilde{w}_{h,l}^k = \epsrnd \ceil{ w_{h,l}^k / \epsrnd }$  \label{lnWeightVectorR}

}

$( \pi_h^k(\cdot) , V_h^k(\cdot) , l_h^k(\cdot) ) = \texttt{Sup-Lin-UCB-Var}(\cdot)$ (see Algorithm \ref{algSub}) \label{lnCallSub}

}

Observe $s_1^k \in \S$ \label{lnObserveS1}

\For{step $h = 1 , \ldots , H$}{

Play $a_h^k = \pi_h^k(s_h^k)$, observe $r_h^k(s_h^k,a_h^k)$ \label{lnTakeAction}
 
Transition to $s_{h+1}^k \sim P_h(\cdot|s_h^k,a_h^k)$  

\uIf{$l_h^k(s_h^k) \leq L$}{

$\Psi_{h,{l_h^k}(s_h^k)}^k = \Psi_{h,{l_h^k}(s_h^k)}^{k-1} \cup \{k\}$ \label{lnPsiCase1A}

$\Psi_{h,l}^k = \Psi_{h,l}^{k-1}\ \forall\ l \in [L] \setminus \{ l_h^k(s_h^k) \}$ \label{lnPsiCase1B}

}
\lElse{$\Psi_{h,l}^k = \Psi_{h,l}^{k-1}\ \forall\ l \in [L]$ \label{lnPsiCase2}}

}

}

\end{algorithm}
\DecMargin{1em}

\IncMargin{1em}
\begin{algorithm}[t] \label{algSub}

\caption{$\texttt{Sup-Lin-UCB-Var}(s)$}

$\A_{h,1}^k(s) = \A$, $V_{h,0}^k(s) = H$

\For{phase $l=1,\ldots,L$}{

\uIf{$\max_{ a \in \A_{h,l}^k(s) } \| \phi(s,a) \|_{ (\tilde{\Lambda}_{h,l}^k )^{-1} } > 2^{-l}$}{ \label{lnSubExploreCond}

$\pi_h^k(s) = \argmax_{a \in \A_{h,l}^k(s)} \| \phi(s,a) \|_{ (\tilde{\Lambda}_{h,l}^k )^{-1} }$  \label{lnSubExplore}

$V_h^k(s) = \mathcal{P}_{[0,H]} ( V_{h,l-1}^k(s) + 2^{1-l} \alpha )$, $l_h^k(s) = l$\label{lnSubExploreOther}

\Return{$(\pi_h^k(s),V_h^k(s),l_h^k(s))$} \label{lnSubExploreReturn}

}
\uElseIf{$l < L$}{ 

$\pi_{h,l}^k(s) = \argmax_{ a \in \A_{h,l}^k(s) } \phi(s,a)^\trans \tilde{w}_{h,l}^k$ \label{lnSubEstBest}

$V_{h,l}^k(s) = \phi(s,\pi_{h,l}^k(s))^\trans \tilde{w}_{h,l}^k$ \label{lnSubEstVal}

$\A_{h,l+1}^k(s) = \{ a \in \A_{h,l}^k(s) : \phi(s,a)^\trans \tilde{w}_{h,l}^k \geq V_{h,l}^k(s) -  2^{1-l} \alpha \}$ \label{lnSubEliminate}

}
\uElse{

$\pi_h^k(s) = \pi_{h,L}^k(s)$ \label{lnSubExploit} 

$V_h^k(s) = \mathcal{P}_{[0,H]} ( V_{h,L}^k(s) )$, $l_h^k(s) = L+1$ \label{lnSubExploitOther}

\Return{$(\pi_h^k(s),V_h^k(s),l_h^k(s))$}

}
}
\end{algorithm}
\DecMargin{1em}

{\bf Subroutine (Alg.\ \ref{algSub}):} As mentioned above, Algorithm \ref{algSub} implements \texttt{Sup-Lin-UCB-Var} logic to choose $\pi_h^k(s)$ for a given $s \in \S$, though using the rounded $\tilde{w}_{h,l}^k$ and $( \tilde{\Lambda}_{h,l}^k )^{-1}$. A small technical issue is that the analogue of \eqref{eqInducedNorm} may be ill-defined, so instead we let\footnote{While this need not be a norm, $\epsrnd$ is small enough that it behaves like one (at least enough for our purposes).}
\begin{equation}
\| \phi(s,a) \|_{ ( \tilde{\Lambda}_{h,l}^k )^{-1} } = \sqrt{ | \phi(s,a)^\trans ( \tilde{\Lambda}_{h,l}^k )^{-1} \phi(s,a) | } .
\end{equation}
Algorithm \ref{algSub} also returns $V_h^k(s)$, which is the value estimate used in the backward induction, and $l_h^k(s)$, where $l_h^k(s)-1$ is the number of eliminations conducted (see Lines \ref{lnSubExploreOther} and \ref{lnSubExploitOther}). For the value estimate, we use $\mathcal{P}_{[0,H]}$ to project onto $[0,H]$, i.e., $\mathcal{P}_{[0,H]}(x) = 0$, $x$, and $H$ when $x < 0$, $\in [0,H]$, and $> H$, respectively. This is typical for MLMDPs and ensures boundedness of the random variables. See Remark \ref{remOffPolicy} for further discussion of $V_h^k$. Also note $l_h^k(s) \leq L$ only when an exploratory action is chosen in Line \ref{lnSubExplore}, in which case \texttt{Sup-LSVI-UCB} adds the episode to $\Psi_{h,l}$ (Line \ref{lnPsiCase1A} of Algorithm \ref{algMain}).

\begin{rem}[Rounding] \label{remRounding}
The rounding in Lines \ref{lnLambdaR} and \ref{lnWeightVectorR} of Algorithm \ref{algMain} ensures that $V_{h+1}^k : \S \rightarrow [0,H]$ defined by Algorithm \ref{algSub} belongs to a finite function class. This enables a union bound over the function class in our concentration lemma, which is needed because $V_{h+1}^k$ is a random function that depends on past data. In contrast, \cite{jin2020provably} shows \eqref{eqJinEstimate} is close to a function class with a small covering number, then takes a union bound over the cover. This relies on the fact that their $\S$-dimensional value function estimate is itself a continuous function of the $\text{poly}(d)$-dimensional $w_h^k$ and $\Lambda_h^k$. In our case,  Lines \ref{lnSubExploreCond} and \ref{lnSubEliminate} of Algorithm \ref{algSub} introduce discontinuities that cause this to fail.
\end{rem}

\begin{rem}[Off-policy estimates] \label{remOffPolicy}
Algorithm \ref{algSub}'s (unprojected) value estimate takes one of two forms. In Line \ref{lnSubExploitOther}, it is $\phi(s,\pi_h^k(s))^\trans \tilde{w}_{h,L}^k$, which is the $Q$-function estimate at the chosen action $\pi_h^k(s)$. In this sense, it is the usual ``on-policy'' estimate used in \texttt{LSVI-UCB} and most other algorithms. In contrast, Line \ref{lnSubExploreOther} uses
\begin{equation} \label{eqOffPolicy}
\phi(s,\pi_{h,l-1}^k(s))^\trans \tilde{w}_{h,l-1}^k  + \Theta ( 2^{-l} ) .
\end{equation}
Since $\pi_{h,l-1}^k(s) \neq \pi_h^k(s)$ in general, this estimate is off-policy. We use such estimates in Line \ref{lnSubExploreOther} because Lines \ref{lnSubExplore}-\ref{lnSubExploreReturn} correspond to the explicit exploration discussed in Section \ref{secAlgCB}. When such exploration is needed, the on-policy estimate may be quite low, so we use an off-policy estimate to incentivize the algorithm to visit $s$ (after which we can conduct the exploration). The $2^{-l}$ term in \eqref{eqOffPolicy} is also motivated by Section \ref{secAlgCB}. In essence, since $\texttt{EXPL}(2^{-l})$ learns up to $2^{-l}$ noise, it is an uncertainty bonus that makes \eqref{eqOffPolicy} the highest statistically-plausible value from the perspective of $\texttt{EXPL}(2^{-l})$.
\end{rem}

\begin{rem}[Adversarial corruptions]
\cite{lykouris2021corruption} assume the MDP is linear ($\epsmis = 0$) except for a small number of episodes, where it changes arbitrarily. While quite different from MLMDP, they also combine backward induction and phased elimination, with each phase using a subset of episodes. The commonality is high level, though; in that work, each phase runs its own backward induction subroutine, which episodes each phase uses differ, and the algorithm is model-based (among other differences).
\end{rem}

\section{Main results} \label{secResults}

We can now present Theorem \ref{thmMain}, which provides regret and complexity guarantees for \texttt{Sup-LSVI-UCB}. This is our most general result, and we will soon examine some special cases of $\epstol$ to build further intuition. For now, we mention that the theorem (and the fact that $\epsmis$ does not appear in our algorithms) guarantees \ref{itRegret}, \ref{itComp}, and \ref{itPar} hold when $\epstol$ is independent of $K$, which (to our knowledge) is a first for MLMDPs.

\begin{thm}[General result]\label{thmMain}
If Assumption \ref{assLin} holds and we run Algorithm \ref{algMain} with inputs $\epstol \in (0,1)$ and $\delta \in (0,1)$, then with probability at least $1-\delta$,
\begin{align}
R(K) & = O \Big( \sqrt{ d^3 H^4  \min \{ (d / \epstol )^2 , K \}  \log^5 ( \tfrac{d}{\epstol} ) \iota } \\
& \qquad + \sqrt{H^3 K \iota} \\
& \qquad +  \sqrt{ d } H^2  K \max \{ \epsmis , \epstol \}  \sqrt{  \log^3 ( \tfrac{d}{\epstol} ) \iota }  \Big) ,
\end{align}
where $\iota = \log(3dH/\delta)$. Furthermore, Algorithm \ref{algMain}'s space complexity is
\begin{equation} \label{eqThmSpace}
O \left( d^2 H \log(\tfrac{d}{\epstol}) + d H |\A| \min \left\{ d^2 \log ( \tfrac{d}{\epstol} ) / \epstol^2 , K \right\} \right) ,
\end{equation}
and its per-episode time complexity is
\begin{equation} \label{eqThmTime}
O \left( d^2 H |\A|  \min \left\{ d^2 \log( \tfrac{d}{\epstol} ) / \epstol^2 , K \right\} \log ( \tfrac{d}{\epstol} ) \right) .
\end{equation}
\end{thm}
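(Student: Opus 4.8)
The plan is to decompose $R(K)$ into three pieces mirroring the three terms in the bound, then bound each piece using a concentration event (a "good event" on which the least-squares estimates are accurate), the phased-elimination structure, and a potential/elliptical-argument to control exploration.

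First I would establish the \textbf{concentration event}. The key is that for each step $h$ and phase $l$, the rounded estimate $\tilde w_{h,l}^k$ approximates $w_h^{\pi^k}$ well on $\Psi_{h,l}^k$: with probability at least $1-\delta$, for all $h,l,k$ and all $(s,a)$,
\begin{equation}
\bigl| \phi(s,a)^\trans \tilde w_{h,l}^k - \bigl( r_h(s,a) + \E[V_{h+1}^k(s')\mid s,a] \bigr) \bigr|
\;\lesssim\; \alpha \,\|\phi(s,a)\|_{(\tilde\Lambda_{h,l}^k)^{-1}} + (H-h+1)\epsmis .
\end{equation}
Here the $\alpha\|\phi\|_{(\tilde\Lambda)^{-1}}$ term is the usual self-normalized martingale bound, and the $\epsmis$ term is the misspecification bias from Proposition \ref{propQfunction}. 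The union bound over $k$ is made possible by the rounding (Remark \ref{remRounding}): $V_{h+1}^k$ lives in a finite class, so I would first count that class (roughly $(\text{something}/\epsrnd)^{\text{poly}(d)}$ elements per $(h,l)$), feed its log-cardinality into $\iota$ and the definition of $\alpha$, and verify the chosen $\alpha = 42 dHL\sqrt{\log(3dHL/\delta)}$ dominates. The rounding errors from $\epsrnd = 2^{-4L}/d$ must be shown negligible relative to the $2^{-l}$ thresholds, so that the rounded quantities behave like the unrounded ones throughout.

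Next I would prove the \textbf{optimism / action-preservation invariant}: on the good event, by backward induction on $h$, the optimal action $\pi_h^\star(s)$ is never eliminated by Line \ref{lnSubEliminate} up to an $O((H-h)\epsmis + \epstol)$ slack, and the returned $V_h^k(s) \ge V_h^\star(s) - O(\cdot)$. This is where the $2^{1-l}\alpha$ elimination width and the off-policy bonus $2^{1-l}\alpha$ in Line \ref{lnSubExploreOther} (Remark \ref{remOffPolicy}) are chosen precisely so that a surviving action loses at most $\Theta(2^{-l}\alpha)$ per phase. Summing the per-phase regret of the \emph{executed} policy against $V_1^\star$ then gives, via a standard regret-decomposition telescoping the value estimates across $h$, a bound of the form
\begin{equation}
R(K) \;\lesssim\; \sum_{k,h} \Bigl( \alpha\,\|\phi(s_h^k,a_h^k)\|_{(\tilde\Lambda_{h,l_h^k}^k)^{-1}}\,\ind\{\text{explore}\} + 2^{-l_h^k}\alpha + H\epsmis \Bigr)
+ (\text{martingale}) ,
\end{equation>
where the martingale term (difference between realized and expected next-state values) contributes the $\sqrt{H^3 K\iota}$ piece by Azuma.

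The \textbf{main obstacle}, and the heart of the argument, is bounding the exploration sum $\sum_{k,h}\alpha\|\phi\|_{(\tilde\Lambda)^{-1}}\ind\{\text{explore at phase }l\}$ and converting it into the $\min\{(d/\epstol)^2,K\}$ behavior. On any phase $l$, an episode only explores when $\|\phi\|_{(\tilde\Lambda_{h,l}^k)^{-1}} > 2^{-l}$, and each such episode is added to $\Psi_{h,l}$; the elliptical potential lemma bounds the number of such episodes by $O(d\,2^{2l}\log(\cdot))$ and the summed norms accordingly. Because $l \le L = \lceil\log_2(\sqrt d/\epstol)\rceil$, the worst phase contributes $2^{2L} \sim (d/\epstol)^2$, which caps the total exploration and yields both the first regret term and the $K$-independent complexity bounds \eqref{eqThmSpace}-\eqref{eqThmTime}: the sets $\Psi_{h,l}$ stop growing once the potential saturates, so their total size is $O(\min\{d^3\log(d/\epstol)/\epstol^2, K\})$ per $(h,l)$. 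The remaining $2^{-l_h^k}\alpha$ exploitation-slack sum gives the $\sqrt d H^2 K\max\{\epsmis,\epstol\}$ term, since on exploited episodes $l_h^k = L+1$ forces slack $\lesssim 2^{-L}\alpha \lesssim \epstol\,\text{poly}$, while the $H\epsmis$ bias summed over $k,h$ supplies the $K\epsmis$ contribution; the $\max\{\epsmis,\epstol\}$ arises because neither can be pushed below the other without $\epsmis$ as input. Finally, I would assemble the three pieces, absorb logarithmic factors into $\iota$ and $\log(d/\epstol)$, and separately tally the per-episode time (dominated by computing $L$ estimates and the norms for $|\A|$ actions over the bounded $\Psi_{h,l}$) and space (storing the bounded $\Psi_{h,l}$ and rounded parameters) to obtain \eqref{eqThmSpace}-\eqref{eqThmTime}.
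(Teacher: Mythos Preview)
Your high-level structure matches the paper: a concentration event established via a union bound over the finite function class induced by rounding; a two-sided decomposition bounding $V_h^\star - V_h^k$ and $V_h^k - V_h^{\pi_k}$ separately; Azuma--Hoeffding for the $\sqrt{H^3K\iota}$ martingale term; and an elliptical-potential count $|\Psi_{h,l}^K| = O(4^l d l)$ that caps exploration independently of $K$ and yields both the first regret term and the complexity bounds.

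There is, however, a genuine gap in your optimism step. You propose to show that $\pi_h^\star(s)$ is never eliminated in Line~\ref{lnSubEliminate}. This invariant generally \emph{fails} under misspecification: the elimination criterion compares $\phi(s,a)^\trans \tilde w_{h,l}^k$, which (by your own concentration bound) approximates $\bar Q_h^k(s,a) \triangleq r_h(s,a) + \int V_{h+1}^k\,dP_h$ only up to an additive $\pm\epseff$ that does not shrink with $l$; moreover $\pi_h^\star(s)$ maximizes $Q_h^\star$, not $\bar Q_h^k$, so it need not even be competitive for $\bar Q_h^k$. The paper therefore abandons ``optimal action survives'' and instead proves a \emph{phase-decay} lemma (Claim~\ref{clmPhaseDecay}): writing $\bar\pi_{h,l}^k(s) = \argmax_{a\in\A_{h,l}^k(s)} \bar Q_h^k(s,a)$, one shows $\bar Q_h^k(s,\bar\pi_{h,1}^k(s)) - \bar Q_h^k(s,\bar\pi_{h,l}^k(s)) \le 2(l-1)\epseff$, i.e., the best surviving action with respect to $\bar Q_h^k$ degrades by at most $2\epseff$ per phase. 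Optimism then follows from the trivial inequality $\bar Q_h^k(s,\pi_h^\star(s)) \le \bar Q_h^k(s,\bar\pi_{h,1}^k(s))$ (since $\A_{h,1}^k=\A$) plus this decay, giving $V_h^\star(s) - V_h^k(s) \le (2L\epseff + 2^{-L}\alpha)(H-h+1)$. The accumulated $L\epseff$ is absent from your per-phase slack $\Theta(2^{-l}\alpha)$ and is precisely the source of the extra $\log(d/\epstol)$ factors multiplying $K\max\{\epsmis,\epstol\}$ in the final bound.

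A related inaccuracy: the misspecification contribution in your concentration inequality is not the pointwise $(H-h+1)\epsmis$ from Proposition~\ref{propQfunction}. The least-squares estimator aggregates bias over the $|\Psi_{h,l}^{k-1}|$ regression points, and the resulting error (the $z_3$ term in Claim~\ref{clmEstError}) scales as $H\sqrt{dl}\,\epsmis$; this $\sqrt{d}$ is exactly the unavoidable blow-up of Remark~\ref{remLinear}, and together with the rounding error it defines the effective misspecification $\epseff = 2\alpha\sqrt{d\epsrnd} + 28H\sqrt{dL}\,\epsmis$ that is propagated through the phase-decay and backward-induction arguments.
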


Alternatively, if $K$ is small (so \ref{itRegret} and \ref{itComp} are less relevant), we can choose $\epstol$ in terms of $K$ to obtain the following corollary. Here \ref{itPar} still holds, though (like many existing algorithms) \ref{itRegret} and \ref{itComp} fail. 
\begin{cor}[Unknown $\epsmis$, small $K$] \label{corUnknown}
If Assumption \ref{assLin} holds and we run Algorithm \ref{algMain} with inputs $\epstol = \frac{d}{\sqrt{K}}$ and $\delta \in (0,1)$, then with probability at least $1-\delta$,
\begin{align}
R(K) & = O \Big( \sqrt{ d^3 H^4 K \log^5(K) \iota }  \\
& \qquad +  \sqrt{ d } H^2 K \epsmis \sqrt{ \log^3(K) \iota }  \Big)  ,
\end{align}
where $\iota = \log(3dH/\delta)$. Furthermore, Algorithm \ref{algMain}'s space complexity is $O ( d^2 H \log(K) + d H K |\A| )$ and its per-episode time complexity is $O ( d^2 H K |\A| \log (K) )$.
\end{cor}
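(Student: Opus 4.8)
The plan is to derive Corollary \ref{corUnknown} purely by substituting $\epstol = d/\sqrt{K}$ into the three regret terms and the two complexity bounds of Theorem \ref{thmMain}, then simplifying. The key observation driving every simplification is that with this choice of $\epstol$ we have $d/\epstol = \sqrt{K}$. Consequently, every occurrence of $\log(d/\epstol)$ becomes $\log\sqrt{K} = \tfrac12\log K = \Theta(\log K)$, and $(d/\epstol)^2 = K$, so that $\min\{(d/\epstol)^2, K\} = K$. I would carry these two identities through each line of the theorem.

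For the regret, substituting into the first term of Theorem \ref{thmMain} immediately gives $\sqrt{d^3 H^4 K \log^5 K\,\iota}$ up to absolute constants, which is the first term of the corollary. The second term $\sqrt{H^3 K\iota}$ of the theorem is dominated by this leading term (since $d^3 H\log^5 K \geq 1$ when $d,H\geq 1$), so it is absorbed. For the third term I would use $\max\{\epsmis,\epstol\} \leq \epsmis + \epstol = \epsmis + d/\sqrt{K}$ and split: the $\epsmis$ piece yields $\sqrt{d}\,H^2 K\epsmis\sqrt{\log^3 K\,\iota}$, which is exactly the second term of the corollary, while the $d/\sqrt{K}$ piece yields $d^{3/2}H^2\sqrt{K}\sqrt{\log^3 K\,\iota}$. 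This last quantity is again dominated by the leading term, since it differs only in carrying $\log^3 K$ in place of $\log^5 K$ under the square root. Collecting the surviving terms reproduces the stated regret bound.

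For the complexity, the same substitution gives $d^2\log(d/\epstol)/\epstol^2 = K\log\sqrt{K} = \Theta(K\log K) \geq K$, so the inner minimum in both \eqref{eqThmSpace} and \eqref{eqThmTime} equals $K$. Plugging $\min\{\cdot\} = K$ and $\log(d/\epstol) = \Theta(\log K)$ into the space bound yields $O(d^2 H\log K + dHK|\A|)$, and into the time bound yields $O(d^2 HK|\A|\log K)$, matching the corollary.

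Since every step is an algebraic substitution, there is no genuine obstacle; the only point requiring care is to verify that the two dropped quantities---namely $\sqrt{H^3 K\iota}$ and the cross term $d^{3/2}H^2\sqrt{K}\sqrt{\log^3 K\,\iota}$ arising from the $\epstol$ half of $\max\{\epsmis,\epstol\}$---are each bounded by the retained leading term $\sqrt{d^3 H^4 K\log^5 K\,\iota}$. Both comparisons hold because $d,H\geq 1$ and $\log K\geq 1$ (for $K$ large enough that the asymptotic statement is meaningful), so each reduces to checking that the relevant exponents on $d$, $H$, and $\log K$ are nonnegative.
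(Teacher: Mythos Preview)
Your proposal is correct and follows exactly the approach intended by the paper: the corollary is stated immediately after Theorem \ref{thmMain} as a direct substitution of $\epstol = d/\sqrt{K}$, with no separate proof given. Your identification of the two key identities $d/\epstol = \sqrt{K}$ and $(d/\epstol)^2 = K$, together with the absorption of the $\sqrt{H^3K\iota}$ term and the $\epstol$-half of $\max\{\epsmis,\epstol\}$ into the leading term, is precisely the routine simplification the paper leaves to the reader.
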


Finally, given knowledge of $\epsmis$ as in \cite{jin2020provably,zanette2020frequentist}, we can set $\epsmis = \epstol$ to ensure \ref{itRegret} and \ref{itComp} hold with the following regret bound.
\begin{cor}[Known $\epsmis$, large $K$] \label{corKnown}
If Assumption \ref{assLin} holds and we run Algorithm \ref{algMain} with inputs $\epstol = \epsmis$ and $\delta \in (0,1)$, then with probability at least $1-\delta$,
\begin{align}
R(K) & = O \Big( \sqrt{ d^3 H^4  \min \{ (d / \epsmis )^2 , K \}  \log^5 ( \tfrac{d}{\epsmis} ) \iota } \\
& \qquad + \sqrt{H^3 K \iota} +  \sqrt{ d } H^2 K \epsmis \sqrt{  \log^3 ( \tfrac{d}{\epsmis} ) \iota }  \Big) ,
\end{align}
where $\iota = \log(3dH/\delta)$. Furthermore, Algorithm \ref{algMain}'s space and per-episode time complexities are bounded by \eqref{eqThmSpace} and \eqref{eqThmTime}, respectively, with $\epstol$ replaced by $\epsmis$.
\end{cor}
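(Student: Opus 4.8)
The plan is to obtain Corollary \ref{corKnown} as an immediate specialization of Theorem \ref{thmMain}, taking the free input parameter to be $\epstol = \epsmis$. The first thing I would check is that this choice is admissible, since the hypothesis of Theorem \ref{thmMain} requires $\epstol \in (0,1)$. As rewards lie in $[0,1]$ and misspecification is measured in absolute value and total variation, one may assume $\epsmis \in (0,1)$ without loss of generality: if $\epsmis \geq 1$ the claimed bound is vacuous (the regret of any policy is $O(HK)$ for bounded rewards), and the edge case $\epsmis = 0$ reduces to an exact LMDP, where any small positive $\epstol$ may be substituted instead. Under $\epsmis \in (0,1)$ the quantities $L = \ceil{ \log_2(\sqrt{d}/\epstol) }$, $\alpha$, and $\epsrnd$ in Algorithm \ref{algMain} are well-defined, so the algorithm and the entire proof of Theorem \ref{thmMain} apply verbatim.

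The regret bound then follows by direct substitution. The key point is that Algorithm \ref{algMain} never reads $\epsmis$; it depends only on $\epstol$. Hence the regret guarantee of Theorem \ref{thmMain} holds simultaneously for the chosen input $\epstol$ and the true misspecification level $\epsmis$, and setting $\epstol = \epsmis$ is purely an algebraic simplification of that bound rather than a change to the algorithm. The only simplification occurs in the third term, where $\max \{ \epsmis , \epstol \} = \epsmis$; every remaining occurrence of $\epstol$ is replaced by $\epsmis$, reproducing the stated regret bound exactly.

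For the complexity claims, I would substitute $\epstol = \epsmis$ into \eqref{eqThmSpace} and \eqref{eqThmTime}, which turns the saturation term $\min \{ d^2 \log(d/\epstol)/\epstol^2 , K \}$ into $\min \{ d^2 \log(d/\epsmis)/\epsmis^2 , K \}$; this is precisely the claimed bound. For large $K$ the minimum is attained at the $\epsmis$-dependent value, so both the space and per-episode time complexities become independent of $K$, which is exactly property \ref{itComp}. There is essentially no obstacle here: all of the nontrivial work is contained in Theorem \ref{thmMain}, and the corollary is a one-line specialization. The only conceptual point worth stating explicitly is the one above -- because the algorithm is parameter free in $\epsmis$, the bound of Theorem \ref{thmMain} is valid for arbitrary true misspecification, and we are simply free to instantiate the tunable parameter $\epstol$ at the value $\epsmis$, which in this setting we assume is known.
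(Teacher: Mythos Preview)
Your proposal is correct and matches the paper's approach: the corollary is stated as an immediate specialization of Theorem \ref{thmMain} obtained by setting $\epstol = \epsmis$, and the paper provides no separate proof beyond the sentence introducing the corollary. Your additional care about the admissibility condition $\epstol \in (0,1)$ and the edge cases is more explicit than the paper, but the substance is identical.
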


\begin{rem}[Comparison to prior work]
Up to log factors, the ``linear'' terms in the corollaries improve existing results by a $\sqrt{d}$ factor; the sublinear term in Corollary \ref{corUnknown} matches the best known, while Corollary \ref{corKnown} improves it for $K \geq (d / \epstol )^2$ (see Table \ref{tabSummary}). The complexity bounds in Corollary \ref{corUnknown} match \cite{jin2020provably,zanette2020frequentist}, and Corollary \ref{corKnown} improves these bounds (also for $K \geq (d / \epstol )^2$).
\end{rem}

\begin{rem}[Linear term] \label{remLinear}
Again neglecting logs, the linear terms in the corollaries are $\sqrt{d} H^2 K \epsmis$. \cite{lattimore2020learning} shows the $\sqrt{d}$ ``blow-up'' is unavoidable and conjectures that for $\gamma$-discounted MDPs, $(1-\gamma)^{-2}$ dependence (the analogue of $H^2$ in the discounted setting) is optimal.
\end{rem}

\begin{rem}[Choice of $L$]
When $\epstol = \frac{d}{\sqrt{K}}$ in Corollary \ref{corUnknown}, $L = \log_2(\frac{K}{\sqrt{d}})$ in Algorithm \ref{algSub}, which is the choice used in \cite{takemura2021parameter}. When $\epstol = \epsmis$ in Corollary \ref{corKnown}, $L = \log_2 ( \frac{\sqrt{d}}{\epsmis} )$, which roughly means the ensemble $\{ \texttt{EXPL3}(2^{-l}) \}_{l=1}^L$ explores until the lowest noise level matches the misspecification bias.
\end{rem}

\begin{rem}[MLCB]
When $H=1$ and $L = \log_2(\frac{K}{\sqrt{d}})$, Corollary \ref{corUnknown} can be sharpened to $d \sqrt{K} + \sqrt{d} K \epsmis$ (see Remark \ref{remMLCB} in Appendix \ref{appMainProof}), which matches \cite{takemura2021parameter}'s result when $|\A|$ is large.
\end{rem}

We prove Theorem \ref{thmMain} in Appendix \ref{appMainProof}. At a high level, the proof generalizes that of \cite{takemura2021parameter}'s Theorem 1 from MLCBs to MLMDPs so is structurally similar. However, there are some key differences and challenges that are unique to the MLMDP setting:
\begin{itemize}[leftmargin=*,align=left,itemsep=0pt,topsep=0pt]

\item As discussed in Remark \ref{remRounding}, we use rounding to cope with dependent noise that arises when $H > 1$; see Lemma \ref{lemConcentration} for details. The downside is that rounding introduces additional errors. However, we generalize the proof in such a way that we can define an ``effective misspecification'' that accounts for both rounding error and misspecification, and that plays the same role the latter alone plays in \cite{takemura2021parameter} (see discussion preceding Lemma \ref{lemEstError}).

\item As discussed in Remark \ref{remOffPolicy}, we need to ensure the backward induction uses optimistic estimates $V_h^k(s)$ despite the fact that the algorithm occasionally takes exploratory (non-optimistic) actions. We show in Lemmas \ref{lemEstErrorLearner} and \ref{lemOptimism} that our definition of $V_h^k(s)$ in Algorithm \ref{algSub} judiciously balances two competing forces: estimating the value of the algorithm's policy (i.e., ensuring $V_h^k \approx V_h^{\pi_k}$) and remaining optimistic with respect to the optimal policy (i.e., $V_h^\star \approx V_h^k$), which together imply low regret (i.e., $V_h^\star \approx V_h^{\pi_k}$). 

\item In general, we are more careful with log terms that \cite{takemura2021parameter} simply bound by $\log K$ (see, e.g., discussion before Claim \ref{clmPsiBound}), as this leads to super-linear bounds that our analysis avoids.

\item Along these lines, the proof shows that at each phase of Algorithm \ref{algSub}, the misspecification may cause all $\epsmis$-suboptimal actions to be eliminated (see Claim \ref{clmPhaseDecay}, which generalizes \cite{takemura2021parameter}'s Lemma 4). Hence, after $\Omega(L)$ phases, Algorithm \ref{algSub} may recommend $\Omega(L\epsmis)$-suboptimal actions, which leads to super-linear regret bounds when $L$ grows with $K$. This is why we need to choose $\epstol$ (and subsequently $L$) independent of $K$ in order to achieve \ref{itRegret} in Theorem \ref{thmMain} and Corollary \ref{corKnown}.

\item Finally, when $H=1$, \cite{takemura2021parameter} separately bounds regret when (1) $l_1^k(s_1^k) = 1$, (2) $l_1^k(s_1^k) \in \{2,\ldots,L\}$, and (3) $l_1^k(s_1^k) = L+1$ (though they do not use this notation). For general $H \in \N$, we have an entire sequence $\{ l_h^k(s_h^k) \}_{h=1}^H$, which renders this case-based analysis intractable. Instead, we streamline their approach by showing (1) never occurs for our parameter choices (see Corollary \ref{corExploratory}) and by introducing the $l_h^k$ notation to treat (2) and (3) in a more unified manner (see, e.g., Claim \ref{clmEstErrorLearner}).

\end{itemize}

\section{Other results} \label{secLinUCB}

Finally, we return to discuss \texttt{Lin-UCB}. Recall we assume $\eta^k$ is zero-mean, $r(s^k,a^k)$ and $r^k(s^k,a^k) = r(s^k,a^k)+\eta^k$ lie in $[0,1]$ (so $\eta^k \in [-1,1]$), $| r(s,a) - \phi(s,a)^\trans \theta | \leq \epsmis$, and $\| \phi(s,a) \|_2 \leq 1$; we strengthen the assumption $\| \theta \|_2 \leq \sqrt{d}$ to $\| \theta \|_2 \leq 1$ in this section.\footnote{To prove \eqref{eqLattimoreRegret}, \cite{lattimore2020learning} assumes $1$-subgaussian noise (see their Section 5), $\| \phi(s,a) \|_2 \leq 1$, and $|\phi(s,a)^\trans \theta| \leq 1$ (see their Appendix E), which is similar.} For this setting, the regret definition \eqref{eqRegret} simplifies to
\begin{equation}
R(K) = \sum_{k=1}^K ( r(s^k,a_\star^k) - r(s^k,a^k) ) ,
\end{equation}
where $a^k_\star = \argmax_{a \in \A} r(s^k,a)$ and $a^k$ is the chosen action. In our notation, \texttt{Lin-UCB} chooses
\begin{equation} \label{eqLinUcb}
a^k = \argmax_{a \in \A} \left( \phi(s^k,a)^\trans w^k + \alpha \| \phi(s^k,a) \|_{ ( \Lambda^k )^{-1} } \right) ,
\end{equation}
where $\Lambda^k = \lambda I + \sum_{\tau=1}^{k-1} \phi(s^\tau,a^\tau) \phi(s^\tau,a^\tau)^\trans$ and $w^k = ( \Lambda^k )^{-1} \sum_{\tau=1}^{k-1} \phi(s^\tau,a^\tau) r^\tau (s^\tau,a^\tau)$. \cite{lattimore2020learning} (building upon \cite{jin2020provably}) show that choosing $\delta \in (0,1)$, $\lambda = 1$, and $\alpha = O ( \sqrt{ d \log (K/\delta) } +  \sqrt{K} \epsmis )$ ensures that with probability at least $1-\delta$,\footnote{The lemma actually bounds $\E[R(K)]$ for a refined algorithm, but \eqref{eqLattimoreRegret} can be similarly proven for \eqref{eqLinUcb}.}
\begin{equation} \label{eqLattimoreRegret}
R(K) = O \left( d \sqrt{K} \log (K/\delta) + \sqrt{d} K \epsmis \sqrt{\log K} \right) 
\end{equation}
(see their Lemma E.1). While $\sqrt{d} K \epsmis  \sqrt{\log K}$ is optimal up to the log term, it violates \ref{itRegret}. We show this can be remedied (and, when $K \gg \epsmis^{-2}$, \eqref{eqLattimoreRegret} improved) by choosing a different regularizer $\lambda$.
\begin{prop} \label{propLinUcb}
Let $\delta \in (0,1)$, $\lambda = 1 +  K \epsmis^2$, and $\alpha = 1 + \sqrt{2d \log ( (\lambda+K)/(\lambda \delta) ) } + 2  \sqrt{K} \epsmis$. Under the assumptions of Section \ref{secLinUCB}, with probability at least $1-\delta$, \texttt{Lin-UCB} \eqref{eqLinUcb} satisfies
\begin{align} \label{eqOurLinUcbRegret}
R(K) & = O \Big( d \sqrt{K} \log( \min \{ K , \epsmis^{-2} \} /\delta) \\
& \qquad + \sqrt{d} K \epsmis \sqrt{ \log ( \min \{ K , \epsmis^{-2} \} ) } \Big) .
\end{align}
\end{prop}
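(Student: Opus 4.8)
The plan is to follow the standard misspecified \texttt{Lin-UCB} analysis but to exploit the inflated regularizer $\lambda = 1 + K\epsmis^2$ in two places: first to control the confidence width via a self-normalized concentration bound, and second to sharpen the elliptic-potential (log-determinant) bound. The key observation is that the misspecification contributes an error term that, when accumulated over $K$ rounds, behaves like $\sqrt{K}\,\epsmis$ times the confidence radius; choosing $\lambda$ proportional to $K\epsmis^2$ balances this against the regularization floor and caps the relevant log terms at $\log\min\{K,\epsmis^{-2}\}$ rather than $\log K$.

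First I would establish the confidence event. Write $r^\tau = \phi(s^\tau,a^\tau)^\trans\theta + b^\tau + \eta^\tau$, where $|b^\tau| \leq \epsmis$ is the misspecification bias and $\eta^\tau$ is the bounded (hence subgaussian) noise. Decomposing $w^k - \theta$ via the normal equations gives three pieces: a regularization term $-\lambda(\Lambda^k)^{-1}\theta$, a bias term $(\Lambda^k)^{-1}\sum_\tau \phi(s^\tau,a^\tau) b^\tau$, and a noise term $(\Lambda^k)^{-1}\sum_\tau \phi(s^\tau,a^\tau)\eta^\tau$. For any fixed $a$, bounding $|\phi(s^k,a)^\trans(w^k-\theta)|$ by Cauchy–Schwarz in the $(\Lambda^k)^{-1}$ norm, the regularization piece contributes $\sqrt{\lambda}\|\theta\|_2 \leq \sqrt{\lambda}$ (here the strengthened $\|\theta\|_2\leq 1$ is used); the noise piece is handled by the self-normalized martingale bound of \cite{abbasi2011improved}, giving $\sqrt{2d\log((\lambda+K)/(\lambda\delta))}$; and the bias piece I would bound by $\|\sum_\tau \phi b^\tau\|_{(\Lambda^k)^{-1}} \leq \epsmis\sqrt{\sum_\tau \|\phi(s^\tau,a^\tau)\|_{(\Lambda^k)^{-1}}^2}\cdot\sqrt{k}$ — more carefully, the cleanest route is to bound the bias contribution in operator form and use $\sum_\tau\|\phi^\tau\|_{(\Lambda^k)^{-1}}^2 \leq d$ together with $\lambda \geq K\epsmis^2$ to show it is $O(\sqrt{K}\epsmis)$. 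This yields $\|\phi(s^k,a)\|_{(\Lambda^k)^{-1}}^{-1}$-scaled width matching the stated $\alpha$, so the chosen $\alpha$ makes \eqref{eqLinUcb} a valid high-probability upper confidence bound on $\phi(s^k,a)^\trans\theta$, and hence (up to an additive $\epsmis$ from misspecification) on $r(s^k,a)$.

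Given optimism, the per-round regret is bounded as usual by $2\alpha\|\phi(s^k,a^k)\|_{(\Lambda^k)^{-1}} + O(\epsmis)$. Summing and applying Cauchy–Schwarz, $\sum_k \|\phi(s^k,a^k)\|_{(\Lambda^k)^{-1}} \leq \sqrt{K}\sqrt{\sum_k \|\phi(s^k,a^k)\|_{(\Lambda^k)^{-1}}^2}$, and the elliptic-potential lemma gives $\sum_k \|\phi(s^k,a^k)\|_{(\Lambda^k)^{-1}}^2 = O(d\log(\det\Lambda^{K+1}/\det(\lambda I))) = O(d\log(1 + K/(d\lambda)))$. The crucial point is that with $\lambda \geq K\epsmis^2$ we get $K/\lambda \leq \epsmis^{-2}$, so this log is $O(d\log(1+\min\{K,\epsmis^{-2}\}/d))$, collapsing to the $\log\min\{K,\epsmis^{-2}\}$ appearing in \eqref{eqOurLinUcbRegret}. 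Multiplying the $O(\alpha\sqrt{dK\log(\cdots)})$ leading term by the two components of $\alpha$ (its $\sqrt{d\log(\cdots)}$ part and its $\sqrt{K}\epsmis$ part) produces the two summands $d\sqrt{K}\log(\min\{K,\epsmis^{-2}\}/\delta)$ and $\sqrt{d}K\epsmis\sqrt{\log\min\{K,\epsmis^{-2}\}}$ respectively, while the $O(K\epsmis)$ accumulation of the additive misspecification terms is dominated by the second summand.

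The main obstacle I anticipate is controlling the bias term in the confidence width tightly enough that its contribution is $O(\sqrt{K}\epsmis)$ rather than the naive $O(\sqrt{dK}\epsmis)$ or worse, and verifying that the inflated $\lambda$ does not spoil optimism by inflating the $\sqrt{\lambda}\|\theta\|_2 = O(\sqrt{K}\epsmis)$ regularization bias beyond what the $\sqrt{K}\epsmis$ budget in $\alpha$ can absorb. In other words, the delicate balance is that $\lambda$ must be large enough to shrink the log term to $\log\epsmis^{-2}$ yet small enough that the induced regularization error stays within the optimism budget; the choice $\lambda = 1 + K\epsmis^2$ is exactly the point where both the regularization bias $\sqrt{\lambda}$ and the genuine misspecification bias are simultaneously $\Theta(\sqrt{K}\epsmis)$, which is why it is optimal. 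I would close by checking that when $\epsmis = 0$ the bound degrades gracefully to the standard $d\sqrt{K}\log(K/\delta)$ rate (with $\lambda = 1$), confirming consistency with \eqref{eqLattimoreRegret}.
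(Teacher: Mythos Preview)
Your overall architecture matches the paper's proof exactly: decompose $w^k-\theta$ into regularization, noise, and bias pieces; show each is at most a multiple of $\|\phi(s,a)\|_{(\Lambda^k)^{-1}}$; conclude optimism with width $\alpha$; sum via Cauchy--Schwarz and the elliptic-potential lemma; and exploit $(\lambda+K)/\lambda \leq 1+\min\{K,\epsmis^{-2}\}$ to cap the logarithms. That part is fine.

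The gap is precisely where you suspected it would be: the bias term. Your proposed route bounds $\bigl\|\sum_\tau \phi(s^\tau,a^\tau) b^\tau\bigr\|_{(\Lambda^k)^{-1}}$ first (by triangle inequality and Cauchy--Schwarz) and then uses $\sum_\tau \|\phi(s^\tau,a^\tau)\|_{(\Lambda^k)^{-1}}^2 \leq d$. This yields $\epsmis\sqrt{dK}$, not $\epsmis\sqrt{K}$, and the appeal to $\lambda \geq K\epsmis^2$ does not help here (that choice of $\lambda$ is used only for the regularization piece and the log-determinant, not the bias piece). With $\epsmis\sqrt{dK}$ in $\alpha$, the final linear term becomes $dK\epsmis\sqrt{\log(\cdot)}$ rather than the claimed $\sqrt{d}K\epsmis\sqrt{\log(\cdot)}$, so the proposition as stated would not follow.

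The paper avoids the extra $\sqrt{d}$ by \emph{not} passing through the norm $\|\sum_\tau \phi^\tau b^\tau\|_{(\Lambda^k)^{-1}}$. Instead it bounds the scalar directly:
\begin{align}
\bigl|\phi(s,a)^\trans w_{k,3}\bigr| &\leq \epsmis \sum_{\tau} \bigl|\phi(s,a)^\trans(\Lambda^k)^{-1}\phi(s^\tau,a^\tau)\bigr| \\
&\leq \epsmis\sqrt{k}\,\sqrt{\phi(s,a)^\trans(\Lambda^k)^{-1}\Bigl(\textstyle\sum_\tau \phi(s^\tau,a^\tau)\phi(s^\tau,a^\tau)^\trans\Bigr)(\Lambda^k)^{-1}\phi(s,a)} \\
&\leq \epsmis\sqrt{K}\,\|\phi(s,a)\|_{(\Lambda^k)^{-1}},
\end{align}
where the last step uses $\sum_\tau \phi(s^\tau,a^\tau)\phi(s^\tau,a^\tau)^\trans = \Lambda^k - \lambda I \preceq \Lambda^k$. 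This is exactly the $z_3$ computation from the paper's Claim on least-squares error, and it is what lets $\alpha$ carry only a $2\sqrt{K}\epsmis$ term. Once you substitute this argument for your norm-first bound, the remainder of your sketch goes through verbatim.
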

\begin{proof}[Proof idea]
The proof is standard; see Appendix \ref{appLinUcbProof} for a sketch. The key step is the sum-of-bonuses calculation, which (ignoring lower order terms) shows
\begin{align}
& \alpha \sum_{k=1}^K \| \phi(s^k,a^k) \|_{(\Lambda^k)^{-1}} = O \left( \epsmis K \sqrt{ d \log ( K  / \lambda )  } \right) .
\end{align}
Hence, when $\lambda = O(1)$ and $\lambda = O( K \epsmis^2)$, respectively, we obtain a super-linear term like \eqref{eqLattimoreRegret} and a linear term like \eqref{eqOurLinUcbRegret}, respectively.
\end{proof}

\begin{rem}[Intuition] \label{remRegIntuition}
Consider the case $\S = \{1\}$, $\A = \{1,\ldots,d\}$, and $\phi(1,i) = e_i$ (the $i$-th standard basis vector). For $\lambda = O(1)$ and $\alpha = \Omega ( \sqrt{K} \epsmis )$ as in \cite{lattimore2020learning,jin2020provably}, we have $( \alpha \| \phi(1,i) \|_{(\Lambda^k)^{-1}} )^2 = O (  K \epsmis^2 / N_k(i)  )$, where $N_k(i)$ is the number of times action $i$ was played in the first $k$ episodes. By \eqref{eqLinUcb}, this means the algorithm needs to explore uniformly for the first $\Omega(K \epsmis^2)$ steps to drive the exploration bonuses down to $O(1)$. In contrast, $( \alpha \| \phi(1,i) \|_{(\Lambda^k)^{-1}} )^2 = O (  K \epsmis^2 / ( K \epsmis^2 + N_k(i) )) = O(1)$ holds right away (i.e., for $k=1$) with our $\lambda$.
\end{rem}

\section{Conclusion}

In this work, we proposed the \texttt{Sup-LSVI-UCB} algorithm and showed it is the first to achieve \ref{itRegret}, \ref{itComp}, and \ref{itPar}. Our algorithm is motivated by a new interpretation of \texttt{Sup-Lin-UCB}, which also helps explain the results of \cite{takemura2021parameter} intuitively. Additionally, we improved existing regret bounds for MLMDPs when only \ref{itPar}, or \ref{itRegret} and \ref{itComp}, are required. We also showed \texttt{Lin-UCB} can be improved in terms of \ref{itRegret} when $\epsmis$ is known, which should extend to \texttt{LSVI-UCB}. 

\textit{Broader societal impact:} While this work is theoretical, it is motivated by the very practical issue of model misspecification. In practice, automated decision making algorithms like ours should be actively monitored to mitigate the risk of biased decisions (which may arise from biased training data, for example).

\section*{Acknowledgements}

This work was partially supported by ONR Grant N00014-19-1-2566, ARO Grant ARO W911NF-19-1-0379, NSF/USDA Grant AG 2018-67007-28379, and NSF Grants 1910112, 2019844, 1704970, and 1934986.

\bibliography{references}

\newpage \appendix \allowdisplaybreaks \onecolumn

\section{\cite{takemura2021parameter}'s algorithm} \label{appCompareSLU}

Algorithm \ref{algTakemura} is the \texttt{Sup-Lin-UCB} variant from \cite{takemura2021parameter} (in our notation). The key differences from Algorithm \ref{algSLU} are twofold. First, at each phase $l$, Algorithm \ref{algTakemura} either chooses an optimistic action (Line \ref{lnTakemuraExploit}), eliminates suboptimal actions (Line \ref{lnTakemuraEliminate}), or chooses an exploratory action (Line \ref{lnTakemuraExplore}). In contrast, Algorithm \ref{algSLU} either explores (Line \ref{lnSLUexplore}) or eliminates (Line \ref{lnSLUeliminate}) for phases $l < L$ and either explores (Line \ref{lnSLUexplore}) or exploits (Line \ref{lnSLUexploit}) in phase $l = L$. Second, Algorithm \ref{algTakemura} uses \texttt{Lin-UCB}-style exploration bonuses in Lines \ref{lnTakemuraExploit} and \ref{lnTakemuraEliminate}, which the corresponding lines of Algorithm \ref{algSLU} do not. In both cases, we made these changes to simplify the algorithm and unify the presentation with \texttt{EXPL3}, and we found this does not worsen regret in an order sense.

\IncMargin{1em}
\begin{algorithm} \label{algTakemura}

\caption{\texttt{Sup-Lin-UCB-Var}}

$\Psi_l^0 = \emptyset\ \forall\ l \in [L]$

\For{episode $k = 1, \ldots , K$}{

Observe $s^k \in \S$, set $l=1$ and $\A_l^k = \A$

\Repeat{$a^k$ is chosen}{

$\Lambda_l^k = I + \sum_{ \tau \in \Psi_l^{k-1} } \phi(s^\tau,a^\tau) \phi(s^\tau,a^\tau)^\trans$, $w_l^k = ( \Lambda_l^k )^{-1} \sum_{ \tau \in \Psi_l^{k-1} } \phi(s^\tau,a^\tau) r^\tau(s^\tau,a^\tau)$

\uIf{$\max_{a \in \A_l^k} \|\phi(s^k,a) \|_{(\Lambda_l^k)^{-1}} \leq \sqrt{d/K}$}{

$a^k = \argmax_{a \in \A_l^k} ( \phi(s^k,a)^\trans w_l^k + \alpha \| \phi(s^k,a) \|_{(\Lambda_l^k)^{-1}} )$, $\Psi_{l'}^k = \Psi_{l'}^{k-1}\ \forall\ l'$ \label{lnTakemuraExploit}

}
\uElseIf{$\max_{a \in \A_l^k} \|\phi(s^k,a) \|_{(\Lambda_l^k)^{-1}} \leq 2^{-l}$}{

$\A_{l+1}^k = \{ a \in \A_l^k : \phi(s^k,a)^\trans w_l^k + \alpha \|\phi(s^k,a) \|_{(\Lambda_l^k)^{-1}} \geq \max_{a' \in \A_l^k} ( \phi(s^k,a')^\trans w_l^k + \alpha \|\phi(s^k,a') \|_{(\Lambda_l^k)^{-1}} ) - 2^{1-l} \alpha  \}$  \label{lnTakemuraEliminate}

}
\uElse{

$a^k \in \{ a \in \A_l^k : \|\phi(s^k,a) \|_{(\Lambda_l^k)^{-1}} > 2^{-l} \}$, $\Psi_l^k = \Psi_l^{k-1} \cup \{k\}$, $\Psi_{l'}^k = \Psi_{l'}^{k-1}\ \forall\ l' \neq l$  \label{lnTakemuraExplore}

}

}

Play $a^k$, observe $r^k(s^k,a^k)$

}

\end{algorithm}
\DecMargin{1em}

\section{Theorem \ref{thmMain} proof} \label{appMainProof}

In this appendix, we prove Theorem \ref{thmMain}. We begin with some basic inequalities in Appendix \ref{appSimple}. We then prove our main concentration result in Appendix \ref{appConcentration}. Next, Appendix \ref{appEstimation} provides a general result for the $Q$-function estimates in Algorithm \ref{algSub}. Using this result, Appendices \ref{appLearner} and \ref{appOptimal} bound the differences $V_h^k(s)  - V_h^{\pi_k}(s)$ and $V_h^\star(s) - V_h^k(s)$, respectively. This yields a bound on the episode $k$ regret $V_1^\star(s_1^k) - V_1^{\pi_k}(s_1^k)$, which we use in Appendix \ref{appRegret} to prove the regret guarantee. Along the way, we defer some proof details to Appendix \ref{appMainProofDetails}, which also contains the complexity analysis.

\subsection{Simple results} \label{appSimple}

We first bound the error that arises from the rounding performed in Algorithm \ref{algMain}.
\begin{clm}[Rounding error] \label{clmTildeEpsrnd}
For any $k \in [K]$, $h \in [H]$, $l \in [L]$, $s \in \S$, and $a \in \A$, we have
\begin{equation}
| \phi(s,a)^\trans ( w_{h,l}^k - \tilde{w}_{h,l}^k ) | \leq \sqrt{d} \epsrnd , \quad \left| \| \phi(s,a) \|_{ (\Lambda_{h,l}^k )^{-1} } - \| \phi(s,a) \|_{ (\tilde{\Lambda}_{h,l}^k )^{-1} } \right| \leq \sqrt{ d \epsrnd }   .
\end{equation}
\end{clm}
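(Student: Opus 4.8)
The plan is to first pin down the per-coordinate effect of the elementwise ceiling rounding in Lines \ref{lnLambdaR} and \ref{lnWeightVectorR}, and then treat the two inequalities separately. Since $\epsrnd \ceil{x/\epsrnd} \in [x, x+\epsrnd)$ for every real $x$, each entry of $\tilde{w}_{h,l}^k$ (respectively of $(\tilde{\Lambda}_{h,l}^k)^{-1}$) exceeds the corresponding entry of $w_{h,l}^k$ (respectively of $(\Lambda_{h,l}^k)^{-1}$) by an amount lying in $[0,\epsrnd)$. In particular, $\| w_{h,l}^k - \tilde{w}_{h,l}^k \|_\infty \leq \epsrnd$, and every entry of $(\tilde{\Lambda}_{h,l}^k)^{-1} - (\Lambda_{h,l}^k)^{-1}$ has absolute value at most $\epsrnd$. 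These two entrywise facts are all the information about the rounding that I will need.

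For the first inequality, I would pass from the $\ell_\infty$ to the $\ell_2$ bound, $\| w_{h,l}^k - \tilde{w}_{h,l}^k \|_2 \leq \sqrt{d}\, \| w_{h,l}^k - \tilde{w}_{h,l}^k \|_\infty \leq \sqrt{d}\,\epsrnd$ (using that the vector has $d$ coordinates), and then apply Cauchy--Schwarz together with the feature bound $\| \phi(s,a) \|_2 \leq 1$ from Assumption \ref{assLin}, giving $| \phi(s,a)^\trans ( w_{h,l}^k - \tilde{w}_{h,l}^k ) | \leq \| \phi(s,a) \|_2 \, \| w_{h,l}^k - \tilde{w}_{h,l}^k \|_2 \leq \sqrt{d}\,\epsrnd$. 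For the second inequality, abbreviate $A = (\Lambda_{h,l}^k)^{-1}$ and $\tilde{A} = (\tilde{\Lambda}_{h,l}^k)^{-1}$. Expanding the quadratic form and using the entrywise bound $|(\tilde{A}-A)_{ij}| \leq \epsrnd$ gives $| \phi(s,a)^\trans ( \tilde{A} - A ) \phi(s,a) | \leq \epsrnd \big( \sum_i |\phi_i(s,a)| \big)^2 = \epsrnd \| \phi(s,a) \|_1^2 \leq d\,\epsrnd$, where the last step uses $\| \phi(s,a) \|_1 \leq \sqrt{d}\, \| \phi(s,a) \|_2 \leq \sqrt{d}$. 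Writing $x = \phi(s,a)^\trans A \phi(s,a) \geq 0$ (as $A$ is positive definite, since $\Lambda_{h,l}^k \succeq 16 I$) and $y = \phi(s,a)^\trans \tilde{A} \phi(s,a)$, the two induced ``norms'' are exactly $\sqrt{x}$ and $\sqrt{|y|}$; by the reverse triangle inequality $|x - |y|| \leq |x - y| \leq d\,\epsrnd$, and then subadditivity of the square root, $|\sqrt{a} - \sqrt{b}| \leq \sqrt{|a-b|}$ for $a,b \geq 0$ (applied with $a = x$, $b = |y|$), yields $\big| \| \phi(s,a) \|_{(\Lambda_{h,l}^k)^{-1}} - \| \phi(s,a) \|_{(\tilde{\Lambda}_{h,l}^k)^{-1}} \big| \leq \sqrt{d\,\epsrnd}$.

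The only delicate point is the second inequality: after rounding, $(\tilde{\Lambda}_{h,l}^k)^{-1}$ need not be positive semidefinite, so its quadratic form $y$ may be negative, which is precisely why the paper's definition of the induced ``norm'' inserts an absolute value. This forces me to compare $\sqrt{x}$ with $\sqrt{|y|}$ rather than bounding $|\sqrt{x} - \sqrt{y}|$ directly, and the reverse-triangle step $|x - |y|| \leq |x - y|$ is what lets the entrywise quadratic-form bound $d\,\epsrnd$ carry through the square root. Everything else is a routine combination of Cauchy--Schwarz, the $\ell_1$--$\ell_2$ norm comparison, and square-root subadditivity, so I expect no further obstacles.
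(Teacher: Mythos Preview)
Your proposal is correct and follows essentially the same argument as the paper. The only cosmetic difference is in the first inequality: you bound $\|w_{h,l}^k-\tilde{w}_{h,l}^k\|_2\le\sqrt{d}\,\epsrnd$ and then apply Cauchy--Schwarz with $\|\phi(s,a)\|_2\le1$, whereas the paper applies H\"older in the form $|\phi(s,a)^\trans(w_{h,l}^k-\tilde{w}_{h,l}^k)|\le\|\phi(s,a)\|_1\|w_{h,l}^k-\tilde{w}_{h,l}^k\|_\infty$ and then bounds $\|\phi(s,a)\|_1\le\sqrt{d}$; for the second inequality both arguments bound the quadratic-form difference by $d\,\epsrnd$ and then pass through the square root via subadditivity.
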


Next, we have the following bounds for the bonus terms.
\begin{clm}[Bonuses] \label{clmBonus}
For any $k \in [K]$, $h \in [H]$, $s \in \S$, $l \in [l_h^k(s)-1]$, and $a \in \A_{h,l}^k(s)$, we have $\| \phi(s,a) \|_{ ( \tilde{\Lambda}_{ h , l }^k )^{-1} } \leq 2^{-l}$ and $\| \phi(s,a) \|_{ ( \Lambda_{ h , l }^k )^{-1} } \leq 2^{1-l}$.
\end{clm}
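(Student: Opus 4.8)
The plan is to read off the first bound directly from the control flow of Algorithm \ref{algSub} and then transfer it to the unrounded matrix using the rounding estimate of Claim \ref{clmTildeEpsrnd}.

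First I would fix $k,h,s$ and recall that, by the control flow of Algorithm \ref{algSub}, $l_h^k(s)$ equals the phase at which an exploratory action is taken in Line \ref{lnSubExplore} (if any, in which case $l_h^k(s) \le L$) and equals $L+1$ otherwise. Consequently, for every $l \in [l_h^k(s)-1]$ the algorithm did \emph{not} take the exploratory branch at phase $l$: had it done so, it would have returned with $l_h^k(s) = l$, contradicting $l < l_h^k(s)$. Hence the test in Line \ref{lnSubExploreCond} failed at phase $l$, i.e. $\max_{a \in \A_{h,l}^k(s)} \| \phi(s,a) \|_{ ( \tilde{\Lambda}_{h,l}^k )^{-1} } \le 2^{-l}$, and since $a \in \A_{h,l}^k(s)$ this immediately yields the first claimed bound $\| \phi(s,a) \|_{ ( \tilde{\Lambda}_{h,l}^k )^{-1} } \le 2^{-l}$.

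For the second bound, I would invoke the rounding estimate from Claim \ref{clmTildeEpsrnd} to write $\| \phi(s,a) \|_{ ( \Lambda_{h,l}^k )^{-1} } \le \| \phi(s,a) \|_{ ( \tilde{\Lambda}_{h,l}^k )^{-1} } + \sqrt{d \epsrnd} \le 2^{-l} + \sqrt{d \epsrnd}$, so it remains only to check $\sqrt{d \epsrnd} \le 2^{-l}$, which makes the right side at most $2^{1-l}$. Substituting the algorithm's choice $\epsrnd = 2^{-4L}/d$ gives $\sqrt{d \epsrnd} = 2^{-2L}$, and since $l \le l_h^k(s) - 1 \le L$ we have $2^{-2L} \le 2^{-L} \le 2^{-l}$, completing the argument. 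There is essentially no obstacle here; the only points requiring care are the indexing (that the range $l \in [l_h^k(s)-1]$ is exactly the set of phases at which no exploration occurred, so the failed test in Line \ref{lnSubExploreCond} is available) and the one-line numerical inequality $\sqrt{d\epsrnd} \le 2^{-l}$ under the chosen value of $\epsrnd$.
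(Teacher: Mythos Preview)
Your proposal is correct and follows essentially the same approach as the paper: the first bound is read off from the failed test in Line \ref{lnSubExploreCond} of Algorithm \ref{algSub}, and the second bound follows from Claim \ref{clmTildeEpsrnd} together with the numerical check $\sqrt{d\epsrnd}\le 2^{-l}$ under $\epsrnd = 2^{-4L}/d$. The paper's version is simply more terse, stating the first bound ``holds by definition'' and writing the second as a single chain using $\epsrnd \le 2^{-2l}/d$.
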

\begin{proof}
The first bound holds by definition in Algorithm \ref{algSub}. For the second bound, we use the first, Claim \ref{clmTildeEpsrnd}, and $\epsrnd \leq 2^{-2l} / d$ in Algorithm \ref{algMain} to obtain $\| \phi(s,a) \|_{ ( \Lambda_{ h , l }^k )^{-1} } \leq \| \phi(s,a) \|_{ ( \tilde{\Lambda}_{ h , l  }^k )^{-1} } + 2^{-l} \leq 2^{-l} + 2^{-l} = 2^{1-l}$.
\end{proof}

Finally, we bound the cardinality of $\Psi_{h,l}^k$. This is an analogue of \cite{takemura2021parameter}'s Lemma 1, which shows $|\Psi_{h,l}^k| = O ( 4^l d \log ( K / d ) )$. With a more careful argument, we obtain a bound that is independent of $K$ (for any fixed $l$), which will be crucial in achieving \ref{itRegret} and \ref{itComp}.
\begin{clm}[Dataset bound] \label{clmPsiBound}
For any $k \in [K]$, $h \in [H]$, and $l \in [L]$, we have $|\Psi_{h,l}^k| \leq 40 \cdot 4^l d l \leq 2^{3l+5} d$.
\end{clm}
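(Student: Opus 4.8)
The plan is to apply a log-determinant (elliptical-potential) argument to the features collected during exploratory episodes, but crucially to \emph{avoid} substituting $|\Psi_{h,l}^k| \le K$ as \cite{takemura2021parameter} do. Instead I exploit that, for a fixed phase $l$, every exploratory feature has squared $(\Lambda_{h,l})^{-1}$-norm bounded \emph{below} by a constant of order $4^{-l}$; this makes the resulting inequality self-bounding, so it pins $|\Psi_{h,l}^k|$ below a threshold that does not involve $K$.

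First I would characterize $\Psi_{h,l}^k$. An episode $\tau$ is added to $\Psi_{h,l}$ only when Algorithm \ref{algSub} explores at phase $l$ (Line \ref{lnSubExplore}, which triggers Line \ref{lnPsiCase1A} of Algorithm \ref{algMain}), in which case the played feature $\phi_\tau := \phi(s_h^\tau,a_h^\tau)$ satisfies $\|\phi_\tau\|_{(\tilde{\Lambda}_{h,l}^\tau)^{-1}} > 2^{-l}$. Using Claim \ref{clmTildeEpsrnd} together with $\sqrt{d\epsrnd} = 2^{-2L} \le 2^{-l-1}$ (valid since $1 \le l \le L$, so $2L \ge l+1$), I convert this to the unrounded norm: $\|\phi_\tau\|_{(\Lambda_{h,l}^\tau)^{-1}} > 2^{-l} - 2^{-l-1} = 2^{-l-1}$, hence $\|\phi_\tau\|_{(\Lambda_{h,l}^\tau)^{-1}}^2 > 4^{-l-1}$. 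Listing the episodes of $\Psi_{h,l}^k$ as $\tau_1 < \cdots < \tau_N$ with $N = |\Psi_{h,l}^k|$, the matrix used at $\tau_j$ is $\Lambda_{h,l}^{\tau_j} = 16 I + \sum_{i<j} \phi_{\tau_i}\phi_{\tau_i}^\trans$, since features enter $\Psi_{h,l}$ only upon exploration.

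Next I would run the determinant argument. The matrix-determinant lemma gives $\det(\Lambda_{h,l}^{\tau_{j+1}}) = \det(\Lambda_{h,l}^{\tau_j})(1 + \|\phi_{\tau_j}\|_{(\Lambda_{h,l}^{\tau_j})^{-1}}^2) > \det(\Lambda_{h,l}^{\tau_j})(1 + 4^{-l-1})$, so telescoping from $\det(16I) = 16^d$ yields a lower bound $\det \ge 16^d (1 + 4^{-l-1})^N$. On the other hand, AM--GM on the eigenvalues together with $\|\phi\|_2 \le 1$ give the upper bound $\det \le (\operatorname{tr}/d)^d \le (16 + N/d)^d$. Combining the two, taking logarithms, and using $\log(1 + 4^{-l-1}) \ge 4^{-l}/8$, I obtain, with $M := N/d$,
\[
\frac{4^{-l}}{8}\, M < \log\!\left(1 + \frac{M}{16}\right).
\]

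Finally I would solve this self-bounding inequality. Setting $M_0 = 40 \cdot 4^l l$ and $g(M) = \frac{4^{-l}}{8} M - \log(1 + M/16)$, a direct check shows $g(M_0) \ge 0$ (it reduces to $5l \ge \log(1 + 2.5 \cdot 4^l l)$, which holds for all $l \ge 1$) and that $g$ is increasing on $[M_0,\infty)$ (its derivative is nonnegative once $M \ge 8\cdot 4^l - 16$, and $M_0$ exceeds this). Hence $g \ge 0$ on $[M_0,\infty)$, so the strict inequality $g(M) < 0$ forces $M < M_0$, i.e.\ $N < 40 \cdot 4^l d l$. The second bound $40 \cdot 4^l d l \le 2^{3l+5} d$ then reduces to $40 l \le 32 \cdot 2^l$, which holds for $l \ge 1$. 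The main obstacle is precisely this last step: because I may not invoke $N \le K$, I must argue that the self-referential inequality $b M < \log(1 + cM)$ confines $M$ below an absolute, $K$-free threshold; the extra factor of $l$ in the bound is exactly the $\log(\text{bound}) = O(l)$ term produced by this self-reference. The supporting subtleties are the rounding conversion (needing $\epsrnd$ small enough that the $\tilde{\Lambda}$- and $\Lambda$-norms agree to within $2^{-l-1}$) and keeping the determinant inequalities strict.
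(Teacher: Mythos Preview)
Your proposal is correct and follows essentially the same approach as the paper: both convert the rounded-norm exploration condition to an unrounded lower bound via Claim~\ref{clmTildeEpsrnd}, combine it with an elliptical-potential/determinant inequality to obtain the self-referential bound $N < 2^{2l+3} d \log(1 + N/(16d))$, and then resolve it without invoking $N \le K$. The only cosmetic difference is in the self-bounding step: the paper first extracts a crude bound $N \le 2^{4(l+1)} d$ (via $\log(1+\sqrt{x}) \le \sqrt{x}$) and substitutes it back, whereas you verify directly that $g(M_0) \ge 0$ at the claimed threshold; both are valid and arrive at the same constant.
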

\begin{proof}[Proof sketch]
By Lines \ref{lnSubExplore}-\ref{lnSubExploreOther} of Algorithm \ref{algSub} and Line \ref{lnPsiCase1A} of Algorithm \ref{algMain}, for each $\tau \in \Psi_{h,l}^k$, we know that $\| \phi(s_h^\tau,a_h^\tau) \|_{ ( \tilde{\Lambda}_{h,l}^\tau )^{-1} } > 2^{-l}$. This implies $| \Psi_{h,l}^k | \leq 4^l \sum_{\tau \in \Psi_{h,l}^k } \| \phi(s_h^\tau,a_h^\tau) \|_{ ( \tilde{\Lambda}_{h,l}^\tau )^{-1} }^2 = \tilde{O}( 4^l d )$, where the equality follows from Claim \ref{clmTildeEpsrnd} and \cite{abbasi2011improved}. See Appendix \ref{appMainProof} for details.
\end{proof}

\subsection{Concentration} \label{appConcentration}

For any $k \in [K]$, $h \in [H]$, $l \in [L]$, and $V : \S \rightarrow \R$, define the bad event
\begin{equation}
\mathcal{B}(k,h,l,V) = \left\{ \left\| \sum_{ \tau \in \Psi_{h,l}^{k-1} } \phi(s_h^\tau, a_h^\tau ) \left( \eta_h^\tau + V(s_{h+1}^\tau) - \E_{s_{h+1}^\tau} V(s_{h+1}^\tau)\right) \right\|_{ ( \Lambda_{h,l}^k )^{-1} } > \beta \right\} ,
\end{equation} 
where $\E_{s_{h+1}^\tau} V(s_{h+1}^\tau) = \int_{s' \in \S} V(s')  P_h(s'|s_h^\tau,a_h^\tau)$ only averages over $s_{h+1}^\tau$ (even if $V$ is random, in particular, if $V = V_{h+1}^k$) and $\beta = 13 d H L \sqrt{ \log(3dHL/\delta) }$. Also define the good event
\begin{equation}
\mathcal{G} = \cap_{k=1}^K \cap_{h=1}^H \cap_{l=1}^L \mathcal{B} ( k,h,l, V_{h+1}^k )^C .
\end{equation}
As discussed in Remark \ref{remRounding}, a similar event is analyzed in \cite{jin2020provably} using covering arguments. In contrast, here $V_{h+1}^k$ belongs to a finite function class, which allows us to show that $\mathcal{G}$ occurs with high probability via a direct union bound over the function class.
\begin{lem}[Concentration] \label{lemConcentration}
The good event $\mathcal{G}$ occurs with probability at least $1-\delta/2$.
\end{lem}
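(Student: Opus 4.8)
The plan is to reduce the bad event for the \emph{random} value function $V_{h+1}^k$ to a union bound over a finite, \emph{deterministic} class of value functions, each handled by a standard self-normalized martingale inequality. Concretely, I would (i) prove a high-probability bound on $\mathcal{B}(k,h,l,V)$ for an arbitrary fixed $V:\S\to[0,H]$, uniformly over all $k$; (ii) argue via the rounding in Lines \ref{lnLambdaR}--\ref{lnWeightVectorR} of Algorithm \ref{algMain} that $V_{h+1}^k$ always lies in a finite class $\mathcal{V}$; and (iii) union bound over $h\in[H]$, $l\in[L]$, and $V\in\mathcal{V}$. The crucial design choice is to invoke the \emph{anytime} form of the self-normalized bound so that no union over the $K$ episodes is needed; this is what keeps $\beta$ free of $\log K$ and hence (as stressed in the proof outline) avoids super-linear regret.

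For step (i), fix $h,l$ and deterministic $V$. Define the predictable regressors $\tilde\phi_\tau = \phi(s_h^\tau,a_h^\tau)\,\ind\{\tau\in\Psi_{h,l}\}$ and the errors $\zeta_\tau = \eta_h^\tau + V(s_{h+1}^\tau) - \E_{s_{h+1}^\tau}V(s_{h+1}^\tau)$. The key observation is that membership of $\tau$ in $\Psi_{h,l}$ is decided from $l_h^\tau(s_h^\tau)$ at step $h$ of episode $\tau$, hence \emph{before} the reward noise $\eta_h^\tau$ and transition $s_{h+1}^\tau$ are revealed; thus $\tilde\phi_\tau$ is predictable and $\zeta_\tau$ is a conditionally zero-mean, $(H{+}1)$-bounded (so $(H{+}1)$-subgaussian) martingale difference. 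Writing $\Lambda_n = 16I + \sum_{\tau\le n}\tilde\phi_\tau\tilde\phi_\tau^\trans$, we have $\Lambda_{k-1} = \Lambda_{h,l}^k$ and $\sum_{\tau\le k-1}\tilde\phi_\tau\zeta_\tau = \sum_{\tau\in\Psi_{h,l}^{k-1}}\phi(s_h^\tau,a_h^\tau)\zeta_\tau$, so the anytime self-normalized bound of \cite{abbasi2011improved} gives, with probability $\ge 1-\delta'$ and simultaneously for every $k$, a bound of order $(H{+}1)\sqrt{\log(\det(\Lambda_{h,l}^k)^{1/2}/(16^{d/2}\delta'))}$ on the norm defining $\mathcal{B}$. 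The log-determinant is $O(dL)$ by the determinant--trace (AM--GM) inequality together with the $K$-independent bound $|\Psi_{h,l}^k|\le 2^{3l+5}d$ of Claim \ref{clmPsiBound}.

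For step (ii), as anticipated in Remark \ref{remRounding}, $V_{h+1}^k$ is a deterministic function of the rounded quantities $\{\tilde w_{h+1,l}^k,(\tilde\Lambda_{h+1,l}^k)^{-1}\}_{l\in[L]}$, which live on a grid of spacing $\epsrnd$ in a bounded region: every entry of $(\tilde\Lambda_{h+1,l}^k)^{-1}$ lies in $[-\tfrac{1}{16},\tfrac{1}{16}]$ since $0\preceq(\Lambda_{h+1,l}^k)^{-1}\preceq\tfrac{1}{16}I$, and every entry of $\tilde w_{h+1,l}^k$ is bounded by a deterministic $B_w = O(2^{3L}dH)$ (using $\Lambda_{h+1,l}^k\succeq 16I$, $|r_h^\tau + V_{h+1}^k(s_{h+1}^\tau)|\le H{+}1$, and again Claim \ref{clmPsiBound}). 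Counting grid points yields $\log|\mathcal{V}| = O(Ld^2\log(1/\epsrnd)) = O(Ld^2(L+\log d))$ after substituting $\epsrnd = 2^{-4L}/d$. Finally, in step (iii) I would take $\delta' = \delta/(2HL|\mathcal{V}|)$ and union bound the step-(i) event over $h$, $l$, and $V\in\mathcal{V}$. On the intersection (probability $\ge 1-\delta/2$), $\mathcal{B}(k,h,l,V)^C$ holds for all $k$ and every deterministic $V\in\mathcal{V}$; since $V_{h+1}^k\in\mathcal{V}$ this gives $\mathcal{B}(k,h,l,V_{h+1}^k)^C$ for all $k,h,l$, i.e.\ $\mathcal{G}$. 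It then remains to check that $\beta = 13dHL\sqrt{\log(3dHL/\delta)}$ dominates the step-(i) bound with this $\delta'$, i.e.\ $\beta\gtrsim (H{+}1)\sqrt{dL + \log|\mathcal{V}| + \log(HL/\delta)}$; the dominant term $\sqrt{\log|\mathcal{V}|}=O(dL)$ is absorbed by the $dHL$ prefactor, and $\sqrt{\log(3dHL/\delta)}$ beats the residual $\sqrt{\log d}$ factor. I expect the main obstacle to be step (ii): because Lines \ref{lnSubExploreCond} and \ref{lnSubEliminate} of Algorithm \ref{algSub} make $V_{h+1}^k$ a discontinuous function of the underlying parameters, one cannot cover the class as in \cite{jin2020provably}, so the finite-class argument (and the careful range bounds feeding $\log|\mathcal{V}|$) genuinely relies on the rounding and on the $K$-free cardinality bound of Claim \ref{clmPsiBound}.
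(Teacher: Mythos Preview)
Your proposal is correct and follows essentially the same approach as the paper: for fixed $h,l$ and deterministic $V$, apply the anytime self-normalized bound of \cite{abbasi2011improved} to the filtered regressors $\phi(s_h^\tau,a_h^\tau)\ind\{\tau\in\Psi_{h,l}\}$, bound the log-determinant via Claim \ref{clmPsiBound}, show $V_{h+1}^k$ lies in a finite class $\mathcal{V}$ determined by the rounded grids, and union bound over $h$, $l$, and $\mathcal{V}$. The only cosmetic differences are that the paper uses the looser range $\|w_{h,l}^k\|_\infty\le(2^LdH)^4$ (Claim \ref{clmWl2Bound}) rather than your $O(2^{3L}dH)$, and bounds $|\zeta_\tau|\le 2H$ rather than $H{+}1$; neither affects the argument.
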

\begin{proof}
We fix $h \in [H]$ and $l \in [L]$ and show $\P ( \cup_{k=1}^K \mathcal{B}(k,h,l,V_{h+1}^k) ) \leq \frac{\delta}{2HL}$, which (by the union bound) completes the proof. Toward this end, we introduce some notation. For any (ordered) sets $\mathscr{X} = \{ x_{l'} \}_{l'=1}^L \subset \R^d$ and $\mathscr{Y} = \{ Y_{l'} \}_{l'=1}^L \subset \R^{d \times d}$, let $V_{ \mathscr{X}, \mathscr{Y} } : \S \rightarrow [0,H]$ be the function that results from running Algorithm \ref{algSub} with $\tilde{w}_{h+1,l'}^k$ and $(\tilde{\Lambda}_{h+1,l'}^k )^{-1}$ replaced by $x_{l'}$ and $Y_{l'}$, respectively. Hence, if $\mathscr{X} = \{  \tilde{w}_{h+1,l'}^k \}_{l'=1}^L$ and $\mathscr{Y} = \{ ( \tilde{\Lambda}_{h+1,l'}^k )^{-1} \}_{l'=1}^L$, then $V_{\mathscr{X}, \mathscr{Y}} = V_{h+1}^k$. Next, define 
\begin{gather}
\mathcal{X} = \{ [ \epsrnd i_j ]_{j=1}^d : i_j \in \{ - \ceil{ ( 2^L d H )^4 / \epsrnd } , \ldots , \ceil{ ( 2^L d H )^4 / \epsrnd } \}\ \forall\ j \in [d]  \} \subset \R^d , \\
\mathcal{Y}  = \{ [ \epsrnd i_{j_1,j_2} ]_{j_1,j_2=1}^d : i_{j_1,j_2} \in \{ - \ceil{ 1 / (16 \epsrnd) } , \ldots , \ceil{ 1 / (16 \epsrnd) } \}\ \forall\ (j_1,j_2) \in [d]^2 \} \subset \R^{d \times d} .
\end{gather}
By Claim \ref{clmWl2Bound} in Appendix \ref{appMainProof} (which shows $\|w_{h,l}^k\|_\infty \leq (2^L dH)^4$), we have $\{ \tilde{w}_{h+1,l'}^k \}_{l'=1}^L \in \mathcal{X}^L$. Further, by a standard matrix norm inequality and the fact that the eigenvalues of $\Lambda_{h+1,l'}^k$ are at least $16$, we have $\max_{j_1,j_2} | ( \Lambda_{h+1,l'}^k )^{-1}_{j_1,j_2} | \leq \| ( \Lambda_{h+1,l'}^k )^{-1} \|_2 \leq 1/16$, so $\{ ( \tilde{\Lambda}_{h+1,l'}^k )^{-1} \}_{l'=1}^L \in \mathcal{Y}^L$. Finally, we know $V_{h+1}^k : \S \rightarrow [0,H]$ by Lines \ref{lnSubExploreOther} and \ref{lnSubExploitOther} of Algorithm \ref{algSub}. Thus, if we define $\mathcal{V} = \{ V_{\mathscr{X},\mathscr{Y}} : \S \rightarrow [0,H] |  \mathscr{X} \in \mathcal{X}^L , \mathscr{Y} \in \mathcal{Y}^L\}$, then $V_{h+1}^k \in \mathcal{V}\ \forall\ k \in [K]$, which implies 
$\cup_{k=1}^K \mathcal{B}(k,h,l,V_{h+1}^k) \subset  \cup_{ V \in \mathcal{V} } \cup_{k=1}^K \mathcal{B}(k,h,l,V)$. Hence, taking another union bound, it suffices to show that for any $V \in \mathcal{V}$, $\P ( \cup_{k=1}^K \mathcal{B}(k,h,l,V) ) \leq \frac{\delta}{ 2 H L |\mathcal{V}|}$. Let $\ind(\cdot)$ denote the indicator function, and for each $\tau \in [K]$, define the folllowing:
\begin{gather}
\underline{s}_{\tau+1} = s_{h+1}^\tau , \quad \underline{\phi}_\tau =  \phi(s_h^\tau,a_h^\tau) \ind (  \tau \in \Psi_{h,l}^\tau ) , \quad \underline{\Lambda}_k = 16 I  + \sum_{\tau=1}^{k-1} \underline{\phi}_\tau \underline{\phi}_\tau^\trans , \quad \upsilon_\tau = \eta_h^\tau  + V(\underline{s}_{\tau+1}) - \E_{s_{h+1}^\tau} V(s_{h+1}^\tau) .
\end{gather}
Then by definition, for any $k \in [K]$, we have 
\begin{equation} \label{eqConcRewrite}
\Lambda_{h,l}^k = \underline{\Lambda}_k  , \quad \left\| \sum_{ \tau \in \Psi_{h,l}^{k-1} } \phi(s_h^\tau, a_h^\tau ) \left( \eta_h^\tau + V(s_{h+1}^\tau) - \E_{s_{h+1}^\tau} V(s_{h+1}^\tau)  \right) \right\|_{ ( \Lambda_{h,l}^k )^{-1} } = \left\| \sum_{ \tau =1 }^{k-1} \underline{\phi}_\tau \upsilon_\tau \right\|_{ \underline{\Lambda}_k^{-1} }  .
\end{equation}
Also let $\mathcal{F}_0 = \emptyset$ and $\mathcal{F}_\tau = \sigma ( \mathcal{F}_{\tau-1} \cup \sigma ( s_1^\tau , a_1^\tau , \eta_1^\tau , \ldots , s_{h-1}^\tau , a_{h-1}^\tau , \eta_{h-1}^\tau , s_h^\tau , a_h^\tau  ) )$ for each $\tau \in \N$, where $\sigma(\cdot)$ is the generated $\sigma$-algebra. Hence, in words, $\mathcal{F}_\tau$ contains all randomness until the random reward and next state are realized at step $h$ of episode $\tau$. Note $\underline{\phi}_\tau$ is $\mathcal{F}_\tau$-measurable and $\upsilon_\tau$ is $\mathcal{F}_{\tau+1}$-measurable with $\E [ \upsilon_\tau | \mathcal{F}_\tau ] = 0$. Furthermore, since $\eta_h^\tau \in [-1,1]$ by assumption (see Section \ref{secPrelim}) and $V : \S \rightarrow [0,H]$ by definition, we have $\upsilon_\tau \in [-2H,2H]$, so $\upsilon_\tau$ is $(2H)$-subgaussian. Therefore,
\begin{align}
\P ( \cup_{k=1}^K \mathcal{B}(k,h,l,V)) & = \P \left( \cup_{k=1}^K \left\{ \left\| \sum_{ \tau =1 }^{k-1} \underline{\phi}_\tau \upsilon_\tau \right\|_{ \underline{\Lambda}_k^{-1} } > \beta \right\} \right) \\
& \leq \P \left( \cup_{k=1}^K \left\{ \left\| \sum_{ \tau =1 }^{k-1} \underline{\phi}_\tau \upsilon_\tau \right\|_{ \underline{\Lambda}_k^{-1} } > \sqrt{ 8 H^2 \log \left(  \frac{ \det ( \Lambda_{h,l}^k ) }{ \det ( 16 I )} \frac{2 H L |\mathcal{V}|}{\delta} \right) } \right\} \right)  \leq \frac{\delta}{ 2 H L |\mathcal{V}| } ,
\end{align}
where the equality uses \eqref{eqConcRewrite}, the first inequality is a simple calculation (see Claim \ref{clmBetaBound} in Appendix \ref{appMainProof} for details), and the second inequality is Theorem 1 from \cite{abbasi2011improved}. 
\end{proof}

\begin{rem}[MLCB] \label{remMLCB}
When $H=1$, we simply have $\upsilon_\tau = \eta_h^\tau$ in the proof of Lemma \ref{lemConcentration}, so we do not require a union bound over $\mathcal{V}$. This union bound makes $\beta$, and subsequently $\alpha$, have linear (instead of square root) dependence on $d$, which in turn gives the $\sqrt{K}$ term in our regret bound $d^{3/2}$ (instead of $d$) dependence.
\end{rem}

\subsection{Estimation error} \label{appEstimation}

For the remainder of the proof, we bound regret on the good event $\mathcal{G}$. We first show that on $\mathcal{G}$, the least-squares estimate $w_{h,l}^k$ is close to $\bar{w}_h^k \triangleq \theta_h + \int_{s' \in \S} V_{h+1}^k(s') \mu_h(s')$ in a certain sense. For this, it will be convenient to introduce the following notation:
\begin{equation}
\Delta_h^r(s,a) = r_h(s,a) - \phi(s,a)^\trans \theta_h , \quad \Delta_h^P( s' | s,a ) = P_h(s'|s,a) - \phi(s,a)^\trans \mu_h .
\end{equation}
(Note $| \Delta_h^r(s,a) | , \| \Delta_h^P(\cdot|s,a) \|_1 \leq \epsmis$ by Assumption \ref{assLin}.)  We can now prove a generalization of \cite{takemura2021parameter}'s Lemma 2 using an approach somewhat similar to \cite{jin2020provably}'s Lemma B.4.
\begin{clm}[Least-squares error] \label{clmEstError}
On the event $\mathcal{G}$, for any $k \in [K]$, $h \in [H]$, $s \in \S$, $l \in [l_h^k(s)-1]$, and $a \in \A_{h,l}^k(s)$, we have $| \phi(s,a)^\trans ( w_{h,l}^k - \bar{w}_h^k ) | \leq 2^{-l} \alpha + 26 H \sqrt{ d l } \epsmis$.
\end{clm}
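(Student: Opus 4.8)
The plan is to mimic the standard least-squares error decomposition (cf.\ \cite{jin2020provably}'s Lemma B.4), adapted to the phased data sets here. Throughout, fix the indices and write $\phi^\tau = \phi(s_h^\tau,a_h^\tau)$ and $\Lambda = \Lambda_{h,l}^k$. Using $\Lambda \bar{w}_h^k = 16\bar{w}_h^k + \sum_{\tau\in\Psi_{h,l}^{k-1}}\phi^\tau(\phi^\tau)^\trans\bar{w}_h^k$ and the definition of $w_{h,l}^k$, I would first establish
\[
w_{h,l}^k - \bar{w}_h^k = \Lambda^{-1}\Big( \sum_{\tau\in\Psi_{h,l}^{k-1}}\phi^\tau\big( r_h^\tau(s_h^\tau,a_h^\tau) + V_{h+1}^k(s_{h+1}^\tau) - (\phi^\tau)^\trans\bar{w}_h^k \big) - 16\bar{w}_h^k \Big).
\]
Substituting $r_h^\tau(s_h^\tau,a_h^\tau) = (\phi^\tau)^\trans\theta_h + \Delta_h^r(s_h^\tau,a_h^\tau) + \eta_h^\tau$ and $(\phi^\tau)^\trans\int_{s'} V_{h+1}^k(s')\mu_h(s') = \E_{s_{h+1}^\tau}V_{h+1}^k(s_{h+1}^\tau) - \int_{s'}V_{h+1}^k(s')\Delta_h^P(s'|s_h^\tau,a_h^\tau)$, the summand splits into a zero-mean noise part $\eta_h^\tau + V_{h+1}^k(s_{h+1}^\tau) - \E_{s_{h+1}^\tau}V_{h+1}^k(s_{h+1}^\tau)$ and a misspecification part $\Delta_h^r(s_h^\tau,a_h^\tau) + \int_{s'}V_{h+1}^k(s')\Delta_h^P(s'|s_h^\tau,a_h^\tau)$. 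Hence $\phi(s,a)^\trans(w_{h,l}^k - \bar{w}_h^k)$ is a sum of a noise term, a misspecification term, and a regularization term $-16\phi(s,a)^\trans\Lambda^{-1}\bar{w}_h^k$, which I would bound separately.

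For the noise term, the vector being summed is exactly the one appearing in $\mathcal{B}(k,h,l,V_{h+1}^k)$, so on $\mathcal{G}$ its $\Lambda^{-1}$-norm is at most $\beta$; Cauchy--Schwarz in the $\Lambda^{-1}$ inner product together with Claim \ref{clmBonus} ($\|\phi(s,a)\|_{\Lambda^{-1}} \le 2^{1-l}$) then bounds it by $2^{1-l}\beta$. For the regularization term, $\Lambda \succeq 16I$ gives $\|\bar{w}_h^k\|_{\Lambda^{-1}} \le \tfrac14\|\bar{w}_h^k\|_2$, and since $V_{h+1}^k : \S \to [0,H]$, Assumption \ref{assLin} yields $\|\bar{w}_h^k\|_2 \le \|\theta_h\|_2 + H\int_{s'}\|\mu_h(s')\|_2 \le (1+H)\sqrt{d}$, so this term is at most $16\cdot 2^{1-l}\cdot\tfrac14(1+H)\sqrt{d} \le 16H\sqrt{d}\,2^{-l}$. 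I would then absorb both into $2^{-l}\alpha$ by checking the constants: $2\beta + 16H\sqrt{d} \le (26+16)dHL\sqrt{\log(3dHL/\delta)} = \alpha$, using $\sqrt{d}\le d$, $L\ge 1$, and $\log(3dHL/\delta)\ge 1$.

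The delicate step, and the main obstacle, is the misspecification term, since a naive bound produces a spurious extra factor of $\sqrt{d}$. The uniform estimate $|\Delta_h^r(s_h^\tau,a_h^\tau) + \int_{s'}V_{h+1}^k(s')\Delta_h^P(s'|s_h^\tau,a_h^\tau)| \le (1+H)\epsmis$ follows from Assumption \ref{assLin} and $V_{h+1}^k : \S \to [0,H]$. The key is to keep $\phi(s,a)^\trans\Lambda^{-1}\phi^\tau$ intact rather than splitting it as $\|\phi(s,a)\|_{\Lambda^{-1}}\|\phi^\tau\|_{\Lambda^{-1}}$ (the latter route, via $\sum_\tau\|\phi^\tau\|_{\Lambda^{-1}}^2 \le d$, costs a full factor of $d$). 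Instead I would use Cauchy--Schwarz over $\tau$ followed by
\[
\sum_\tau\big(\phi(s,a)^\trans\Lambda^{-1}\phi^\tau\big)^2 = \phi(s,a)^\trans\Lambda^{-1}(\Lambda - 16I)\Lambda^{-1}\phi(s,a) \le \|\phi(s,a)\|_{\Lambda^{-1}}^2,
\]
giving $\sum_\tau|\phi(s,a)^\trans\Lambda^{-1}\phi^\tau| \le \sqrt{|\Psi_{h,l}^{k-1}|}\,\|\phi(s,a)\|_{\Lambda^{-1}}$. Combining $|\Psi_{h,l}^{k-1}| \le 40\cdot 4^l d l$ (Claim \ref{clmPsiBound}) with $\|\phi(s,a)\|_{\Lambda^{-1}} \le 2^{1-l}$ bounds the misspecification term by $(1+H)\epsmis\cdot 2^{1-l}\cdot 2^l\sqrt{40dl} = 2\sqrt{40}(1+H)\sqrt{dl}\,\epsmis \le 26H\sqrt{dl}\,\epsmis$. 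Adding the three bounds gives the claim.
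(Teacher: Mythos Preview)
Your proof is correct and follows essentially the same route as the paper: the same three-way decomposition into noise, regularization, and misspecification terms, the same use of $\mathcal{G}$ and Claim \ref{clmBonus} for the first two, and the same Cauchy--Schwarz-over-$\tau$ trick with $\sum_\tau(\phi(s,a)^\trans\Lambda^{-1}\phi^\tau)^2 \le \|\phi(s,a)\|_{\Lambda^{-1}}^2$ combined with Claim \ref{clmPsiBound} for the misspecification term. The only differences are minor arithmetic reorganizations (e.g., you verify $2\beta + 16H\sqrt{d} \le \alpha$ directly, whereas the paper checks $\beta + 8\sqrt{d}H \le \alpha/2$ and then multiplies by $2^{1-l}$), and the constants match.
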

\begin{proof}
By definition of $\Lambda_{h,l}^k$, we have
\begin{equation} \label{eqBarWeightsA}
\bar{w}_h^k = ( \Lambda_{h,l}^k )^{-1}  \Lambda_{h,l}^k \bar{w}_h^k = 16 ( \Lambda_{h,l}^k )^{-1} \bar{w}_h^k +  ( \Lambda_{h,l}^k )^{-1} \sum_{ \tau \in \Psi_{h,l}^{k-1} } \phi(s_h^\tau, a_h^\tau)  \phi(s_h^\tau, a_h^\tau)^\trans \bar{w}_h^k .
\end{equation}
By definition of $\bar{w}_h^k$ and Assumption \ref{assLin}, for any $\tau \in \Psi_{h,l}^{k-1}$, we know
\begin{equation} \label{eqBarWeightsB}
\phi(s_h^\tau, a_h^\tau)^\trans \bar{w}_h^k = r_h (s_h^\tau, a_h^\tau) + \E_{s_{h+1}^\tau} V_{h+1}^k(s_{h+1}^\tau)  - \Delta_h^r (s_h^\tau, a_h^\tau) - \int_{s' \in \S} V_{h+1}^k(s') \Delta_h^P(s'|s_h^\tau, a_h^\tau)  .
\end{equation}
Additionally, recall that in Algorithm \ref{algMain}, we have
\begin{equation}\label{eqBarWeightsC}
w_{h,l}^k= ( \Lambda_{h,l}^k )^{-1} \sum_{ \tau \in \Psi_{h,l}^{k-1} } \phi(s_h^\tau,a_h^\tau) ( r_h(s_h^\tau,a_h^\tau) + \eta_h^\tau + V_{h+1}^k ( s_{h+1}^\tau )  ) .
\end{equation}
It follows that $\phi(s,a)^\trans ( w_{h,l}^k - \bar{w}_h^k ) = \sum_{i=1}^3 \phi(s,a)^\trans (\Lambda_{h,l}^k )^{-1} z_i$, where we define
\begin{gather}
z_1 =\sum_{ \tau \in \Psi_{h,l}^{k-1} } \phi(s_h^\tau, a_h^\tau) \left( \eta_h^\tau + V_{h+1}^k(s_{h+1}^\tau) - \E_{s_{h+1}^\tau} V_{h+1}^k ( s_{h+1}^\tau ) \right)  , \quad z_2 = - 16  \bar{w}_h^k , \\
z_3 =  \sum_{ \tau \in \Psi_{h,l}^{k-1} } \phi(s_h^\tau, a_h^\tau) \left( \Delta_h^r(s_h^\tau,a_h^\tau) + \int_{s' \in \S} V_{h+1}^k(s')  \Delta_{h,s'}^P ( s_h^\tau, a_h^\tau) \right) .
\end{gather}
Hence, we aim to bound $|\phi(s,a)^\trans (\Lambda_{h,l}^k )^{-1} z_i|$ for each $i \in [3]$. By Cauchy-Schwarz, on the event $\mathcal{G}$,
\begin{equation} \label{eqZ1final}
| \phi(s,a)^\trans (\Lambda_{h,l}^k )^{-1} z_1 | \leq \| z_1 \|_{ (\Lambda_{h,l}^k )^{-1} } \| \phi(s,a) \|_{ (\Lambda_{h,l}^k )^{-1} } \leq \beta \| \phi(s,a) \|_{ (\Lambda_{h,l}^k )^{-1} } .
\end{equation}
Again using Cauchy-Schwarz, we have
\begin{equation} \label{eqZ2final}
| \phi(s,a)^\trans (\Lambda_{h,l}^k )^{-1} z_2 | \leq  16 \|  \bar{w}_h^k \|_{ (\Lambda_{h,l}^k )^{-1} } \| \phi(s,a) \|_{ (\Lambda_{h,l}^k )^{-1} } \leq 8 \sqrt{d} H \| \phi(s,a) \|_{ (\Lambda_{h,l}^k )^{-1} } ,
\end{equation}
where the second inequality holds because, by Claim \ref{clmNormEq} in Appendix \ref{appMainProof} (a simple norm equivalence), Assumption \ref{assLin}, and the fact that $V_{h+1}^k : \S \rightarrow [0,H]$ in Algorithm \ref{algSub},
\begin{equation}
\| \bar{w}_h^k \|_{( \Lambda_{h,l}^k )^{-1} } \leq \| \bar{w}_h^k \|_2 / 4 \leq \sqrt{d} ( 1 +  H ) / 4 \leq \sqrt{d} H / 2 .
\end{equation}
For $z_3$, first note that by Assumption \ref{assLin} and Algorithm \ref{algSub},
\begin{equation} 
\left| \Delta_h^r(s_h^\tau,a_h^\tau) + \int_{s' \in \S} V_{h+1}^k(s') \Delta_h^P (s' |  s_h^\tau, a_h^\tau)  \right| \leq  \left( 1 + \max_{s' \in \S} V_{h+1}^k(s') \right) \epsmis \leq (1+H) \epsmis \leq 2 H \epsmis .
\end{equation}
Furthermore, since $l \leq l_h^k(s) - 1$ and $a \in \A_{h,l}^k(s)$, we can use Claims \ref{clmBonus} and \ref{clmPsiBound} to obtain
\begin{equation} 
| \Psi_{h,l}^{k-1}  | \times \phi(s,a)^\trans ( \Lambda_{h,l}^k )^{-1} \phi(s,a) \leq  40 \cdot 4^l d l \times 4^{1-l} = 160 d l < 13^2 d l .
\end{equation}
By the previous two bounds, Cauchy-Schwarz, and positive-semidefiniteness, we obtain
\begin{align}  \label{eqZ3final}
| \phi(s,a)^\trans ( \Lambda_{h,l}^k )^{-1}  z_3 | & \leq 2 H \epsmis \sum_{ \tau \in \Psi_{h,l}^{k-1} } | \phi(s,a)^\trans ( \Lambda_{h,l}^k )^{-1} \phi(s_h^\tau, a_h^\tau) |  \\
& \leq 2 H \epsmis \sqrt{ | \Psi_{h,l}^{k-1}  | \phi(s,a)^\trans ( \Lambda_{h,l}^k )^{-1} \sum_{ \tau \in \Psi_{h,l}^{k-1} } \phi(s_h^\tau, a_h^\tau) \phi(s_h^\tau, a_h^\tau) ( \Lambda_{h,l}^k )^{-1}  \phi(s,a)  } \\
& = 2 H \epsmis \sqrt{ | \Psi_{h,l}^{k-1}  | \phi(s,a)^\trans ( \Lambda_{h,l}^k )^{-1} ( \Lambda_{h,l}^k - 16 I ) ( \Lambda_{h,l}^k )^{-1}  \phi(s,a)  }  \\
& \leq  2 H \epsmis \sqrt{ | \Psi_{h,l}^{k-1}  | \phi(s,a)^\trans ( \Lambda_{h,l}^k )^{-1} \phi(s,a) } \leq 26 H \epsmis \sqrt{  d l  } .
\end{align}
Hence, combining \eqref{eqZ1final}, \eqref{eqZ2final}, and \eqref{eqZ3final}, we obtain
\begin{equation}
| \phi(s,a)^\trans ( w_{h,l}^k - \bar{w}_h^k ) | \leq \sum_{i=1}^3 |\phi(s,a)^\trans (\Lambda_{h,l}^k )^{-1} z_i| \leq ( \beta + 8 \sqrt{d} H ) \| \phi(s,a) \|_{ ( \Lambda_{h,l}^k )^{-1} } + 26 H \epsmis \sqrt{ d l } .
\end{equation}
This completes the proof, because $\| \phi(s,a) \|_{ ( \Lambda_{h,l}^k )^{-1} }  \leq 2^{1-l}$ by Claim \ref{clmBonus}, and by definition,
\begin{align}
& \beta + 8 \sqrt{d} H =  13 d H L \sqrt{ \log(3dHL/\delta) } + 8 \sqrt{d} H \leq  21 d H L \sqrt{ \log(3dHL/\delta) } = \alpha / 2 . \qedhere
\end{align}
\end{proof}

We conclude this subsection by using Claim \ref{clmEstError} to show the $Q$-function estimates in Algorithm \ref{algSub} are close to the function $\bar{Q}_h^k : \S \times \A \rightarrow [0, 2H]$ defined by 
\begin{equation}
\bar{Q}_h^k(s,a) = r_h(s,a) + \int_{s' \in \S} V_{h+1}^k(s') P_h(s'|s,a) \ \forall\ (s,a) \in \S \times \A .
\end{equation}
It will also be convenient to define $\epseff = 2 \alpha \sqrt{d \epsrnd } + 28 H \sqrt{ d L } \epsmis$, which is the effective misspecification (true misspecification and rounding error) that we carry through the next portion of the proof.
\begin{lem}[$Q$-function error] \label{lemEstError}
On the event $\mathcal{G}$, for any $k \in [K]$, $h \in [H]$, $s \in \S$, $l \in [l_h^k(s) - 1]$, and $a \in \A_{ h , l }^k(s)$, we have $| \phi(s,a)^\trans \tilde{w}_{h,l}^k - \bar{Q}_h^k(s,a) | \leq 2^{-l} \alpha + \epseff$.
\end{lem}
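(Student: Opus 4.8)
The plan is to split the error by the triangle inequality into three pieces -- the rounding error between $\tilde{w}_{h,l}^k$ and $w_{h,l}^k$, the least-squares error between $w_{h,l}^k$ and $\bar{w}_h^k$, and the misspecification error between $\phi(s,a)^\trans \bar{w}_h^k$ and $\bar{Q}_h^k(s,a)$ -- and bound each separately. Concretely, I would start from
\[
\begin{aligned}
| \phi(s,a)^\trans \tilde{w}_{h,l}^k - \bar{Q}_h^k(s,a) |
&\leq | \phi(s,a)^\trans ( \tilde{w}_{h,l}^k - w_{h,l}^k ) | + | \phi(s,a)^\trans ( w_{h,l}^k - \bar{w}_h^k ) | \\
&\quad + | \phi(s,a)^\trans \bar{w}_h^k - \bar{Q}_h^k(s,a) | .
\end{aligned}
\]
The hypotheses here ($l \in [l_h^k(s)-1]$ and $a \in \A_{h,l}^k(s)$, on the event $\mathcal{G}$) are exactly those of Claim \ref{clmEstError}, so that the middle term is immediately available.

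For the first term, Claim \ref{clmTildeEpsrnd} gives $| \phi(s,a)^\trans ( w_{h,l}^k - \tilde{w}_{h,l}^k ) | \leq \sqrt{d} \epsrnd$. For the second term, Claim \ref{clmEstError} gives $2^{-l} \alpha + 26 H \sqrt{dl} \epsmis$. The third term is where the only real content lies: expanding $\bar{w}_h^k = \theta_h + \int_{s' \in \S} V_{h+1}^k(s') \mu_h(s')$ and $\bar{Q}_h^k(s,a) = r_h(s,a) + \int_{s' \in \S} V_{h+1}^k(s') P_h(s'|s,a)$, the difference is exactly $\Delta_h^r(s,a) + \int_{s' \in \S} V_{h+1}^k(s') \Delta_h^P(s'|s,a)$ in the notation introduced before Claim \ref{clmEstError}. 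Using $| \Delta_h^r(s,a) | \leq \epsmis$ and $\| \Delta_h^P(\cdot|s,a) \|_1 \leq \epsmis$ from Assumption \ref{assLin}, together with the fact that $V_{h+1}^k : \S \rightarrow [0,H]$ (so the integral is bounded by $H \| \Delta_h^P(\cdot|s,a) \|_1$), this term is at most $(1+H)\epsmis \leq 2H\epsmis$.

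The final step is to verify that $\sqrt{d}\epsrnd + 2^{-l}\alpha + 26 H \sqrt{dl}\epsmis + 2H\epsmis$ is dominated by $2^{-l}\alpha + \epseff = 2^{-l}\alpha + 2\alpha\sqrt{d\epsrnd} + 28 H\sqrt{dL}\epsmis$. For the rounding part I would check $\sqrt{d}\epsrnd \leq 2\alpha\sqrt{d\epsrnd}$, which reduces to $\sqrt{\epsrnd} \leq 2\alpha$ and holds trivially since $\epsrnd \leq 1 \leq \alpha$; for the misspecification part I would use $l \leq L$ to get $26 H\sqrt{dl}\epsmis \leq 26 H\sqrt{dL}\epsmis$ and $\sqrt{dL} \geq 1$ to absorb $2H\epsmis \leq 2H\sqrt{dL}\epsmis$, summing to $28 H\sqrt{dL}\epsmis$. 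I do not expect a genuine obstacle: this lemma is the ``clean-up'' that repackages Claim \ref{clmEstError}, the rounding bound of Claim \ref{clmTildeEpsrnd}, and the misspecification of Assumption \ref{assLin} into the single effective-misspecification quantity $\epseff$. The only care required is the constant bookkeeping confirming that $\epseff$ simultaneously absorbs the rounding term and the extra $2H\epsmis$, and the use of the index constraint $l \leq l_h^k(s)-1 \leq L$ both to invoke Claim \ref{clmEstError} and to replace $\sqrt{l}$ by $\sqrt{L}$.
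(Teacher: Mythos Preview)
Your proposal is correct and follows essentially the same approach as the paper: the same triangle-inequality decomposition into rounding, least-squares, and misspecification pieces, each bounded via Claim~\ref{clmTildeEpsrnd}, Claim~\ref{clmEstError}, and Assumption~\ref{assLin} respectively, followed by the same constant bookkeeping to absorb everything into $2^{-l}\alpha + \epseff$. Your treatment of the final inequalities ($\sqrt{d}\epsrnd \leq 2\alpha\sqrt{d\epsrnd}$ via $\epsrnd \leq 1 \leq \alpha$, and $l \leq L$, $\sqrt{dL}\geq 1$) matches the paper's reasoning exactly.
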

\begin{proof}
By the triangle inequality, we have
\begin{align}
 | \phi(s,a)^\trans \tilde{w}_{h,l}^k - \bar{Q}_h^k(s,a) | |\phi(s,a)^\trans ( \tilde{w}_{h,l}^k - w_{h,l}^k ) |  +  | \phi(s,a)^\trans ( w_{h,l}^k - \bar{w}_h^k )  | + | \phi(s,a)^\trans \bar{w}_h^k - \bar{Q}_h^k(s,a) | .
\end{align}
For the first term, by Claim \ref{clmTildeEpsrnd}, and since $\alpha \geq 1$ and $\epsrnd \in (0,1)$,
\begin{equation}
| \phi(s,a)^\trans ( \tilde{w}_{h,l}^k - w_{h,l}^k ) | \leq \sqrt{d} \epsrnd \leq \alpha \sqrt{ d \epsrnd } .
\end{equation}
For the second term, by Claim \ref{clmEstError} and since $l \leq l_h^k(s)-1 \leq L$,
\begin{align}
| \phi(s,a)^\trans ( w_{h,l}^k - \bar{w}_h^k ) | \leq 2^{-l} \alpha + 26 H \sqrt{ d l } \epsmis \leq 2^{-l} \alpha + 26 H \sqrt{ d L } \epsmis
\end{align}
For the third term, by Assumption \ref{assLin} and Algorithm \ref{algSub},
\begin{align}
| \bar{Q}_h^k(s,a) - \phi(s,a)^\trans \bar{w}_h^k |  = \left| \Delta_h^r(s,a) + \int_{s' \in \S} \Delta_h^P(s'|s,a) V_{h+1}^k(s') \right| \leq 2 H \epsmis \leq 2 H \sqrt{ d L } \epsmis .
\end{align}
Hence, combining all of the above, we obtain
\begin{align}
& | \phi(s,a)^\trans \tilde{w}_{h,l}^k - \bar{Q}_h^k(s,a) | \leq 2^{-l} \alpha + 2 \alpha \sqrt{ d \epsrnd } + 28 H \sqrt{ dL } \epsmis = 2^{-l} \alpha + \epseff . \qedhere
\end{align}
\end{proof}

\subsection{Algorithm policy error} \label{appLearner}

Our next goal is to bound the difference between the value function estimate $V_h^k$ and the true value function $V_h^{\pi_k}$ of the algorithm's policy. We begin with an intermediate result. This is roughly an analogue of \cite{takemura2021parameter}'s Lemma 5 and 8, though our streamlined approach yields a single result. Additionally, we have to deal with the projection in Algorithm \ref{algSub}, which complicates the proof.
\begin{clm}[Algorithm error, one-step] \label{clmEstErrorLearner}
On the event $\mathcal{G}$, for any $k \in [K]$, $h \in [H]$, and $s \in \S$, we have $V_h^k(s) - \bar{Q}_h^k(s, \pi_h^k(s) ) \leq 8 \alpha \cdot 2^{-l_h^k(s)} + \epseff$.
\end{clm}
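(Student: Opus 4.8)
The plan is to case-split on the value $l^\star \triangleq l_h^k(s)$, which records whether Algorithm \ref{algSub} terminated by \emph{exploring} (so $l^\star \leq L$, returning via Line \ref{lnSubExploreReturn}) or by \emph{exploiting} ($l^\star = L+1$, via Line \ref{lnSubExploitOther}). Before splitting, I would dispose of the projection $\mathcal{P}_{[0,H]}$: since $V_h^k(s) \geq 0$ always and $\bar{Q}_h^k(s,\pi_h^k(s)) \geq 0$ (recall $\bar{Q}_h^k$ is $[0,2H]$-valued), whenever the pre-projection argument is negative we get $V_h^k(s)=0$ and the left-hand side is $\leq 0$, so the claim holds trivially. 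Hence in what follows I may assume the argument of $\mathcal{P}_{[0,H]}$ is nonnegative and simply use $\mathcal{P}_{[0,H]}(x) \leq x$ for $x \geq 0$.

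For the exploitation case $l^\star = L+1$, Line \ref{lnSubExploitOther} gives $V_h^k(s) \leq \phi(s,\pi_h^k(s))^\trans \tilde{w}_{h,L}^k$ with $\pi_h^k(s) = \pi_{h,L}^k(s) \in \A_{h,L}^k(s)$. Applying Lemma \ref{lemEstError} with $l=L$ (valid since $L \in [l_h^k(s)-1]$) bounds this by $\bar{Q}_h^k(s,\pi_h^k(s)) + 2^{-L}\alpha + \epseff$, and $2^{-L}\alpha \leq 8\alpha \cdot 2^{-(L+1)} = 8\alpha\cdot 2^{-l^\star}$ closes this case.

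The exploration case $l^\star \leq L$ is the substantive one, and the main obstacle is precisely that $V_h^k(s)$ is the \emph{off-policy} estimate of Remark \ref{remOffPolicy}: by Line \ref{lnSubExploreOther} it equals $V_{h,l^\star-1}^k(s) + 2^{1-l^\star}\alpha$, i.e.\ the estimated value of the \emph{different} action $\pi_{h,l^\star-1}^k(s)$ from the previous phase plus a bonus, whereas the claim concerns $\bar{Q}_h^k$ at the exploratory action $\pi_h^k(s)$ chosen at phase $l^\star$. Bridging the two is exactly what the elimination rule buys. Assuming $l^\star \geq 2$, since $\pi_h^k(s) \in \A_{h,l^\star}^k(s)$ survived the phase-$(l^\star-1)$ elimination (Line \ref{lnSubEliminate}), we have $\phi(s,\pi_h^k(s))^\trans \tilde{w}_{h,l^\star-1}^k \geq V_{h,l^\star-1}^k(s) - 2^{2-l^\star}\alpha$, i.e.\ $V_{h,l^\star-1}^k(s) \leq \phi(s,\pi_h^k(s))^\trans \tilde{w}_{h,l^\star-1}^k + 2^{2-l^\star}\alpha$. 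Substituting this and then applying Lemma \ref{lemEstError} with $l = l^\star-1$ and $a = \pi_h^k(s)$ (valid since $l^\star-1 \in [l_h^k(s)-1]$ and $\pi_h^k(s) \in \A_{h,l^\star}^k(s) \subseteq \A_{h,l^\star-1}^k(s)$) gives
\[
V_h^k(s) \leq \bar{Q}_h^k(s,\pi_h^k(s)) + \left( 2^{1-l^\star} + 2^{2-l^\star} + 2^{1-l^\star} \right)\alpha + \epseff ,
\]
where the three contributions are the bonus, the elimination slack, and the estimation error; they collect to $8\cdot 2^{-l^\star}\alpha$, as claimed. The only remaining case is the degenerate $l^\star = 1$ (which Corollary \ref{corExploratory} will later rule out, but can be handled directly here): then $V_{h,0}^k(s)=H$ forces $V_h^k(s) = H$, and the left side is $\leq H \leq 4\alpha = 8\alpha\cdot 2^{-1}$ since $\alpha \geq H$ by its definition in Algorithm \ref{algMain}. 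Thus the essential work is just the coefficient bookkeeping confirming that the three $2^{-l^\star}\alpha$ terms sum to the stated constant.
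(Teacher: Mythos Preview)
Your proof is correct and essentially identical to the paper's: both case-split on whether $l_h^k(s)\leq L$ or $=L+1$, handle the projection the same way, use the elimination rule to pass from $V_{h,l^\star-1}^k(s)$ to $\phi(s,\pi_h^k(s))^\trans\tilde{w}_{h,l^\star-1}^k$, and then apply Lemma~\ref{lemEstError} to collect the three $2^{-l^\star}\alpha$ terms into the constant $8$. The only cosmetic difference is that the paper invokes Corollary~\ref{corExploratory} to exclude $l_h^k(s)=1$ outright, whereas you dispatch that case directly via $\alpha\geq H$.
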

\begin{proof}
Let $l = l_h^k(s) - 1$. By Corollary \ref{corExploratory} from Appendix \ref{appMainProof}, we know that $l \in [L]$. We first assume $l \in [L-1]$, which implies $V_h^k(s) = \mathcal{P}_{[0,H]} ( V_{h,l}^k(s) + 2^{-l} \alpha )$ in Algorithm \ref{algSub}. Hence, if $V_{h,l}^k(s) < -2^{-l} \alpha$, then $V_h^k(s) = 0$, which immediately yields the desired bound (since $\bar{Q}_h^k(s, \pi_h^k(s) ) \geq 0$). If instead $V_{h,l}^k(s) \geq -2^{-l} \alpha$, then $V_h^k(s) \leq V_{h,l}^k(s) + 2^{-l} \alpha$, so it suffices to prove the bound with $V_h^k(s)$ replaced by $V_{h,l}^k(s) + 2^{-l} \alpha$. Toward this end, first observe that by Lemma \ref{lemEstError} and since $\pi_h^k(s) \in \A_{h,l+1}^k(s) \subset \A_{h,l}^k(s)$,
\begin{equation}
\phi(s,\pi_h^k(s))^\trans \tilde{w}_{h,l}^k \leq \bar{Q}_h^k(s,\pi_h^k(s)) + 2^{-l} \alpha + \epseff .
\end{equation}
On the other hand, again using $\pi_h^k(s) \in \A_{h,l+1}^k(s)$, we know
\begin{equation}
V_{h,l}^k(s) + 2^{-l} \alpha \leq  \phi(s,\pi_h^k(s))^\trans \tilde{w}_{h,l}^k + 2^{1-l} \alpha  + 2^{-l} \alpha .
\end{equation}
Hence, combining the inequalities, we obtain
\begin{equation}
V_{h,l}^k(s) + 2^{-l} \alpha - \bar{Q}_h^k(s,\pi_h^k(s)) \leq 2^{1-l} \alpha  + 2^{-l} \alpha + 2^{-l} \alpha + \epseff  = 8 \alpha \cdot 2^{-l_h^k(s)} + \epseff .
\end{equation}

For $l = L$, we have $V_h^k(s) = \mathcal{P}_{[0,H]} ( V_{h,L}^k(s) )$ in Algorithm \ref{algSub}. If $V_{h,L}^k(s) < 0$, the bound is again immediate. If instead $V_{h,L}^k(s) \geq 0$, we know $V_h^k(s) \leq V_{h,L}^k(s)$, so we can prove the bound with $V_h^k(s)$ replaced by $V_{h,L}^k(s)$. By Algorithm \ref{algSub} and Lemma \ref{lemEstError}, we have
\begin{align}
& V_{h,L}^k(s) - \bar{Q}_h^k(s,\pi_h^k(s)) = \phi( s , \pi_h^k(s) )^\trans \tilde{w}_{h,L}^k - \bar{Q}_h^k(s,\pi_h^k(s)) \leq 2^{-L} \alpha + \epseff < 8 \alpha \cdot 2^{-l_h^k(s)} + \epseff  , 
\end{align}
where the final inequality is $2^{-L} = 2^{1-l_h^k(s)} < 8 \cdot 2^{-l_h^k(s)}$.
\end{proof}

Next, for any $k \in [K]$ and $h \in [H]$, define the martingale noise term
\begin{equation}
\gamma_h^k = \E_{ s_{h+1}^k } ( V_{h+1}^k(s_{h+1}^k) - V_{h+1}^{\pi_k} ( s_{h+1}^k ) ) - ( V_{h+1}^k(s_{h+1}^k) - V_{h+1}^{\pi_k} (s_{h+1}^k ) ) .
\end{equation}
Using the previous claim and a simple inductive argument, we can prove the following lemma (see Appendix \ref{appMainProof} for details). In essence, similar to \cite{jin2020provably}'s Lemma B.6, this lemma shows that the noise $\gamma_h^k$ in the backward induction yields a martingale difference sequence.
\begin{lem}[Algorithm error, multi-step] \label{lemEstErrorLearner}
On the event $\mathcal{G}$, for any $k \in [K]$ and any $h \in [H]$, we have
\begin{equation} \label{eqRecursion}
V_h^k (s_h^k) - V_h^{\pi_k}(s_h^k) \leq 8 \alpha \sum_{h'=h}^H 2^{ - l_{h'}^k(s_{h'}^k) }  + \sum_{h'=h}^H \gamma_{h'}^k + (H-h+1) \epseff  .
\end{equation}
\end{lem}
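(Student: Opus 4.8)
The plan is to prove the bound by backward induction on $h$, running from $h = H$ down to $h = 1$, where the inductive engine is a one-step recursion relating the step-$h$ value gap to the step-$(h+1)$ gap plus the per-step error controlled by Claim \ref{clmEstErrorLearner}. The crucial algebraic move is to insert the auxiliary $Q$-function $\bar{Q}_h^k$ as an intermediary between the estimate $V_h^k$ and the true on-policy value $V_h^{\pi_k}$.

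First I would establish the one-step recursion. Writing $a_h^k = \pi_h^k(s_h^k)$ and using $V_h^{\pi_k}(s_h^k) = Q_h^{\pi_k}(s_h^k, a_h^k)$, I decompose
\begin{equation}
V_h^k(s_h^k) - V_h^{\pi_k}(s_h^k) = \left( V_h^k(s_h^k) - \bar{Q}_h^k(s_h^k, a_h^k) \right) + \left( \bar{Q}_h^k(s_h^k, a_h^k) - Q_h^{\pi_k}(s_h^k, a_h^k) \right).
\end{equation}
The first parenthetical is bounded by $8\alpha \cdot 2^{-l_h^k(s_h^k)} + \epseff$ directly by Claim \ref{clmEstErrorLearner}. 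For the second, since $\bar{Q}_h^k$ and $Q_h^{\pi_k}$ share the same reward $r_h(s_h^k, a_h^k)$ and differ only in which next-state value function they integrate against $P_h(\cdot | s_h^k, a_h^k)$, the difference collapses to $\E_{s_{h+1}^k}[ V_{h+1}^k(s_{h+1}^k) - V_{h+1}^{\pi_k}(s_{h+1}^k)]$, the conditional expectation over the realized transition. Invoking the definition of $\gamma_h^k$, this equals $\gamma_h^k + ( V_{h+1}^k(s_{h+1}^k) - V_{h+1}^{\pi_k}(s_{h+1}^k))$, which converts the expected next-state gap into the \emph{realized} next-state gap plus the martingale noise. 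Combining yields the recursion
\begin{equation}
V_h^k(s_h^k) - V_h^{\pi_k}(s_h^k) \leq 8\alpha \cdot 2^{-l_h^k(s_h^k)} + \epseff + \gamma_h^k + \left( V_{h+1}^k(s_{h+1}^k) - V_{h+1}^{\pi_k}(s_{h+1}^k) \right).
\end{equation}

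With the recursion in hand, I would telescope it down from $h$ to $H$. The terminal convention $V_{H+1}^k = V_{H+1}^{\pi_k} = 0$ kills the residual next-state gap (and makes $\gamma_H^k = 0$), and summing the per-step contributions produces exactly $8\alpha \sum_{h'=h}^H 2^{-l_{h'}^k(s_{h'}^k)} + \sum_{h'=h}^H \gamma_{h'}^k + (H - h + 1)\epseff$, as claimed. The argument is genuinely routine given Claim \ref{clmEstErrorLearner}; the only point demanding care is ensuring that the conditional expectation $\E_{s_{h+1}^k}$ appearing in the $\bar{Q}_h^k$-versus-$Q_h^{\pi_k}$ comparison is taken with respect to the \emph{same} transition distribution $P_h(\cdot|s_h^k, a_h^k)$ that defines $\gamma_h^k$, so that the martingale decomposition applies verbatim and no projection (handled upstream in Claim \ref{clmEstErrorLearner}) leaks into this step.
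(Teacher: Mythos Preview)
Your proposal is correct and follows essentially the same approach as the paper: both insert $\bar{Q}_h^k(s_h^k,\pi_h^k(s_h^k))$ as an intermediary, apply Claim \ref{clmEstErrorLearner} to the first piece, rewrite the second piece as the expected next-step gap, convert that to the realized next-step gap plus $\gamma_h^k$, and then unroll the recursion from $h$ down to $H$ using the terminal convention $V_{H+1}^k=V_{H+1}^{\pi_k}=0$. The only cosmetic difference is that the paper phrases the unrolling as a formal backward induction on $h$, while you phrase it as ``establish the one-step recursion and telescope''; the content is identical.
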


\subsection{Optimal policy error} \label{appOptimal}

Next, we bound the difference between the optimal value function $V_h^\star$ and the value function estimate $V_h^k$. We start with two intermediate results. First, for each $k \in [K]$, $h \in [H]$, $s \in \S$, and $l \in [l_h^k(s) \wedge L]$, let $\bar{\pi}_{h,l}^k(s) = \argmax_{a \in \A_{h,l}^k(s) } \bar{Q}_h^k(s,a)$. We can then generalize \cite{takemura2021parameter}'s Lemma 4.
\begin{clm}[Error across phases] \label{clmPhaseDecay}
On the event $\mathcal{G}$, for any $k \in [K]$, $h \in [H]$, $s \in \S$, and $l \in [l_h^k(s) \wedge L]$, we have $\bar{Q}_h^k ( s , \bar{\pi}_{h,1}^k(s) ) - \bar{Q}_h^k ( s , \bar{\pi}_{h,l}^k(s) ) \leq 2 (l-1)  \epseff$.
\end{clm}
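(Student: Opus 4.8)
The plan is to prove the bound by telescoping the decay of $\bar{Q}_h^k(s, \bar{\pi}_{h,l'}^k(s))$ across consecutive phases. Writing $D(l') = \bar{Q}_h^k(s, \bar{\pi}_{h,l'}^k(s))$ and noting that $\A_{h,l'+1}^k(s) \subseteq \A_{h,l'}^k(s)$ by the elimination in Line \ref{lnSubEliminate} of Algorithm \ref{algSub}, the sequence $D(l')$ is nonincreasing, so $D(1) - D(l) = \sum_{l'=1}^{l-1} (D(l') - D(l'+1))$ and it suffices to show each single-phase drop satisfies $D(l') - D(l'+1) \leq 2\epseff$. Before doing so, I would record the index bookkeeping: since $l \leq l_h^k(s) \wedge L$, every transition index $l' \in \{1,\ldots,l-1\}$ satisfies $l' \leq L - 1$ and $l' \leq l_h^k(s) - 1$, so phase $l'$ neither explores (exploration first occurs at $l_h^k(s)$) nor exits via the terminal $l = L$ branch; hence it takes the elimination branch, $\A_{h,l'+1}^k(s)$ is well-defined, and Lemma \ref{lemEstError} applies at phase $l'$ to every action of $\A_{h,l'}^k(s)$.

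To bound a single drop, fix $l' \in \{1,\ldots,l-1\}$ and set $a^\star = \bar{\pi}_{h,l'}^k(s)$ and $\hat{a} = \pi_{h,l'}^k(s) = \argmax_{a \in \A_{h,l'}^k(s)} \phi(s,a)^\trans \tilde{w}_{h,l'}^k$, so that $V_{h,l'}^k(s) = \phi(s,\hat{a})^\trans \tilde{w}_{h,l'}^k$. The maximizer $\hat{a}$ trivially meets the retention criterion in Line \ref{lnSubEliminate}, so $\hat{a} \in \A_{h,l'+1}^k(s)$ and therefore $D(l'+1) \geq \bar{Q}_h^k(s,\hat{a})$. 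If $a^\star$ also survives the elimination, then $D(l'+1) \geq \bar{Q}_h^k(s,a^\star) = D(l')$ and the drop is nonpositive; so the only nontrivial case is when $a^\star$ is eliminated, i.e.\ $\phi(s,a^\star)^\trans \tilde{w}_{h,l'}^k < V_{h,l'}^k(s) - 2^{1-l'}\alpha$.

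In that case I would apply Lemma \ref{lemEstError} to $a^\star$ and to $\hat{a}$ (both in $\A_{h,l'}^k(s)$, with $l' \leq l_h^k(s)-1$) to obtain $\bar{Q}_h^k(s,a^\star) \leq \phi(s,a^\star)^\trans \tilde{w}_{h,l'}^k + 2^{-l'}\alpha + \epseff$ and $\bar{Q}_h^k(s,\hat{a}) \geq V_{h,l'}^k(s) - 2^{-l'}\alpha - \epseff$. Subtracting these and substituting the elimination inequality $\phi(s,a^\star)^\trans \tilde{w}_{h,l'}^k - V_{h,l'}^k(s) < -2^{1-l'}\alpha$ makes the $\alpha$-terms cancel, since $2 \cdot 2^{-l'}\alpha = 2^{1-l'}\alpha$, leaving $\bar{Q}_h^k(s,a^\star) - \bar{Q}_h^k(s,\hat{a}) < 2\epseff$. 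Chaining with $D(l'+1) \geq \bar{Q}_h^k(s,\hat{a})$ gives $D(l') - D(l'+1) \leq \bar{Q}_h^k(s,a^\star) - \bar{Q}_h^k(s,\hat{a}) < 2\epseff$, and summing over the $l-1$ transitions completes the proof.

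The one step that must be tracked with care — and the reason the elimination margin is chosen to be $2^{1-l'}\alpha$ rather than anything smaller — is exactly the eliminated case: the estimate $\phi(s,\cdot)^\trans \tilde{w}_{h,l'}^k$ can be off by the full radius $2^{-l'}\alpha + \epseff$ on both $a^\star$ and the retained maximizer $\hat{a}$, and the threshold $2^{1-l'}\alpha$ is precisely large enough to absorb these two radii so that only the irreducible $2\epseff$ of effective misspecification survives. Everything else is routine bookkeeping over the nested action sets and the telescoping sum.
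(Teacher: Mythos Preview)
Your proposal is correct and follows essentially the same approach as the paper: both reduce to the single-phase bound $\bar{Q}_h^k(s,\bar{\pi}_{h,l'}^k(s)) - \bar{Q}_h^k(s,\bar{\pi}_{h,l'+1}^k(s)) \leq 2\epseff$, split on whether $\bar{\pi}_{h,l'}^k(s)$ survives elimination, and in the eliminated case apply Lemma~\ref{lemEstError} to both $\bar{\pi}_{h,l'}^k(s)$ and $\pi_{h,l'}^k(s)$ so that the $2^{1-l'}\alpha$ threshold cancels the two confidence radii. The only cosmetic difference is that you telescope explicitly while the paper phrases it as induction on $l$.
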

\begin{proof} 
We use induction on $l$. For $l=1$, the bound holds with equality. Assuming it holds for $l \in [(l_h^k(s)\wedge L)-1]$, we prove it for $l+1$. By the inductive hypothesis, it suffices to show
\begin{equation} \label{eqLinDecayInLKey}
\bar{Q}_h^k ( s , \bar{\pi}_{h,l}^k(s))  - \bar{Q}_h^k ( s , \bar{\pi}_{h,l+1}^k(s) ) \leq 2 \epseff .
\end{equation}
If $\bar{\pi}_{h,l}^k(s) \in \A_{h,l+1}^k(s)$, then since $\A_{h,l+1}^k(s) \subset \A_{h,l}^k(s)$, we have  $\bar{\pi}_{h,l+1}^k(s) = \bar{\pi}_{h,l}^k(s)$ by definition, so \eqref{eqLinDecayInLKey} is immediate. Hence, it only remains to prove \eqref{eqLinDecayInLKey} when $\bar{\pi}_{h,l}^k(s) \notin \A_{h,l+1}^k(s)$. Since $\pi_{h,l}^k(s) \in \A_{h,l+1}^k(s)$ in Algorithm \ref{algSub}, the definition of $\bar{\pi}_{h,l+1}^k(s)$ implies
\begin{equation}
\bar{Q}_h^k ( s , \bar{\pi}_{h,l}^k(s) ) - \bar{Q}_h^k ( s , \bar{\pi}_{h,l+1}^k(s) ) \leq \bar{Q}_h^k ( s , \bar{\pi}_{h,l}^k(s) ) - \bar{Q}_h^k ( s , \pi_{h,l}^k(s) ) .
\end{equation} 
By $l \leq l_h^k(s)-1$, Lemma \ref{lemEstError}, the assumption that $\bar{\pi}_{h,l}^k(s) \notin \A_{h,l+1}^k(s)$, and Algorithm \ref{algSub}, we have
\begin{align}
\bar{Q}_h^k ( s , \bar{\pi}_{h,l}^k(s) ) - \bar{Q}_h^k ( s , \pi_{h,l}^k(s) ) & \leq \phi ( s , \bar{\pi}_{h,l}^k(s) )^\trans \tilde{w}_{h,l}^k - \phi ( s , \pi_{h,l}^k(s) )^\trans \tilde{w}_{h,l}^k + 2^{1-l} \alpha + 2 \epseff < 2 \epseff .
\end{align}
Combining the previous two inequalities, we obtain the desired bound \eqref{eqLinDecayInLKey}.
\end{proof}

As an immediate corollary, we have the following.
\begin{cor}[Error across phases] \label{corPhaseDecay}
On the event $\mathcal{G}$, for any $k \in [K]$, $h \in [H]$, $s \in \S$, and $l \in [l_h^k(s) \wedge L]$, we have $\bar{Q}_h^k ( s , \bar{\pi}_{h,1}^k(s) ) - \bar{Q}_h^k ( s , \bar{\pi}_{h,l}^k(s) ) \leq 2 (L-1) \epseff$.
\end{cor}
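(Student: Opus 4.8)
The plan is to derive Corollary \ref{corPhaseDecay} directly from Claim \ref{clmPhaseDecay} by a one-line monotonicity argument, since the only difference between the two statements is that the phase-dependent bound $2(l-1)\epseff$ is replaced by the uniform bound $2(L-1)\epseff$. First I would fix arbitrary $k \in [K]$, $h \in [H]$, $s \in \S$, and $l \in [l_h^k(s) \wedge L]$, and invoke Claim \ref{clmPhaseDecay} to obtain
\begin{equation}
\bar{Q}_h^k ( s , \bar{\pi}_{h,1}^k(s) ) - \bar{Q}_h^k ( s , \bar{\pi}_{h,l}^k(s) ) \leq 2 (l-1)  \epseff .
\end{equation}

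Next I would use the range constraint on $l$: since $l \in [l_h^k(s) \wedge L]$, we have $l \leq l_h^k(s) \wedge L \leq L$, hence $l - 1 \leq L - 1$. Because $\epseff = 2 \alpha \sqrt{d \epsrnd } + 28 H \sqrt{ d L } \epsmis$ is a sum of nonnegative terms (recall $\alpha, \epsrnd, \epsmis \geq 0$), it follows that $2(l-1) \epseff \leq 2(L-1) \epseff$. Chaining this with the displayed inequality gives the claimed bound, and since $k, h, s, l$ were arbitrary, the corollary holds on the event $\mathcal{G}$.

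There is no real obstacle here: the result is purely a consequence of replacing $l$ by its upper bound $L$ in an inequality whose right-hand side is monotone nondecreasing in $l$. The only point that requires (trivial) verification is the nonnegativity of $\epseff$, which is immediate from its definition, so the entire argument is a single monotonicity step with no nontrivial calculation.
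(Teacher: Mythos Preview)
Your proposal is correct and matches the paper's approach exactly: the paper simply states this as an immediate corollary of Claim \ref{clmPhaseDecay}, and the only step needed is the monotonicity observation $l \leq L$ (with $\epseff \geq 0$) that you spell out.
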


We can now bound the difference between $V_h^\star$ and $V_h^k$ in terms of the difference at the next step, i.e., between $V_{h+1}^\star$ and $V_{h+1}^k$. This is similar in spirit to \cite{jin2020provably}'s Lemma B.5.
\begin{clm}[Optimal error, one-step] \label{clmOptimismH}
On the event $\mathcal{G}$, for any $k \in [K]$, $h \in [H]$, and $s \in \S$,
\begin{equation}
V_h^\star(s) - V_h^k(s) \leq \max \left\{ \max_{ a \in \A } \int_{s' \in \S} ( V_{h+1}^\star(s') - V_{h+1}^k(s') ) P_h(s'|s,a) + 2L\epseff + 2^{-L} \alpha , 0 \right\} .
\end{equation}
\end{clm}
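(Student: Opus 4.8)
The plan is to work with the \emph{unprojected} value estimate, reduce everything to a single comparison at the optimal action, and let the projection $\mathcal{P}_{[0,H]}$ generate the outer $\max\{\cdot,0\}$ for free. Write $\hat{V}_h^k(s)$ for the argument of $\mathcal{P}_{[0,H]}$ in Lines \ref{lnSubExploreOther} and \ref{lnSubExploitOther} of Algorithm \ref{algSub}, so $V_h^k(s) = \mathcal{P}_{[0,H]}(\hat{V}_h^k(s))$. Since $\mathcal{P}_{[0,H]}(x) \geq \min\{x,H\}$ and $V_h^\star(s) \leq H$, we get $V_h^k(s) \geq \min\{\hat{V}_h^k(s), V_h^\star(s)\}$, hence $V_h^\star(s) - V_h^k(s) \leq \max\{V_h^\star(s) - \hat{V}_h^k(s), 0\}$. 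So it suffices to prove the unprojected bound
\[
V_h^\star(s) - \hat{V}_h^k(s) \leq \max_{a \in \A} \int_{s' \in \S}(V_{h+1}^\star(s') - V_{h+1}^k(s')) P_h(s'|s,a) + 2L\epseff + 2^{-L}\alpha .
\]
Letting $a^\star = \pi_h^\star(s)$ and using $V_h^\star(s) = \bar{Q}_h^k(s,a^\star) + \int_{s'}(V_{h+1}^\star(s') - V_{h+1}^k(s'))P_h(s'|s,a^\star)$, then bounding the integral by its maximum over $a$, this reduces to the core estimate $\bar{Q}_h^k(s,a^\star) - \hat{V}_h^k(s) \leq 2L\epseff + 2^{-L}\alpha$.

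The key to the core estimate is that although $a^\star$ may be eliminated during the phased elimination, the $\bar{Q}$-value of the \emph{retained} best action cannot decay too much. Since $\A_{h,1}^k(s) = \A$, the action $\bar{\pi}_{h,1}^k(s)$ is a global maximizer of $\bar{Q}_h^k(s,\cdot)$, so $\bar{Q}_h^k(s,\bar{\pi}_{h,1}^k(s)) \geq \bar{Q}_h^k(s,a^\star)$; combined with Corollary \ref{corPhaseDecay}, this gives $\bar{Q}_h^k(s,\bar{\pi}_{h,l}^k(s)) \geq \bar{Q}_h^k(s,a^\star) - 2(L-1)\epseff$ for every reachable phase $l$. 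It then remains to lower bound $\hat{V}_h^k(s)$ by the \emph{estimated} value of $\bar{\pi}_{h,l}^k(s)$ and apply Lemma \ref{lemEstError} to pass from estimated values back to $\bar{Q}_h^k$.

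I would carry this out in the two cases defining $\hat{V}_h^k(s)$, first invoking Corollary \ref{corExploratory} to rule out $l_h^k(s)=1$. In the exploit case $l_h^k(s)=L+1$, we have $\hat{V}_h^k(s) = \max_{a \in \A_{h,L}^k(s)}\phi(s,a)^\trans \tilde{w}_{h,L}^k \geq \phi(s,\bar{\pi}_{h,L}^k(s))^\trans \tilde{w}_{h,L}^k$, and Lemma \ref{lemEstError} (phase $L$) with the decay bound give $\hat{V}_h^k(s) \geq \bar{Q}_h^k(s,a^\star) - (2L-1)\epseff - 2^{-L}\alpha$. In the explore case $l := l_h^k(s) \in \{2,\ldots,L\}$, we have $\hat{V}_h^k(s) = V_{h,l-1}^k(s) + 2^{1-l}\alpha$ with $V_{h,l-1}^k(s) = \max_{a \in \A_{h,l-1}^k(s)}\phi(s,a)^\trans \tilde{w}_{h,l-1}^k \geq \phi(s,\bar{\pi}_{h,l-1}^k(s))^\trans \tilde{w}_{h,l-1}^k$; here Lemma \ref{lemEstError} (phase $l-1$) contributes an error $-2^{1-l}\alpha - \epseff$ that is \emph{exactly cancelled} by the added bonus $2^{1-l}\alpha$, yielding $\hat{V}_h^k(s) \geq \bar{Q}_h^k(s,a^\star) - (2L-1)\epseff$. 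In both cases $\bar{Q}_h^k(s,a^\star) - \hat{V}_h^k(s) \leq (2L-1)\epseff + 2^{-L}\alpha \leq 2L\epseff + 2^{-L}\alpha$, closing the core estimate.

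The hard part is conceptual rather than computational: maintaining optimism (i.e.\ $\hat{V}_h^k(s) \gtrsim V_h^\star(s)$) even on steps where the algorithm is about to take an \emph{exploratory} action, which is generally suboptimal and whose on-policy estimate could be small. This is exactly where the off-policy form of $\hat{V}_h^k(s)$ in Line \ref{lnSubExploreOther} (Remark \ref{remOffPolicy}) matters: the bonus $2^{1-l}\alpha$ must be large enough to absorb the phase-$(l-1)$ estimation error from Lemma \ref{lemEstError}, yet small enough to keep the final slack at $O(L\epseff + 2^{-L}\alpha)$. Getting this cancellation exact, while simultaneously controlling the cumulative elimination loss through Corollary \ref{corPhaseDecay}, is the crux; the projection reduction and the decomposition at $a^\star$ are then routine.
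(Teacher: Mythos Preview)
Your proof is correct and follows essentially the same approach as the paper: both decompose $V_h^\star(s)$ through $\bar{Q}_h^k(s,\pi_h^\star(s))$, invoke Corollary \ref{corPhaseDecay} to control the cumulative elimination loss, and apply Lemma \ref{lemEstError} at phase $l_h^k(s)-1$ to relate $\bar{Q}_h^k$ to $V_{h,l}^k$. Your handling of the projection via $V_h^k(s) \geq \min\{\hat{V}_h^k(s), V_h^\star(s)\}$ is a slightly cleaner packaging of the paper's explicit case split on whether $\mathcal{P}_{[0,H]}$ clips to $H$, but the substance of the two arguments is identical.
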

\begin{proof}
Let $l = l_h^k(s)-1 \in [L]$. Note that if (1) $l \in [L-1]$ and $V_{h,l}^k(s) + 2^{-l} \alpha > H$ or (2) $l = L$ and $V_{h,l}^k(s) > H$, then $V_h^k(s) = H$, so the bound $V_h^\star(s) - V_h^k(s) \leq 0$ holds. Hence, we assume for the remainder of the proof that either (3) $l \in [L-1]$ and $V_{h,l}^k(s) + 2^{-l} \alpha \leq H$ or (4) $l = L$ and $V_{h,l}^k(s) \leq H$. By definition of $V_h^\star$ and $\bar{Q}_h^k$, we have
\begin{align} 
V_h^\star(s) & = \bar{Q}_h^k ( s , \pi_h^\star(s) )  + \int_{s' \in \S} ( V_{h+1}^\star(s') - V_{h+1}^k(s') ) P_h(s'|s,\pi_h^\star(s)) \\
& \leq \bar{Q}_h^k ( s , \pi_h^\star(s) ) + \max_{a \in \A} \int_{s' \in \S} ( V_{h+1}^\star(s') - V_{h+1}^k(s') ) P_h(s'|s,a)
\end{align}
By definition of $\bar{\pi}_{h,1}^k(s)$, since $\A_{h,1}^k(s) = \A$, and by Corollary \ref{corPhaseDecay},
\begin{equation} 
\bar{Q}_h^k ( s , \pi_h^\star(s) ) \leq \bar{Q}_h^k ( s , \bar{\pi}_{h,1}^k(s) ) \leq \bar{Q}_h^k ( s , \bar{\pi}_{h,l}^k(s) ) + 2 (L-1)  \epseff .
\end{equation}
Again using the definition of $\bar{\pi}_{h,l}^k(s)$, along with Lemma \ref{lemEstError}, we know that
\begin{align} 
\bar{Q}_h^k ( s , \bar{\pi}_{h,l}^k(s) )  = \max_{ a \in \A_{h,l}^k(s) } \bar{Q}_h^k ( s , a ) \leq \max_{a \in \A_{h,l}^k(s) } \phi(s,a)^\trans \tilde{w}_{h,l}^k + 2^{-l} \alpha  +  \epseff = V_{h,l}^k(s) + 2^{-l} \alpha  +  \epseff  .
\end{align}
Hence, stringing together the inequalities, we obtain
\begin{equation} \label{eqOptimismHString}
V_h^\star(s) \leq \max_{a \in \A} \int_{s' \in \S} ( V_{h+1}^\star(s') - V_{h+1}^k(s') ) P_h(s'|s,a) + V_{h,l}^k(s) + 2^{-l} \alpha + 2 L \epseff .
\end{equation}
Now in case (3), we have $V_h^k(s) = \mathcal{P}_{[0,H]}( V_{h,l}^k(s) + 2^{-l} \alpha )$ and $V_{h,l}^k(s) + 2^{-l} \alpha \leq H$, which together imply
\begin{equation} \label{eqOptimismHTwoCase}
V_{h,l}^k(s) + 2^{-l} \alpha \leq V_h^k(s) < V_h^k(s) + 2^{-L} \alpha .
\end{equation}
In case (4), we have $V_h^k(s) = \mathcal{P}_{[0,H]}( V_{h,l}^k(s) )$ and $V_{h,l}^k(s) \leq H$, which implies $V_{h,l}^k(s) \leq V_h^k(s)$. Hence, because $L = l$ in case (4), we again have \eqref{eqOptimismHTwoCase}. Therefore, combining \eqref{eqOptimismHString} and \eqref{eqOptimismHTwoCase}, we obtain
\begin{align}
V_h^\star(s) - V_h^k(s) & \leq \max_{a \in \A} \int_{s' \in \S} ( V_{h+1}^\star(s') - V_{h+1}^k(s') ) P_h(s'|s,a) +  2 L \epseff + 2^{-L} \alpha .  \qedhere 
\end{align}
\end{proof}

Finally, a simple inductive argument yields the following. See Appendix \ref{appMainProof} for details.
\begin{lem}[Optimal error, multi-step] \label{lemOptimism}
On the event $\mathcal{G}$, for any $k \in [K]$, $h \in [H]$, and $s \in \S$, we have $V_h^\star(s) - V_h^k(s) \leq ( 2 L \epseff + 2^{-L} \alpha ) ( H-h+1)$.
\end{lem}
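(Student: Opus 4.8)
The plan is to prove the bound by backward induction on $h$, running from $h = H$ down to $h = 1$ and feeding the one-step estimate of Claim~\ref{clmOptimismH} into itself. The driving observation is that $P_h(\cdot|s,a)$ is a probability measure, so any uniform bound on the gap $V_{h+1}^\star - V_{h+1}^k$ at step $h+1$ transfers directly through the integral $\int_{s' \in \S} ( V_{h+1}^\star(s') - V_{h+1}^k(s') ) P_h(s'|s,a)$ appearing in Claim~\ref{clmOptimismH}. In this way the per-step error $2L\epseff + 2^{-L}\alpha$ simply telescopes over the $H-h+1$ remaining steps.

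For the base case I would use the convention $V_{H+1}^\star = V_{H+1}^k = 0$, so that at $h = H$ the integral term in Claim~\ref{clmOptimismH} vanishes and the claim yields $V_H^\star(s) - V_H^k(s) \leq \max \{ 2L\epseff + 2^{-L}\alpha , 0 \} = 2L\epseff + 2^{-L}\alpha$, which is exactly the asserted bound for $H-h+1 = 1$. (Here I use $2L\epseff + 2^{-L}\alpha \geq 0$, which holds since $\epseff, \alpha \geq 0$.)

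For the inductive step, suppose the bound holds at step $h+1$, i.e., $V_{h+1}^\star(s') - V_{h+1}^k(s') \leq (2L\epseff + 2^{-L}\alpha)(H-h)$ for every $s' \in \S$. Since $P_h(\cdot|s,a)$ is nonnegative and integrates to one, this integrand bound gives $\int_{s' \in \S} ( V_{h+1}^\star(s') - V_{h+1}^k(s') ) P_h(s'|s,a) \leq (2L\epseff + 2^{-L}\alpha)(H-h)$ for every $a \in \A$, and hence the same bound on the maximum over $a$. Substituting into Claim~\ref{clmOptimismH} and adding the residual $2L\epseff + 2^{-L}\alpha$ term produces an upper bound of $(2L\epseff + 2^{-L}\alpha)(H-h) + (2L\epseff + 2^{-L}\alpha) = (2L\epseff + 2^{-L}\alpha)(H-h+1)$ inside the outer maximum. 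As this quantity is nonnegative, taking the maximum with $0$ does not increase it, which closes the induction.

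This argument is entirely routine, so I do not expect any real obstacle: the substantive analytic work (splitting the estimation error, invoking optimism across phases via Corollary~\ref{corPhaseDecay}, and handling the projection) has already been absorbed into Claim~\ref{clmOptimismH}. The only points needing minor care are the correct handling of the outer $\max\{\cdot,0\}$, which is harmless precisely because the target bound is nonnegative, and the straightforward bookkeeping of the telescoped per-step error.
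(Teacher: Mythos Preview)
Your proposal is correct and mirrors the paper's own proof essentially line for line: backward induction on $h$, base case via $V_{H+1}^\star = V_{H+1}^k = 0$, and the inductive step obtained by pushing the uniform hypothesis through the probability integral in Claim~\ref{clmOptimismH} and then noting the outer $\max\{\cdot,0\}$ is harmless because the target bound is nonnegative.
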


\subsection{Regret bound} \label{appRegret}

First observe that by Algorithms \ref{algMain}-\ref{algSub} and Corollary \ref{corExploratory} from Appendix \ref{appMainProofDetails}, for any $h \in [H]$, we have
\begin{align}
\sum_{k=1}^K 2^{ - l_h^k(s_h^k) }  & = \sum_{k=1}^K \sum_{l=2}^{L+1} 2^{-l} \ind (  l_h^k(s_h^k) = l ) = \sum_{l=2}^L 2^{-l} \sum_{k=1}^K \ind ( l_h^k(s_h^k) = l ) + 2^{-(L+1)} \sum_{k=1}^K \ind ( l_h^k(s_h^k) =L+1) \\
& = \sum_{l=2}^L 2^{-l}  | \Psi_{h,l}^K | + 2^{-(L+1)} \sum_{k=1}^K \ind ( l_h^k(s_h^k) =L+1)  \leq \sum_{l=2}^L 2^{-l}  | \Psi_{h,l}^K | + 2^{-(L+1)} K .
\end{align}
Combined with Lemmas \ref{lemEstErrorLearner} and \ref{lemOptimism}, on the event $\mathcal{G}$, we obtain
\begin{align} \label{eqRegDecomp}
R(K) & \leq 8 \alpha \sum_{h=1}^H \sum_{k=1}^K 2^{- l_h^k(s_h^k) }  + \sum_{k=1}^K \sum_{h=1}^H \gamma_h^k + ( 2^{-L} \alpha  + 3 L \epseff ) HK \\
& \leq 8 \alpha \sum_{h=1}^H \sum_{l=2}^L 2^{-l}  | \Psi_{h,l}^K | + \sum_{k=1}^K \sum_{h=1}^H \gamma_h^k + ( 2^{2-L} \alpha  +2^{-L} \alpha  + 3 L \epseff ) HK  .
\end{align}
For the first summation, by Claim \ref{clmPsiBound} and a simple geometric series computation, for any $h \in [H]$,
\begin{align} 
\sum_{l=2}^L 2^{ - l } | \Psi_{h,l}^K |  & \leq 40 d L \sum_{l=2}^L 2^{-l} \cdot 4^l  = 40 d L \sum_{l=2}^L 2^l < 80 \cdot 2^L d L .
\end{align}
Alternatively, we can use Cauchy-Schwarz and Claim \ref{clmPsiBound} to obtain
\begin{align}
\sum_{l=2}^L 2^{ - l } | \Psi_{h,l}^K | & = \sum_{l=2}^L 2^{-l} \sqrt{ | \Psi_{h,l}^K | } \sqrt{ | \Psi_{h,l}^K | } \leq \sqrt{ 40 d L } \sum_{l=1}^L \sqrt{ | \Psi_{h,l}^K | }  \leq \sqrt{ 40 d L^2 \sum_{l=1}^L | \Psi_{h,l}^K | } \leq \sqrt{ 40 d L^2  K } .
\end{align}
Hence, combining the previous two inequalities, we have shown
\begin{equation} \label{eqRegBonus}
\sum_{l=2}^L 2^{ - l } | \Psi_{h,l}^K | \leq \min \{ 80 \cdot 2^L d L , \sqrt{ 40 d L^2 K } \} = \sqrt{ 40 d L^2  \min \{ 160 \cdot 2^{2L} d ,  K  \} } .
\end{equation}
Returning to \eqref{eqRegDecomp}, since $\{ \gamma_h^k \}_{k \in [K] h \in [H] }$ is a martingale difference sequence with $|\gamma_h^k| \leq 2H$, the Azuma-Hoeffding inequality implies that with probability at least $1-\delta/2$, 
\begin{equation} \label{eqRegMDS}
\sum_{k=1}^K \sum_{h=1}^H \gamma_h^k \leq \sqrt{ 8 H^3 K \log(2/\delta) }  .
\end{equation}
For the last term in \eqref{eqRegDecomp}, by definition $\epseff = 2 \alpha \sqrt{d \epsrnd } + 28 H \sqrt{ d L } \epsmis$ and $\epsrnd = 2^{-4L}/d$, and since $L \leq 2^{L-1}\ \forall\ L \in \N$, we have
\begin{equation} \label{eqRegEpsEff}
L \epseff  = 2^{ 1-2 L} L \alpha + 28 \sqrt{dL^3} H \epsmis \leq 2^{-L} \alpha + 28 \sqrt{ d L^3 } H \epsmis .
\end{equation}
Hence, when $\mathcal{G}$ and the event \eqref{eqRegMDS} both occur (which happens with probability at least $1-\delta$ by Lemma \ref{lemConcentration} and Azuma-Hoeffding), we can combine \eqref{eqRegDecomp}, \eqref{eqRegBonus}, \eqref{eqRegMDS}, and \eqref{eqRegEpsEff} to obtain
\begin{equation}
R(K) = O \left( \alpha \sqrt{ d  H^2 L^2 \min \{ 2^{2L} d , K \} } + \sqrt{H^3 K \log(\delta^{-1})} + 2^{-L } \alpha HK + \sqrt{ d   L^3   } H^2 K \epsmis \right)  .
\end{equation}
Recall $\alpha = 42 d H L \sqrt{ \log ( 3 d H L / \delta ) }$ and $\iota = \log(3dH/\delta)$, so $\alpha = O ( \sqrt{d^2 H^2 L^3 \iota} )$. Substituting above,
\begin{align}
R(K) = O \left( \sqrt{ d^3  H^4 L^5  \min \{ 2^{2L} d  , K \} \iota }  + \sqrt{H^3 K \iota}  + 2^{-L }  \sqrt{ d^2  L^3 \iota } H^2 K + \sqrt{ d   L^3   } H^2 K \epsmis \right)  .
\end{align}
The regret bound in Theorem \ref{thmMain} follows by definition $L = \ceil{ \log_2  ( \sqrt{d}/ \epstol) }$.

\section{Theorem \ref{thmMain} proof details} \label{appMainProofDetails}

\subsection{Regret bound details}

\begin{proof}[Proof of Claim \ref{clmTildeEpsrnd}]
The first bound follows from Holder's inequality, Algorithm \ref{algMain}, a standard norm equivalence, and Assumption \ref{assLin}:
\begin{equation}
| \phi(s,a)^\trans ( w_{h,l}^k - \tilde{w}_{h,l}^k ) | \leq \| \phi(s,a) \|_1 \| w_{h,l}^k - \tilde{w}_{h,l}^k \|_\infty \leq \sqrt{d} \| \phi(s,a) \|_2 \epsrnd \leq  \sqrt{d} \epsrnd .
\end{equation}
For the second bound, by similar logic, we have 
\begin{align}
| \phi(s,a)^\trans ( ( \Lambda_{h,l}^k )^{-1} - ( \tilde{\Lambda}_{h,l}^k )^{-1} ) \phi(s,a) | & \leq \sum_{i,j = 1}^d | \phi_i(s,a) | | \phi_j(s,a) | | ( ( \Lambda_{h,l}^k )^{-1} - ( \tilde{\Lambda}_{h,l}^k )^{-1} )_{ij} | \\
& \leq \| \phi(s,a) \|_1^2 \epsrnd \leq  d\| \phi(s,a) \|_2^2 \epsrnd \leq d \epsrnd ,
\end{align}
which implies that
\begin{align}
\| \phi(s,a) \|_{ (\Lambda_{h,l}^k )^{-1} } & \leq \sqrt{ | \phi(s,a)^\trans ( \tilde{\Lambda}_{h,l}^k )^{-1} \phi(s,a) | + | \phi(s,a)^\trans ( ( \Lambda_{h,l}^k )^{-1} - ( \tilde{\Lambda}_{h,l}^k )^{-1} ) \phi(s,a) | } \\
& \leq \| \phi(s,a) \|_{ (\tilde{\Lambda}_{h,l}^k )^{-1} } + \sqrt{ d \epsrnd } .
\end{align}
By symmetry, $\| \phi(s,a) \|_{ (\tilde{\Lambda}_{h,l}^k )^{-1} } \leq \| \phi(s,a) \|_{ (\Lambda_{h,l}^k )^{-1} } + \sqrt{d \epsrnd}$ as well, which completes the proof.
\end{proof}

\begin{clm} \label{clmEllPot}
For any $k \in [K]$, $h \in [H]$, and $l \in [L]$, we have
\begin{equation}
\sum_{\tau \in \Psi_{h,l}^k} \| \phi(s_h^\tau,a_h^\tau) \|_{ (\Lambda_{ h,l }^\tau )^{-1} }^2 \leq 2 \log ( \det  (\Lambda_{ h,l }^k ) / \det ( 16 I )  ) \leq 2 d \log ( 1 + | \Psi_{h,l}^k | / (16d) ) .
\end{equation}
\end{clm}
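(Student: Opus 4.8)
The plan is to prove this by the standard elliptical (log-determinant) potential argument of \cite{abbasi2011improved}, taking care with the indexing convention in Line \ref{lnLambdaUR} whereby $\Lambda_{h,l}^\tau$ is built from $\Psi_{h,l}^{\tau-1}$, so that the episode-$\tau$ feature is excluded from its own matrix. First I would enumerate the exploration episodes $\Psi_{h,l}^k = \{\tau_1 < \cdots < \tau_m\}$ with $m = |\Psi_{h,l}^k|$, write $\phi_j = \phi(s_h^{\tau_j},a_h^{\tau_j})$, and set $A_j := \Lambda_{h,l}^{\tau_j} = 16I + \sum_{i<j}\phi_i\phi_i^\trans$. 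By Lines \ref{lnPsiCase1A}--\ref{lnPsiCase1B} of Algorithm \ref{algMain}, consecutive matrices differ by a rank-one update, $A_{j+1} = A_j + \phi_j\phi_j^\trans$, so the rank-one determinant identity gives $\det(A_{j+1}) = \det(A_j)\,(1 + \|\phi_j\|_{A_j^{-1}}^2)$. Telescoping yields $\det(A_{m+1}) = \det(16I)\prod_{j=1}^m (1 + \|\phi_j\|_{A_j^{-1}}^2)$, where $A_{m+1} = 16I + \sum_{\tau \in \Psi_{h,l}^k}\phi(s_h^\tau,a_h^\tau)\phi(s_h^\tau,a_h^\tau)^\trans$ accumulates all exploration rounds (and equals $\Lambda_{h,l}^k$ when episode $k$ is not itself an exploration round, and dominates it otherwise).

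For the first inequality I would apply the elementary bound $x \le 2\log(1+x)$, valid for $x\in[0,1]$, to $x = \|\phi_j\|_{A_j^{-1}}^2$. The hypothesis $x\le 1$ holds because $A_j \succeq 16I$ forces $A_j^{-1} \preceq \tfrac{1}{16}I$, and $\|\phi_j\|_2 \le 1$ by Assumption \ref{assLin}, so $\|\phi_j\|_{A_j^{-1}}^2 \le 1/16 \le 1$. Summing and using the telescoped product then gives $\sum_{j=1}^m \|\phi_j\|_{A_j^{-1}}^2 \le 2\sum_{j=1}^m \log(1+\|\phi_j\|_{A_j^{-1}}^2) = 2\log\big(\det(A_{m+1})/\det(16I)\big)$, which is the left inequality (with the logarithmic determinant understood as that of the all-rounds matrix, which is exactly the quantity needed downstream in Claim \ref{clmPsiBound}).

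For the second inequality I would pass from the determinant to the cardinality by the determinant--trace (AM--GM) inequality: with $\lambda_1,\dots,\lambda_d>0$ the eigenvalues of $A_{m+1}$, $\det(A_{m+1}) = \prod_i \lambda_i \le \big(\tfrac{1}{d}\sum_i \lambda_i\big)^d = (\mathrm{trace}(A_{m+1})/d)^d$. Since $\mathrm{trace}(A_{m+1}) = 16d + \sum_{\tau\in\Psi_{h,l}^k}\|\phi(s_h^\tau,a_h^\tau)\|_2^2 \le 16d + |\Psi_{h,l}^k|$ and $\det(16I) = 16^d$, this gives $\det(A_{m+1})/\det(16I) \le (1 + |\Psi_{h,l}^k|/(16d))^d$, hence $2\log(\det(A_{m+1})/\det(16I)) \le 2d\log(1 + |\Psi_{h,l}^k|/(16d))$. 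I expect the only real subtlety, rather than a genuine obstacle, to be the off-by-one bookkeeping: one must verify that each self-normalized term $\|\phi_j\|_{A_j^{-1}}^2$ uses the matrix formed from the strictly earlier exploration episodes, since this is precisely what makes the rank-one determinant updates telescope cleanly and keeps the logarithm's argument in $[0,1]$. Everything else is a routine application of standard linear-bandit potential-function machinery.
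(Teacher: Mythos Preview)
Your proposal is correct: you give a self-contained version of exactly the elliptical potential argument that the paper obtains by citing Lemma~D.2 of \cite{jin2020provably} and Lemma~10 of \cite{abbasi2011improved}. Your remark about the off-by-one when $k\in\Psi_{h,l}^k$ (so that the middle determinant should strictly be that of $A_{m+1}$ rather than $\Lambda_{h,l}^k$) is a valid observation that the paper glosses over, but as you note, only the composite and the second inequalities are used downstream, so nothing is affected.
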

\begin{proof}
The first bound is a restatement of Lemma D.2 from \cite{jin2020provably}. The second follows from Lemma 10 of \cite{abbasi2011improved} and Assumption \ref{assLin}.
\end{proof}

\begin{proof}[Proof of Claim \ref{clmPsiBound}]
Since $\Psi_{h,l}^0 \subset \cdots \subset \Psi_{h,l}^K$, it suffices to prove the bound for $k = K$. By Algorithm \ref{algMain}, $\Psi_{h,l}^K$ is the set of episodes $k \in [K]$ for which $l_h^k(s_h^k) = l$. By Algorithm \ref{algSub}, $l_h^k(s_h^k) = l$ implies that $\| \phi(s_h^k,a_h^k) \|_{ (\tilde{\Lambda}_{ h,l }^k )^{-1} } > 2^{-l}$. Combined with Claim \ref{clmTildeEpsrnd}, and since $\epsrnd \leq 2^{-4l} / d \leq 2^{-2(l+1)} / d$ in Algorithm \ref{algMain}, we obtain
\begin{align}
1 & < 2^{2l} \| \phi(s_h^k,a_h^k) \|_{ (\tilde{\Lambda}_{ h,l }^k )^{-1} }^2 \leq 2^{2l+1} \| \phi(s_h^k,a_h^k) \|_{ ({\Lambda}_{ h,l }^k )^{-1} }^2 + 2^{2l+1} d \epsrnd  \leq 2^{2l+1} \| \phi(s_h^k,a_h^k) \|_{ ({\Lambda}_{ h,l }^k )^{-1} }^2 + 1/2 ,
\end{align}
or, after rearranging, $1 < 2^{2(l+1)} \| \phi(s_h^k,a_h^k) \|_{ ({\Lambda}_{ h,l }^k )^{-1} }^2$. Combined with Claim \ref{clmEllPot}, we obtain
\begin{equation} \label{eqPsiBoundInit}
|\Psi_{h,l}^K| = \sum_{k \in \Psi_{h,l}^K} 1 < 2^{2(l+1)} \sum_{k \in \Psi_{h,l}^K} \| \phi(s_h^k,a_h^k) \|_{ (\Lambda_{ h,l }^k )^{-1} }^2 \leq 2^{2l+3}  d \log ( 1 + |\Psi_{h,l}^K| / (16d) ) .
\end{equation}
Multiplying and dividing the right side of \eqref{eqPsiBoundInit} by $2$, we get
\begin{equation}
|\Psi_{h,l}^K| \leq 2^{2(l+2)} d \log \left( \sqrt{ 1 + |\Psi_{h,l}^K| / (16d) } \right) \leq 2^{2(l+2)} d \log \left( 1 + \sqrt{|\Psi_{h,l}^K| / (16d) } \right) \leq  2^{2 (l+1)} \sqrt{d |\Psi_{h,l}^K|} ,
\end{equation}
or, after rearranging,
\begin{equation} \label{eqPsiLooseBound}
|\Psi_{h,l}^K| \leq ( 2^{2(l+1)} \sqrt{d} )^2 = 2^{4(l+1)} d .
\end{equation}
Plugging \eqref{eqPsiLooseBound} into the right side of \eqref{eqPsiBoundInit}, we obtain
\begin{equation}
| \Psi_{h,l}^k | \leq 2^{2l+3} d \log ( 1 + 2^{4l} ) < 2^{2l+3} d \log ( 2 \cdot 2^{4l} ) \leq 2^{2l+3} d \log 2^{5l} < 2^{2l+3} d \cdot 5l = 40 \cdot 4^l d l . 
\end{equation}
Finally, since $40 < 64 = 2^6$ and $l \leq 2^{l-1}$ for any $l \in \N$, we have $40 \cdot 4^l  \cdot l < 2^6 \cdot 4^l \cdot 2^{l-1}  = 2^{3l+5}$.
\end{proof}

\begin{cor} \label{corDetRat}
For any $k \in [K]$, $h \in [H]$, and $l \in [L]$, we have $\det  (\Lambda_{ h,l }^k ) / \det ( 16 I ) \leq 2^{5dl}$.
\end{cor}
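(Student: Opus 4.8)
The plan is to combine the elliptical-potential (determinant) inequality of Claim \ref{clmEllPot} with the dataset cardinality bound of Claim \ref{clmPsiBound}; no new idea is needed, since the corollary is really just a repackaging of these two facts. First I would invoke the second inequality in Claim \ref{clmEllPot}, which (after dropping the middle expression) already gives
\begin{equation}
2 \log \left( \det ( \Lambda_{h,l}^k ) / \det ( 16 I ) \right) \leq 2 d \log \left( 1 + |\Psi_{h,l}^k| / (16d) \right) ,
\end{equation}
and hence, after dividing by $2$ and exponentiating,
\begin{equation}
\det ( \Lambda_{h,l}^k ) / \det ( 16 I ) \leq \left( 1 + |\Psi_{h,l}^k| / (16d) \right)^d .
\end{equation}

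Next I would control the base of this power using Claim \ref{clmPsiBound}, which gives $|\Psi_{h,l}^k| \leq 2^{3l+5} d$. Substituting yields
\begin{equation}
1 + \frac{|\Psi_{h,l}^k|}{16d} \leq 1 + \frac{2^{3l+5}}{16} = 1 + 2^{3l+1} \leq 2^{3l+2} \leq 2^{5l} ,
\end{equation}
where the penultimate step uses $1 \leq 2^{3l+1}$ and the last step uses $3l+2 \leq 5l$, both valid for $l \geq 1$ (and $l \in [L]$ throughout). Raising this to the $d$-th power then gives $\det(\Lambda_{h,l}^k)/\det(16I) \leq 2^{5dl}$, which is exactly the claim.

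The only ``obstacle'' here is elementary exponent bookkeeping, so the proof should be two or three lines. One small point worth stating explicitly is that Claim \ref{clmPsiBound} holds for every $k \in [K]$ (indeed its proof reduces the general case to $k = K$ via the monotonicity $\Psi_{h,l}^0 \subseteq \cdots \subseteq \Psi_{h,l}^K$), so no separate argument is required to handle arbitrary $k$. If one preferred slightly tighter constants, the alternative cardinality bound $|\Psi_{h,l}^k| \leq 40 \cdot 4^l d l$ from Claim \ref{clmPsiBound} could be used instead, but the cleaner $2^{3l+5}d$ form is more than enough to clear the $2^{5dl}$ target.
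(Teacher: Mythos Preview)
Your proposal is correct and essentially identical to the paper's proof: both combine Claim \ref{clmEllPot}'s determinant bound $(1+|\Psi_{h,l}^k|/(16d))^d$ with Claim \ref{clmPsiBound}'s cardinality bound $|\Psi_{h,l}^k|\le 2^{3l+5}d$, then do the same arithmetic $1+2^{3l+1}\le 2^{3l+2}\le 2^{5l}$.
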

\begin{proof}
Combining Claims \ref{clmEllPot} and \ref{clmPsiBound}, we obtain
\begin{align}
 & \det  (\Lambda_{ h,l }^k ) / \det ( 16 I ) \leq ( 1 + | \Psi_{h,l}^k | / ( 16 d ) )^d \leq ( 1 + 2^{3 l + 1} )^d < ( 2 \cdot 2^{3l+1} )^d = 2^{ ( 3l+2) d } \leq 2^{5dl} . \qedhere
\end{align}
\end{proof}

\begin{cor} \label{corEllPot}
For any $k \in [K]$, $h \in [H]$, and $l \in [L]$, we have $\sum_{\tau \in \Psi_{h,l}^{k-1}} \| \phi(s_h^\tau,a_h^\tau) \|_{ (\Lambda_{ h,l }^k )^{-1} } \leq ( 2^l d )^4$.
\end{cor}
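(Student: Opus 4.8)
The plan is to convert the $\ell_1$-type sum into an $\ell_2$-type sum---to which the elliptic potential estimate of Claim \ref{clmEllPot} applies---via Cauchy--Schwarz, and then to control the two resulting factors using Claims \ref{clmEllPot} and \ref{clmPsiBound}. Concretely, I would first write
\begin{equation}
\sum_{\tau \in \Psi_{h,l}^{k-1}} \| \phi(s_h^\tau,a_h^\tau) \|_{ (\Lambda_{ h,l }^k )^{-1} } \leq \sqrt{ | \Psi_{h,l}^{k-1} | }\; \sqrt{ \sum_{\tau \in \Psi_{h,l}^{k-1}} \| \phi(s_h^\tau,a_h^\tau) \|_{ (\Lambda_{ h,l }^k )^{-1} }^2 } .
\end{equation}

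The one genuine subtlety---and what I expect to be the main obstacle---is that Claim \ref{clmEllPot} bounds the squared sum using the norms induced by the \emph{running} matrices $(\Lambda_{h,l}^\tau)^{-1}$, whereas the statement here uses the \emph{current} matrix $(\Lambda_{h,l}^k)^{-1}$. To bridge this, I would observe that for any $\tau \in \Psi_{h,l}^{k-1}$ we have $\Psi_{h,l}^{\tau-1} \subseteq \Psi_{h,l}^{k-1}$, so $\Lambda_{h,l}^\tau \preceq \Lambda_{h,l}^k$ and hence $(\Lambda_{h,l}^k)^{-1} \preceq (\Lambda_{h,l}^\tau)^{-1}$; therefore $\| \phi(s_h^\tau,a_h^\tau) \|_{ (\Lambda_{h,l}^k)^{-1} }^2 \leq \| \phi(s_h^\tau,a_h^\tau) \|_{ (\Lambda_{h,l}^\tau)^{-1} }^2$, and Claim \ref{clmEllPot} then yields $\sum_{\tau \in \Psi_{h,l}^{k-1}} \| \phi(s_h^\tau,a_h^\tau) \|_{ (\Lambda_{h,l}^\tau)^{-1} }^2 \leq 2 d \log ( 1 + |\Psi_{h,l}^{k-1}| / (16d) )$.

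It then remains to bound both factors using the dataset bound $|\Psi_{h,l}^{k-1}| \leq 40 \cdot 4^l d l$ from Claim \ref{clmPsiBound}. Plugging this into the first factor gives $\sqrt{|\Psi_{h,l}^{k-1}|} = O(2^l \sqrt{dl})$, and into the log gives $2d\log ( 1 + |\Psi_{h,l}^{k-1}|/(16d) ) = O(dl)$ (the argument of the log is $O(4^l l)$). Multiplying, the whole expression is $O(2^l d l)$, which is comfortably below the crude target $(2^l d)^4 = 16^l d^4$ for every $l \geq 2$; I would only need to track the absolute constant so that the $O(\cdot)$ is genuinely dominated.

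Finally, I would dispose of the $l=1$ case separately: by Corollary \ref{corExploratory}, phase-$1$ exploration never occurs, so $\Psi_{h,1}^{k-1} = \emptyset$ and the sum is $0 \leq (2d)^4$. This edge case is worth isolating because the loose bound $|\Psi_{h,1}^{k-1}| \leq 160 d$ fed into the Cauchy--Schwarz estimate would not on its own beat $(2d)^4$; it is precisely the emptiness of $\Psi_{h,1}^{k-1}$ that saves the $l=1$ instance, while the elliptic potential argument handles all $l \geq 2$.
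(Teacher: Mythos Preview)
Your argument is correct, but the paper's proof is shorter because it sidesteps the one ``genuine subtlety'' you flagged. After the same Cauchy--Schwarz step, the paper bounds the squared sum with the \emph{fixed} matrix directly via the trace identity (quoted as Lemma D.1 of \cite{jin2020provably}): since $\Lambda_{h,l}^k = 16I + \sum_{\tau \in \Psi_{h,l}^{k-1}} \phi(s_h^\tau,a_h^\tau)\phi(s_h^\tau,a_h^\tau)^\trans$, one has
\[
\sum_{\tau \in \Psi_{h,l}^{k-1}} \| \phi(s_h^\tau,a_h^\tau) \|_{(\Lambda_{h,l}^k)^{-1}}^2 = \mathrm{tr}\big( (\Lambda_{h,l}^k)^{-1}(\Lambda_{h,l}^k - 16I) \big) \leq d .
\]
This removes the need for your monotonicity detour through the running matrices $(\Lambda_{h,l}^\tau)^{-1}$ and Claim~\ref{clmEllPot}, and it yields the clean numerics $\sqrt{2^{3l+5}d \cdot d} \leq 2^{4l} d \leq (2^l d)^4$ uniformly in $l \geq 1$---so no separate treatment of $l=1$ via Corollary~\ref{corExploratory} is required. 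Your route is perfectly valid (the monotonicity argument is correct, and invoking emptiness of $\Psi_{h,1}^{k-1}$ cleanly handles the edge case you identified), but the trace bound is the more direct tool here and is worth remembering whenever all norms in a sum are taken with respect to the same final Gram matrix.
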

\begin{proof}
By Cauchy-Schwarz, Claim \ref{clmPsiBound}, and Lemma D.1 of \cite{jin2020provably},
\begin{align}
& \sum_{\tau \in \Psi_{h,l}^{k-1}} \| \phi(s_h^\tau,a_h^\tau) \|_{ (\Lambda_{ h,l }^k )^{-1} } \leq \sqrt{ | \Psi_{h,l}^{k-1} | \times \sum_{\tau \in \Psi_{h,l}^{k-1}} \| \phi(s_h^\tau,a_h^\tau) \|_{ (\Lambda_{ h,l }^k )^{-1} }^2 }  \leq \sqrt{ 2^{3l+5} d \times d } \leq 2^{ 4l } d \leq ( 2^l d )^4  . \qedhere
\end{align}
\end{proof}

\begin{clm} \label{clmNormEq}
For any $k \in [K]$, $h \in [H]$, $l \in [L]$, and $x \in \R^d$, we have $\| x \|_{ ( \Lambda_{h,l}^k )^{-1} } \leq \| x \|_2 / 4$.
\end{clm}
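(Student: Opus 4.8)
The plan is to reduce everything to a single eigenvalue (equivalently, positive-semidefinite ordering) argument. By definition in Line \ref{lnLambdaUR} of Algorithm \ref{algMain},
\begin{equation}
\Lambda_{h,l}^k = 16 I + \sum_{ \tau \in \Psi_{h,l}^{k-1} } \phi(s_h^\tau,a_h^\tau) \phi(s_h^\tau,a_h^\tau)^\trans ,
\end{equation}
and the summation is a sum of outer products $v v^\trans$, each of which is positive semidefinite. Hence the summation is positive semidefinite, which gives $\Lambda_{h,l}^k \succeq 16 I$. This is the one substantive observation, and there is no real obstacle beyond noticing it.

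From $\Lambda_{h,l}^k \succeq 16 I$, I would conclude that every eigenvalue of $\Lambda_{h,l}^k$ is at least $16$, so every eigenvalue of $(\Lambda_{h,l}^k)^{-1}$ is at most $1/16$, i.e.\ $(\Lambda_{h,l}^k)^{-1} \preceq \tfrac{1}{16} I$. (Equivalently, one can invoke operator monotonicity of the matrix inverse on positive definite matrices to pass from $\Lambda_{h,l}^k \succeq 16 I$ directly to $(\Lambda_{h,l}^k)^{-1} \preceq (16 I)^{-1} = \tfrac{1}{16} I$.) Either route is routine and requires no estimate specific to the algorithm.

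Finally, for any $x \in \R^d$ the definition of the induced norm and the ordering $(\Lambda_{h,l}^k)^{-1} \preceq \tfrac{1}{16} I$ yield
\begin{equation}
\| x \|_{ ( \Lambda_{h,l}^k )^{-1} }^2 = x^\trans ( \Lambda_{h,l}^k )^{-1} x \leq \tfrac{1}{16} x^\trans x = \tfrac{1}{16} \| x \|_2^2 ,
\end{equation}
and taking square roots gives $\| x \|_{ ( \Lambda_{h,l}^k )^{-1} } \leq \| x \|_2 / 4$, as claimed. The constant $4$ is exactly $\sqrt{16}$, so it is the regularization constant $16$ in Line \ref{lnLambdaUR} that drives the bound; no properties of the data-dependent term $\sum_\tau \phi \phi^\trans$ are needed other than positive semidefiniteness. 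Since each step is a one-line matrix inequality, I expect the entire argument to be elementary, with the only ``idea'' being to discard the positive-semidefinite data term to lower bound $\Lambda_{h,l}^k$ by $16 I$.
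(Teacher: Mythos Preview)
Your proof is correct and is essentially the same as the paper's: both observe that $\Lambda_{h,l}^k \succeq 16 I$ (the paper phrases this as ``eigenvalues $\lambda_i \in [16,\infty)$'') and deduce $x^\trans (\Lambda_{h,l}^k)^{-1} x \leq \tfrac{1}{16}\|x\|_2^2$. The only difference is cosmetic---you use the PSD ordering $(\Lambda_{h,l}^k)^{-1} \preceq \tfrac{1}{16} I$ directly, while the paper writes out the eigendecomposition explicitly---but the argument is identical.
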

\begin{proof}
Let $\lambda_i \in [16,\infty)$ and $q_i \in \R^d$ denote the eigenvalues and eigenvectors of $\Lambda_{h,l}^k$. Then
\begin{align}
\| x \|_{ ( \Lambda_{h,l}^k )^{-1} }^2 & = \sum_{i=1}^d ( q_i^\trans x )^2 / \lambda_i \leq \sum_{i=1}^d ( q_i^\trans x )^2 / 16 = \| [q_1 \cdots q_d]^\trans x \|_2^2 / 16 = \| x \|_2^2 / 16 . 
\qedhere
\end{align}
\end{proof}

\begin{cor} \label{corExploratory}
For any $k \in [K]$, $h \in [H]$, and $s \in \S$, we have $l_h^k(s) \in \{2,\ldots,L+1\}$.
\end{cor}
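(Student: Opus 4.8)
The plan is to read the possible range of $l_h^k(s)$ directly off Algorithm \ref{algSub} and then rule out the single boundary case $l_h^k(s)=1$. By inspection of the subroutine, $l_h^k(s)$ is assigned only in Line \ref{lnSubExploreOther}, where it equals the current phase $l \in [L]$, or in Line \ref{lnSubExploitOther}, where it equals $L+1$; no other assignment occurs. Hence $l_h^k(s) \in \{1,\ldots,L+1\}$ automatically, and the upper bound is immediate. It then remains to show that the exploratory branch (Line \ref{lnSubExplore}) is never entered at the first phase $l=1$, since that is the only route to $l_h^k(s)=1$.

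To this end, I would fix $s$ and any $a \in \A_{h,1}^k(s) = \A$ and bound the relevant induced norm below the phase-one threshold $2^{-1}$. First I would control the \emph{unrounded} norm: since $\Lambda_{h,1}^k \succeq 16 I$ by Line \ref{lnLambdaUR} of Algorithm \ref{algMain}, Claim \ref{clmNormEq} gives $\| \phi(s,a) \|_{ (\Lambda_{h,1}^k)^{-1} } \leq \| \phi(s,a) \|_2 / 4 \leq 1/4$, where the last step uses $\| \phi(s,a) \|_2 \leq 1$ from Assumption \ref{assLin}. Next I would pass to the \emph{rounded} matrix via Claim \ref{clmTildeEpsrnd}, which bounds the perturbation by $\sqrt{d \epsrnd}$. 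Plugging in $\epsrnd = 2^{-4L}/d$ gives $\sqrt{d \epsrnd} = 2^{-2L}$, and since $L = \lceil \log_2(\sqrt{d}/\epstol) \rceil \geq 1$ for $\epstol \in (0,1)$ and $d \geq 1$, this is at most $2^{-2} = 1/4$. Combining the two bounds yields $\| \phi(s,a) \|_{ (\tilde{\Lambda}_{h,1}^k)^{-1} } \leq 1/4 + 1/4 = 2^{-1}$ for every $a \in \A$, so the strict test $\max_{a} \| \phi(s,a) \|_{ (\tilde{\Lambda}_{h,1}^k)^{-1} } > 2^{-1}$ in Line \ref{lnSubExploreCond} fails at $l=1$. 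Therefore $l_h^k(s) \neq 1$, and together with the range above this gives $l_h^k(s) \in \{2,\ldots,L+1\}$.

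I do not expect a genuine obstacle here; the only point requiring care is the constant bookkeeping, namely verifying that the $16I$ regularization and the rounding error each contribute at most $1/4$, so that their sum meets but does not strictly exceed the phase-one threshold. I would also flag the degenerate case $L=1$: there the first phase is also the last, so failing the explore test at $l=1$ routes the subroutine into the \Else branch (Line \ref{lnSubExploit}), which sets $l_h^k(s) = L+1 = 2$, still consistent with the claimed range.
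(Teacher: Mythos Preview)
Your proposal is correct and follows essentially the same argument as the paper: both note $l_h^k(s)\in[L+1]$ by construction, then rule out $l_h^k(s)=1$ by combining Claim~\ref{clmNormEq} (giving $\|\phi(s,a)\|_{(\Lambda_{h,1}^k)^{-1}}\le 1/4$ from the $16I$ regularization and $\|\phi(s,a)\|_2\le 1$) with Claim~\ref{clmTildeEpsrnd} and $\epsrnd=2^{-4L}/d$ (giving the rounding perturbation $\sqrt{d\epsrnd}=2^{-2L}\le 1/4$), so the strict test at $l=1$ fails. The paper phrases this as a contradiction rather than a direct bound, but the ingredients and constants are identical.
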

\begin{proof}
Since $l_h^k(s) \in [L+1]$ by definition in Algorithm \ref{algSub}, it suffices to show $l_h^k(s) = 1$ cannot occur. If it does, then by Algorithm \ref{algSub}, Claim \ref{clmTildeEpsrnd}, Claim \ref{clmNormEq}, Assumption \ref{assLin}, and definition $\epsrnd = 2^{-4L} / d$, we can find $a \in \A$ to obtain the following contradiction:
\begin{align}
& 1/2 < \| \phi(s,a) \|_{ ( \tilde{\Lambda}_{h,l}^k )^{-1} } \leq \| \phi(s,a) \|_{ ( \Lambda_{h,l}^k )^{-1} } + \sqrt{d \epsrnd} \leq ( \| \phi(s,a) \|_2 / 4 ) + 2^{-2L} \leq 1/2 . \qedhere
\end{align}
\end{proof}

\begin{clm} \label{clmWl2Bound}
For any $k \in [K]$, $h \in [H]$, and $l \in [L]$, we have $\| {w}_{h,l}^k \|_2 \leq ( 2^l d H )^4$.
\end{clm}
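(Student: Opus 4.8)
The plan is to bound $\| w_{h,l}^k \|_2$ directly from its definition in Line \ref{lnWeightVectorUR} of Algorithm \ref{algMain}, by distributing the inverse matrix across the sum and controlling each summand separately. First I would apply the triangle inequality to get
\begin{equation}
\| w_{h,l}^k \|_2 \leq \sum_{ \tau \in \Psi_{h,l}^{k-1} } \left| r_h^\tau(s_h^\tau,a_h^\tau) + V_{h+1}^k(s_{h+1}^\tau) \right| \cdot \| ( \Lambda_{h,l}^k )^{-1} \phi(s_h^\tau,a_h^\tau) \|_2 .
\end{equation}
The scalar factor is easy to control: the noisy reward lies in $[0,1]$ by assumption and $V_{h+1}^k : \S \rightarrow [0,H]$ by Lines \ref{lnSubExploreOther} and \ref{lnSubExploitOther} of Algorithm \ref{algSub}, so $| r_h^\tau + V_{h+1}^k(s_{h+1}^\tau) | \leq H+1 \leq 2H$.

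The next step is to relate $\| ( \Lambda_{h,l}^k )^{-1} \phi \|_2$ to the induced norm $\| \phi \|_{ ( \Lambda_{h,l}^k )^{-1} }$ already controlled elsewhere. Since $\Lambda_{h,l}^k = 16 I + \sum_\tau \phi \phi^\trans \succeq 16 I$, every eigenvalue of $( \Lambda_{h,l}^k )^{-1}$ is at most $1/16 \leq 1$, hence $( \Lambda_{h,l}^k )^{-2} \preceq ( \Lambda_{h,l}^k )^{-1}$ (as $\mu^2 \leq \mu$ for $\mu \in [0,1]$). Therefore
\begin{equation}
\| ( \Lambda_{h,l}^k )^{-1} \phi \|_2 = \sqrt{ \phi^\trans ( \Lambda_{h,l}^k )^{-2} \phi } \leq \sqrt{ \phi^\trans ( \Lambda_{h,l}^k )^{-1} \phi } = \| \phi \|_{ ( \Lambda_{h,l}^k )^{-1} } .
\end{equation}
Combining the two displays and invoking Corollary \ref{corEllPot} (which gives $\sum_{ \tau \in \Psi_{h,l}^{k-1} } \| \phi(s_h^\tau,a_h^\tau) \|_{ ( \Lambda_{h,l}^k )^{-1} } \leq ( 2^l d )^4$) yields $\| w_{h,l}^k \|_2 \leq 2H ( 2^l d )^4$.

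Finally I would match this to the target $( 2^l d H )^4 = H^4 ( 2^l d )^4$. For $H \geq 2$ we have $2H \leq H+1 \leq H^4$, so the bound is immediate; for $H=1$ the value term vanishes ($V_2^k \equiv 0$ by convention), so the scalar factor is at most $1 = H$ rather than $2H$, giving $\| w_{h,l}^k \|_2 \leq ( 2^l d )^4 = ( 2^l d H )^4$. I do not expect any real obstacle here: the whole argument is routine, since the target power $H^4$ (and the generous exponent $4$ hidden inside Corollary \ref{corEllPot}, which is itself quite loose) leaves ample slack. The only point requiring a moment's care is matching the power of $H$ on the two sides, which is why the degenerate $H=1$ case must be noted separately; everything else is a direct chain of the triangle inequality, a positive-semidefinite ordering, and the already-established potential bound.
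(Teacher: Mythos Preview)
Your approach is correct and essentially the same as the paper's: both apply the triangle inequality, bound the scalar coefficient by $O(H)$, use the eigenvalue lower bound on $\Lambda_{h,l}^k$ to control $\|(\Lambda_{h,l}^k)^{-1}\phi\|_2$ by $\|\phi\|_{(\Lambda_{h,l}^k)^{-1}}$ (the paper invokes Claim~\ref{clmNormEq} to gain an extra factor $1/4$, which lets it avoid your $H=1$ case split), and finish with Corollary~\ref{corEllPot}. One slip to fix: the chain ``$2H \leq H+1 \leq H^4$'' is false for $H \geq 2$---you simply need $2H \leq H^4$, which holds since $H^3 \geq 8 > 2$.
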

\begin{proof}
For any $x \in \R^d$, Claim \ref{clmNormEq} implies
\begin{equation}
\| ( \Lambda_{h,l}^k )^{-1} x \|_2 = \| ( \Lambda_{h,l}^k )^{-1/2} x \|_{ ( \Lambda_{h,l}^k )^{-1} } \leq \| ( \Lambda_{h,l}^k )^{-1/2} x \|_2 / 4 = \| x \|_{ ( \Lambda_{h,l}^k )^{-1} } / 4 .
\end{equation}
Combined with the triangle inequality and Corollary \ref{corEllPot}, we obtain
\begin{align}
\| w_{h,l}^k \|_2 & \leq \frac{1}{4} \sum_{ \tau \in \Psi_{h,l}^{k-1} } \| \phi(s_h^\tau,a_h^\tau) \|_{ ( \Lambda_{h,l}^k )^{-1} } | r_h^\tau(s_h^\tau,a_h^\tau) + V_{h+1}^k ( s_{h+1}^\tau )  | \\
& < H \sum_{ \tau \in \Psi_{h,l}^{k-1} } \| \phi(s_h^\tau,a_h^\tau) \|_{ ( \Lambda_{h,l}^k )^{-1} }  \leq H ( 2^l d )^4 \leq ( 2^l d H )^4 ,
 \end{align}
where the second inequality holds because $r_h^\tau(s_h^\tau,a_h^\tau) \in [0,1]$ by assumption and $V_{h+1}^k ( s_{h+1}^\tau ) \in [0,H]$ by definition in Algorithm \ref{algSub} (so $| r_h^\tau(s_h^\tau,a_h^\tau) + V_{h+1}^k ( s_{h+1}^\tau )  | \leq 1+H\leq 2H < 4H$).
\end{proof}

\begin{clm} \label{clmBetaBound}
Define $\mathcal{V}$ as in the proof of Lemma \ref{lemConcentration}. Then for any $k \in [K]$, $h \in [H]$, and $l \in [L]$, we have
\begin{equation}
\sqrt{ 8 H^2 \log \left(  \frac{ \det ( \Lambda_{h,l}^k ) }{ \det ( 16 I )} \frac{2 H L |\mathcal{V}|}{\delta} \right) } \leq \beta .
\end{equation}
\end{clm}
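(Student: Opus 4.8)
The plan is to reduce the claim to an inequality between logarithms and then bound the two nontrivial factors, $\det(\Lambda_{h,l}^k)/\det(16I)$ and $|\mathcal{V}|$, using results and definitions already in hand. Squaring and dividing by $8H^2$, the claim is equivalent to
\begin{equation}
\log\left( \frac{\det(\Lambda_{h,l}^k)}{\det(16I)}\cdot\frac{2HL|\mathcal{V}|}{\delta}\right) \leq \frac{\beta^2}{8H^2} = \frac{169}{8}\, d^2 L^2 \log\left(\tfrac{3dHL}{\delta}\right),
\end{equation}
where I used $\beta = 13dHL\sqrt{\log(3dHL/\delta)}$. So it suffices to bound the logarithm of the argument by the right-hand side.

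First I would bound the determinant ratio by Corollary \ref{corDetRat}, giving $\det(\Lambda_{h,l}^k)/\det(16I)\leq 2^{5dl}\leq 2^{5dL}$. Next I would bound the size of the function class. Since every $V\in\mathcal{V}$ is indexed by a pair $(\mathscr{X},\mathscr{Y})\in\mathcal{X}^L\times\mathcal{Y}^L$, we have $|\mathcal{V}|\leq(|\mathcal{X}|\,|\mathcal{Y}|)^L$. The cardinalities of $\mathcal{X}$ and $\mathcal{Y}$ follow directly from their definitions by counting grid points: each of the $d$ coordinates of a point in $\mathcal{X}$ ranges over $2\lceil(2^LdH)^4/\epsrnd\rceil+1$ values, and each of the $d^2$ entries of a matrix in $\mathcal{Y}$ ranges over $2\lceil 1/(16\epsrnd)\rceil+1$ values. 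Substituting $\epsrnd = 2^{-4L}/d$ and simplifying the ceilings (using $d,H,L\geq 1$) yields explicit bounds of the form $|\mathcal{X}|\leq(2^{8L+3}d^5H^4)^d$ and $|\mathcal{Y}|\leq(2^{4L-1}d)^{d^2}$.

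Finally I would take logarithms of the product of these three bounds and compare term by term with the right-hand side. The dominant contribution comes from $\log|\mathcal{Y}|^L = d^2L\log(2^{4L-1}d)$, whose leading piece is $4d^2L^2\log 2\approx 2.77\,d^2L^2$; this sits comfortably below the leading $\tfrac{169}{8}d^2L^2$ term on the right. Every remaining summand is either of strictly lower polynomial degree in $(d,L)$ (e.g.\ the $\log|\mathcal{X}|^L$ and determinant-ratio contributions, which are $O(dL^2)$ and $O(dL)$ respectively) or carries a factor $\log d$, $\log H$, or $\log(2HL/\delta)$, each of which is at most $\log(3dHL/\delta)$. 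Using $d,H,L\geq 1$ and keeping $\log 2<1$ as a genuine constant, all of these are absorbed into the right-hand side.

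The routine parts (the determinant and cardinality bounds) are immediate, so the only real work is the constant bookkeeping in the last step, which I expect to be the main obstacle. The subtlety to watch is that a naive termwise argument that replaces every $\log 2$ by $\log(3dHL/\delta)$ is too lossy; one must retain $\log 2$ as a constant and observe that the cross terms ($dL^2\log 2$, $dL\log d$, etc.) and the $\log d$ terms never attain their worst cases simultaneously (for instance, $\log d$ vanishes precisely when $d=1$). Because the choice $\beta = 13dHL\sqrt{\cdot}$ leaves roughly a factor of two of slack---a direct check at $d=H=L=1$ gives a left-hand side of about $11.5$ against a bound of about $21\log(3/\delta)$---these crude bounds are more than sufficient once organized carefully.
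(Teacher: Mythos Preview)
Your approach is essentially the same as the paper's: both invoke Corollary \ref{corDetRat} for the determinant ratio, both bound $|\mathcal{V}|\leq(|\mathcal{X}|\,|\mathcal{Y}|)^L$ by counting grid points after substituting $\epsrnd=2^{-4L}/d$, and your cardinality bounds match the paper's almost exactly (the paper gets $|\mathcal{X}|\leq 2^{(8L+2)d}d^{5d}H^{4d}$ and $|\mathcal{Y}|\leq 2^{4Ld^2}d^{d^2}$).

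The only divergence is in the final bookkeeping, where the paper avoids the term-by-term analysis you anticipate as ``the main obstacle.'' Instead of keeping $\log 2$ separate and arguing about cross terms, the paper first coarsens every exponent up to $L^2d^2$ (using $d,L\geq 1$, so $dL,\,dL^2,\,d^2L\leq d^2L^2$) to obtain the single clean bound
\[
\frac{\det(\Lambda_{h,l}^k)}{\det(16I)}\,|\mathcal{V}|\leq (2^{19}d^6H^4)^{L^2d^2},
\]
after which $\log(2^{19}d^6H^4)\leq 19\log(2dHL/\delta)$ follows immediately (since $d,H,L\geq 1$, $\delta\leq 1$), giving the total bound $20L^2d^2\log(2dHL/\delta)$ and hence $\sqrt{8\cdot 20}\,dHL\sqrt{\log(2dHL/\delta)}<13\,dHL\sqrt{\log(3dHL/\delta)}=\beta$. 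So your worry that ``replacing every $\log 2$ by $\log(3dHL/\delta)$ is too lossy'' disappears once the exponents are unified first; no delicate case analysis is needed.
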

\begin{proof}
We first bound $|\mathcal{V}|$. Clearly, $|\mathcal{V}| \leq |\mathcal{X}|^L |\mathcal{Y}|^L$. Next, observe
\begin{equation}
|\mathcal{Y}| = ( 1 + 2 \ceil{ 1 / ( 16 \epsrnd ) } )^{d^2} \leq ( 3 + 1 / ( 8 \epsrnd ) )^{d^2} < 1/  \epsrnd^{d^2} = 2^{4 L d^2} d^{d^2} ,
\end{equation}
where the second inequality holds since $\epsrnd < 7/24$ in Algorithm \ref{algMain}. For $\mathcal{X}$, we have
\begin{align}
|\mathcal{X}| & \leq ( 3 + 2 ( 2^L d H )^4 / \epsrnd )^d \leq ( ( 3 \cdot 2^{-4} + 2 ) ( 2^L d H )^4 / \epsrnd )^d < ( 4 ( 2^L d H )^4 / \epsrnd )^d = 2^{( 8 L + 2) d } d^{5d} H^{4d} ,
\end{align}
where the second inequality uses $\epsrnd \leq 1$ in Algorithm \ref{algMain}. Hence, we have shown
\begin{equation}
| \mathcal{V} | \leq  ( 2^{4 L d^2} d^{d^2} \cdot 2^{ (8L+2) d } d^{5d} H^{4d} )^L \leq ( 2^{14} d^6 H^4 )^{L^2 d^2} .
\end{equation}
Furthermore, by Corollary \ref{corDetRat}, we have $\det ( \Lambda_{h,l}^k ) / \det ( 16 I ) \leq  2^{5dl} \leq 2^{ 5 L^2 d^2 }$. Combining,
\begin{equation}
\log \left( \frac{2 H L}{\delta} \cdot \frac{ \det ( \Lambda_{h,l}^k ) |\mathcal{V}| }{ \det ( 16 I )} \right) \leq \log \left( \frac{ 2 H L  }{\delta } \cdot ( 2^{19} d^6 H^4 )^{L^2 d^2}  \right) \leq 20 L^2 d^2 \log ( 2 d H L / \delta ) .
\end{equation}
Together with the fact that $8 \cdot 20 = 160 < 169 = 13^2$, we obtain
\begin{align}
& \sqrt{ 8 H^2 \log \left(  \frac{ \det ( \Lambda_{h,l}^k ) }{ \det ( 16 I )} \frac{2 H L |\mathcal{V}|}{\delta} \right) } < 13 d H L \sqrt{ \log(3dHL/\delta) } = \beta . \qedhere
\end{align}
\end{proof}

\begin{proof}[Proof of Lemma \ref{lemEstErrorLearner}]
We fix $k$ and use induction on $h$. When $h = H$, since $V_{H+1}^k(\cdot) = 0$, we have $\gamma_H^k = 0$ and $V_H^{\pi_k}(s) = r_H ( s , \pi_H^k(s) ) = \bar{Q}_H^k(s , \pi_H^k(s) )$ (for any $s$). Therefore, by Claim \ref{clmEstErrorLearner}, for any $s$,
\begin{align}
V_H^k (s) - V_H^{\pi_k}(s) = V_H^k(s) - \bar{Q}_H^k(s , \pi_H^k(s) ) \leq 8 \alpha \cdot 2^{-l_H^k(s)}  + \epseff . 
\end{align}
Hence, choosing $s = s_H^k$ yields the bound. Now assume \eqref{eqRecursion} holds for $h+1 \in \{2,\ldots,H\}$. Then for any $s \in \S$,
\begin{align}
V_h^k(s) - V_h^{ \pi_k }(s)&= V_h^k(s) - \bar{Q}_h^k(s, \pi_h^k(s) ) + \int_{ s' \in \S } ( V_{h+1}^k(s') - V_{h+1}^{\pi_k}(s')  ) P_h( s' | s , \pi_h^k(s) ) \\
& \leq 8 \alpha \cdot 2^{-l_h^k(s)} + \epseff + \int_{ s' \in \S } ( V_{h+1}^k(s') - V_{h+1}^{\pi_k}(s')  ) P_h( s' | s , \pi_h^k(s) ) ,
\end{align}
where the inequality again uses Claim \ref{clmEstErrorLearner}. Choosing $s = s_h^k$ and using the fact that $a_h^k = \pi_h^k(s_h^k)$ and the inductive hypothesis, we thus obtain
\begin{align}
V_h^k(s_h^k) - V_h^{ \pi_k }(s_h^k) & \leq 8 \alpha \cdot 2^{-l_h^k(s_h^k)}  + \epseff + \gamma_h^k+ V_{h+1}^k(s_{h+1}^k) - V_{h+1}^{ \pi_k }(s_{h+1}^k) \\
& \leq 8 \alpha \sum_{h'=h}^H 2^{ - l_{h'}^k(s_{h'}^k) }  + \sum_{h'=h}^H \gamma_{h'}^k +  ( H - h + 1 ) \epseff . \qedhere
\end{align}
\end{proof}

\begin{proof}[Proof of Lemma \ref{lemOptimism}]
We again use induction on $h$. For $h = H$, the bound follows immediately from Claim \ref{clmOptimismH} and $V_{H+1}^\star(\cdot) = V_{H+1}^k(\cdot) = 0$. Now assume that for some $h+1 \in \{2,\ldots,H\}$ and all $s' \in \S$,
\begin{equation}
V_{h+1}^\star(s') - V_{h+1}^k(s')  \leq ( 2 L \epseff + 2^{-L} \alpha )  ( H - h )  .
\end{equation}
Combining this bound with Claim \ref{clmOptimismH}, for any $s \in \S$, we obtain
\begin{align}
V_h^\star(s) - V_h^k(s) & \leq \max \left\{ \max_{ a \in \A } \int_{s' \in \S} ( V_{h+1}^\star(s') - V_{h+1}^k(s') ) P_h(s'|s,a)  + 2 L \epseff + 2^{-L} \alpha , 0 \right\} \\
& \leq \max \left\{ ( 2 L \epseff  + 2^{-L} \alpha )(H-h+1) , 0 \right\} = ( 2 L \epseff  + 2^{-L} \alpha )  ( H - h + 1 ) . \qedhere
\end{align}
\end{proof}

\subsection{Complexity analysis}

The time complexity of Algorithm \ref{algSub} is dominated by the computation of the set of induced norms $\{ \| \phi(s,a) \|_{ ( \tilde{\Lambda}_{h,l}^k )^{-1} } : a \in \A_{h,l}^k(s) , l \in [ l_h^k(s) \wedge L ] \}$, which requires at most $O(d^2 L |\A|)$ time. At episode $k$ of Algorithm \ref{algMain}, Algorithm \ref{algSub} is called to compute $\{ V_{h+1}^k(s_{h+1}^\tau) : \tau \in \Psi_{h,l}^{k-1} , l \in [L] , h \in \{2,\ldots,H\} \}$ in Line \ref{lnWeightVectorUR} and $\{ \pi_h^k(s_h^k) : h \in [H] \}$ in Line \ref{lnTakeAction}, for a total number of calls
\begin{equation}
\sum_{h=2}^H \sum_{l=1}^L | \Psi_{h,l}^{k-1} | + H \leq 40 d H L \sum_{l=1}^L 4^l + H = O ( 4^L d H L ) ,
\end{equation}
where the inequality uses Claim \ref{clmPsiBound}. Alternatively, since $\cup_{l=1}^L \Psi_{h,l}^{k-1} \subset [K]$ for each $h \in [H]$, we can simply bound the number of calls by $O( HK )$. Hence, during each episode, the time complexity of all Algorithm \ref{algSub} calls is bounded above by
\begin{equation} \label{eqTimeCompInit}
O( d^2 H L |\A| \min\{ 4^L d L , K \} )  .
\end{equation}
Besides these calls, by a similar argument, computing the summations in Line \ref{lnWeightVectorUR} has runtime at most $d \min \{ \sum_{h=1}^H \sum_{l=1}^L | \Psi_{h,l}^{k-1} | , HK \}  = O ( d H  \min \{ 4^L d L  , K \} )$, which is dominated by \eqref{eqTimeCompInit}. Iterative updates of $\Lambda_{h,l}^k$ and $(\Lambda_{h,l}^k)^{-1}$ and the rounding in Lines \ref{lnLambdaR}-\ref{lnWeightVectorR} both have complexity $O(d^2 HL)$ per episode, and maintaining $\Psi_{h,l}^k$ has complexity $O(H)$; both quantities are dominated by \eqref{eqTimeCompInit}. Hence, Algorithm \ref{algMain}'s per-episode runtime is
\begin{equation}
O ( d^2 H L |\A| \min\{ 4^L d L , K \} ) = O  \left( d^2 H |\A|  \min \left\{ d^2 \log( \tfrac{d}{\epstol} ) / \epstol^2 , K \right\} \log ( \tfrac{d}{\epstol} ) \right) .
\end{equation}
At episode $k$, Algorithm \ref{algMain} uses $\{ \Lambda_{h,l}^k \}_{ h \in [H] , l \in [L] }$ and $\{ w_{h,l}^k \}_{ h \in [H] , l \in [L] }$, and their rounded versions, which requires $O(d^2 H L)$ storage. Note these can be overwritten across episodes, so the total storage is $O(d^2 H L)$ as well. Additionally, at episode $k$, it needs to store $r_h^\tau(s_h^\tau,a_h^\tau)$ and $\{ \phi(s_h^\tau,a) \}_{a \in \A}$, for each $\tau \in \Psi_{h,l}^{k-1}$, $h \in [H]$, and $l \in [L]$. Similar to above, this storage can be bounded by either $d|\A| HK$ or $d|\A| \sum_{h=1}^H \sum_{l=1}^L | \Psi_{h,l}^{k-1} | = O ( 4^L d^2 H L |\A|  )$. Hence, the total space complexity is
 \begin{equation}
O ( d^2 H L + d H |\A| \min \{ 4^L d L  , K \} ) = O \left( d^2 H \log(\tfrac{d}{\epstol}) + d H |\A| \min \left\{ d^2 \log ( \tfrac{d}{\epstol} ) / \epstol^2 , K \right\} \right) .
\end{equation}

\section{Proofs of propositions}

\subsection{Proposition \ref{propEnsemble} proof} \label{appEnsembleProof}

We use induction on $k$. For $k=1$, since $\Psi^0 = \Psi_l^0 = \emptyset$, we have $(w^k,\Lambda^k) = ( w_l^k , \Lambda_l^k ) = (0,0)$. For the inductive hypothesis, suppose $(w^k,\Lambda^k) = ( w_l^k , \Lambda_l^k )$ for some $k \in [K]$. By assumption, we can find $\tilde{s}^k \in \S$ such that
\begin{equation} \label{eqTildeSk}
\phi(\tilde{s}^k,a) = \begin{cases} \phi(s^k,a) , & k \in \Psi_l^K , a \in \A_l^k \\ 0 , & \text{otherwise} \end{cases} , \quad r(\tilde{s}^k,a) = r(s^k,a)\ \forall\ a \in \A . 
\end{equation}
Now consider two cases. First, if $k \notin \Psi_l^K$, then $( w_l^{k+1} , \Lambda_l^{k+1} ) = ( w_l^k , \Lambda_l^k )$. On the other hand, \eqref{eqTildeSk} implies $\max_{a \in \A} \| \phi(\tilde{s}^k,a)  \|_{ (\Lambda^k)^{-1} } = 0$, so $\Psi^k = \Psi^{k-1}$ and $( w^{k+1} , \Lambda^{k+1} ) = ( w^k , \Lambda^k )$. Hence, $( w^{k+1} , \Lambda^{k+1} ) = ( w_l^{k+1} , \Lambda_l^{k+1} )$ follows from the inductive hypothesis. Next, assume $k \in \Psi_l^K$. Then $\max_{a \in \A_l^k} \| \phi(s^k,a) \|_{(\Lambda_l^k)^{-1}} > 2^{-l}$ in \texttt{Sup-Lin-UCB-Var}, and it plays $a^k = \argmax_{a \in \A_l^k} \| \phi(s^k,a) \|_{(\Lambda_l^k)^{-1}}$, observes $r^k(s^k,a^k) = r(s^k,a^k) + \eta^k$, and updates
\begin{gather}
\Lambda_l^{k+1} = I + \sum_{ \tau \in \Psi_l^k } \phi(s^\tau,a^\tau) \phi(s^\tau,a^\tau)^\trans = \Lambda_l^k + \phi(s^k,a^k) \phi(s^k,a^k)^\trans , \label{eqSLUlam} \\
 w_l^{k+1} = ( \Lambda_l^{k+1} )^{-1} \sum_{ \tau \in \Psi_l^k } \phi(s^\tau,a^\tau) r^\tau(s^\tau,a^\tau) = ( \Lambda_l^{k+1} )^{-1} ( \Lambda_l^k w_l^k + \phi(s^k,a^k) r^k(s^k,a^k) ) . \label{eqSLUw} 
\end{gather}
On the other hand, $\max_{a \in \A} \| \phi(\tilde{s}^k,a) \|_{(\Lambda^k)^{-1}} = \max_{a \in \A_l^k} \| \phi(s^k,a) \|_{(\Lambda_l^k)^{-1}} > 2^{-l}$ by \eqref{eqTildeSk} and the inductive hypothesis. Hence, if we write $\tilde{a}^k$ (instead of $a^k$) for the action chosen by $\texttt{EXPL}(2^{-l})$, we have
\begin{equation}
\tilde{a}^k = \argmax_{a \in \A} \| \phi(\tilde{s}^k,a) \|_{(\Lambda^k)^{-1}} = \argmax_{a \in \A_l^k} \| \phi(s^k,a) \|_{(\Lambda_l^k)^{-1}} = a^k .
\end{equation}
By \eqref{eqTildeSk}, this implies $\phi(\tilde{s}^k,\tilde{a}^k) = \phi(s^k,a^k)$, and by \eqref{eqTildeSk} and the noise assumption, we also have $r^k(\tilde{s}^k,\tilde{a}^k) = r^k(s^k,a^k)$. Hence, similar to \eqref{eqSLUlam} and \eqref{eqSLUw}, we see that $\texttt{EXPL}(2^{-l})$ updates
\begin{equation} \label{eqSLUtilde}
\Lambda^{k+1} = \Lambda^k + \phi(s^k,a^k) \phi(s^k,a^k)^\trans , \quad w^{k+1} = ( \Lambda^{k+1} )^{-1} ( \Lambda^k w^k + \phi(s^k,a^k) r^k(s^k,a^k) . 
\end{equation}
Combining \eqref{eqSLUlam}, \eqref{eqSLUw}, \eqref{eqSLUtilde}, and the inductive hypothesis completes the proof.

\subsection{Proposition \ref{propLinUcb} proof} \label{appLinUcbProof}

We essentially follow the existing proof . First, define the good event
\begin{equation}
\mathcal{G}' = \cap_{k=1}^K \left\{ \left\| \sum_{\tau=1}^{k-1} \phi(s^\tau,a^\tau) \eta^\tau \right\|_{ ( \Lambda^k )^{-1} } \leq \sqrt{ 2 d \log ( (\lambda+K)/(\lambda \delta ) ) } \right\} .
\end{equation}
By Theorem 1 of \cite{abbasi2011improved}, the assumption on $\eta^\tau$, Lemma 10 of \cite{abbasi2011improved}, and the $\ell_2$ norm assumption on $\phi(s^\tau,a^\tau)$, $\P(\mathcal{G}') \geq 1-\delta$. We bound regret on $\mathcal{G}'$. Let $\Delta^\tau = r(s^\tau,a^\tau) - \phi(s^\tau,a^\tau)^\trans \theta$. Then if we define
\begin{align}
w_{k,1} = ( \Lambda^k )^{-1} \sum_{\tau=1}^{k-1} \phi(s^\tau,a^\tau)  \phi(s^\tau,a^\tau)^\trans \theta  , \quad w_{k,2} = ( \Lambda^k )^{-1} \sum_{\tau=1}^{k-1} \phi(s^\tau,a^\tau) \eta^\tau , \quad w_{k,3} = ( \Lambda^k )^{-1} \sum_{\tau=1}^{k-1} \phi(s^\tau,a^\tau)  \Delta^\tau ,
\end{align}
we have $w_k = \sum_{i=1}^3 w_{k,i}$, which implies
\begin{equation}
| \phi(s,a)^\trans ( w_k - \theta ) | \leq | \phi(s,a)^\trans ( w_{k,1} - \theta ) | + | \phi(s,a)^\trans w_{k,2} | + | \phi(s,a)^\trans w_{k,3} | .
\end{equation}
Similar to the analysis of $z_2$ in the proof of Claim \ref{clmEstError}, we have
\begin{equation} \label{eqW1final}
| \phi(s,a)^\trans ( w_{k,1} - \theta ) | = \lambda | \phi(s,a)^\trans ( \Lambda^k )^{-1} \theta | \leq \sqrt{\lambda} \| \phi(s,a) \|_{( \Lambda^k )^{-1}} .
\end{equation}
On the event $\mathcal{G}'$, we obtain
\begin{equation} \label{eqW2final}
| \phi(s,a)^\trans w_{k,2} | \leq \sqrt{ 2 d \log ( ( \lambda + K ) / ( \lambda \delta ) ) } \| \phi(s,a) \|_{( \Lambda^k )^{-1}} .
\end{equation}
Similar to analysis of $z_3$ in the proof of Claim \ref{clmEstError}, we know
\begin{align}
| \phi(s,a)^\trans w_{k,3} | \leq \epsmis \sqrt{K} \| \phi(s,a) \|_{( \Lambda^k )^{-1}}  .
\end{align}
Hence, combining the previous four inequalities, we have shown that on $\mathcal{G}'$,
\begin{equation}
| \phi(s,a)^\trans ( w_k - \theta ) | \leq \left( \sqrt{\lambda} + \sqrt{ 2 d \log ( ( \lambda + K ) / ( \lambda \delta ) ) }  + \epsmis \sqrt{K} \right) \| \phi(s,a) \|_{( \Lambda^k )^{-1}} \leq \alpha \| \phi(s,a) \|_{( \Lambda^k )^{-1}} ,
\end{equation}
where the second inequality is by choice of $\lambda$ and $\alpha$. Thus, by the misspecification assumption,
\begin{equation}
| r(s,a) - \phi(s,a)^\trans w_k | \leq | r(s,a) - \phi(s,a)^\trans \theta | + | \phi(s,a)^\trans ( \theta - w_k ) | \leq \epsmis + \alpha \| \phi(s,a) \|_{( \Lambda^k )^{-1}}  .
\end{equation}
Hence, by definition of the optimal policy and \texttt{Lin-UCB},
\begin{equation}
r(s^k,a_\star^k) \leq \max_{a \in \A} \left( \phi(s^k,a)^\trans w_k + \alpha \| \phi(s^k,a) \|_{( \Lambda^k )^{-1}} + \epsmis  \right)   \leq r(s^k,a^k) +  2 \alpha \| \phi(s^k,a^k) \|_{( \Lambda^k )^{-1}} + 2 \epsmis . 
\end{equation}
Substituting into the regret definition, then using Cauchy-Schwarz, Lemma 11 from \cite{abbasi2011improved}, and some simple eigenvalue bounds, we obtain
\begin{equation}
R(K) \leq 2 \alpha \sum_{k=1}^K \| \phi(s^k,a^k) \|_{( \Lambda^k )^{-1}} + 2 \epsmis K  \leq 2 \alpha \sqrt{ K d \log ( (\lambda+K)/K) } + 2 \epsmis K .
\end{equation}
Since $( \lambda + K ) / \lambda = 1 + K / ( 1 + \epsmis^2 K ) \leq 1 + \min \{ K , \epsmis^{-2} \}$, by definition of $\alpha$, we have
\begin{equation}
\alpha = O \left( \sqrt{ 2 d \log ( \min \{ K , \epsmis^{-2} \} / \delta ) } + \epsmis \sqrt{K} \right) .
\end{equation}
Combining the previous two bounds and again using $( \lambda + K ) / \lambda \leq 1 + \min \{ K , \epsmis^{-2} \}$ yields the desired bound.

\end{document}